\newtheorem{example}{Example}
\newtheorem{theorem}{Theorem}
\newtheorem{proposition}{Proposition}
\newtheorem{remark}{Remark}
\newtheorem{definition}{Definition}
\newtheorem{task}{Task}
\newtheorem{assumption}{Assumption}
\newcommand{\rctl}{{\sc \texttt{SCTL}}}
\newcommand{\ess}{{\sc \texttt{ESS}}}
\newcommand{\mb}{\textrm{\textbf{Mb}}}
\newcommand{\nbr}{\textrm{\textbf{ne}}}
\newcommand{\pa}{\textrm{\textbf{pa}}}
\newcommand{\ch}{\textrm{\textbf{ch}}}
\newcommand{\an}{\textrm{\textbf{an}}}
\newcommand{\adj}{\textrm{\textbf{adj}}}
\newcommand{\spouse}{\textrm{\textbf{sp}}}
\title{Scalable Causal Domain Adaptation}
\author{%
  Mohammad Ali Javidian \\
  School of Electrical and Computer Engineering \\
  Purdue University\\
  \texttt{mjavidia@purdue.edu}
   \And
   Om Pandey \\
   School of Computer Engineering \\
   Kalinga Institute of Industrial Technology \\
   \texttt{opandey108@gmail.com} \\
   \AND
   Pooyan Jamshidi \\
   Computer Science \& Engineering Department\\
   University of South Carolina \\
   \texttt{pjamshid@cse.sc.edu} \\
}
\begin{document}

\maketitle

\begin{abstract}
  One of the most critical  problems in transfer learning is the task of domain adaptation, where the goal is to apply an algorithm trained in one or more source domains to a different (but related) target domain. This paper deals with domain adaptation in the presence of \textit{covariate shift} while invariance exist across domains. One of the main limitations of existing causal inference methods for solving this problem is \textit{scalability}.
To overcome this difficulty, we propose \rctl, an algorithm that avoids an exhaustive search and identifies invariant causal features across source and target domains based on Markov blanket discovery. \rctl~ does not require having  prior knowledge of the causal structure,
the type of interventions, or the intervention targets. There is an intrinsic locality associated with \rctl~that makes it practically scalable and robust because local causal discovery increases the power of computational independence tests and makes the task of domain adaptation computationally tractable. We show the scalability and robustness of \rctl~for domain adaptation using synthetic and real data sets in low-dimensional  and high-dimensional settings.
\end{abstract}

\section{Introduction}\label{sec:intro}
Standard supervised learning usually assumes that both training and test data are drawn from the same distribution. However, this is a strong assumption and often violated in practice if (1) training
examples have been obtained through a biased method (i.e., \textit{sample selection bias}), or (2) there exist a significant physical or temporal difference between training and test data sources (i.e., \textit{non-stationary environments}) \cite{li19b}. Domain adaptation approaches aim to learn domain invariant features to mitigate the problem of data shift and enhance the quality of predictions \cite{redko19a,stojanov19b,park20b}.

Many domain adaptation
techniques in the literature consider the \textit{covariate shift}  \cite{chen16d,kugelgen19a,stojanov19a,li20b,kisamori20a}, where the marginal
distribution of the features differs across the source and target domains, while the conditional
distribution of the target given the features does not change. In causal inference, it has
been noted that covariate shift corresponds to causal learning, i.e., predicting effects from
causes \cite{ICML2012Schoelkopf_625}. Therefore, taking into account the causal structure of a system of interest and finding causally invariant features in both source and target domains enable us to safely transfer the predictions of the target variable based only on causally invariant features to the target domain \cite{MagliacaneNIPS18,rojas2018invariant,subbaswamy2018counterfactual,subbaswamy2019preventing}. Following the same line of work, in this paper, we formalize and study the problem of domain adaptation as a feature selection problem where we aim to find an \textit{optimal subset} that the conditional distribution of the target variable given this subset of predictors is invariant across domains under certain assumptions, as formally discussed in section \ref{sec:method}. To illustrate the importance and effectiveness of this approach, consider the following example.
\begin{example}[Domain Adaptation: Prediction of Diabetes	at Early Stages]
	According to Diabetes Australia, early diagnosis and initiation of appropriate treatment plays a pivotal role in: (1) helping patients to manage the disease early 
	and (2) reducing the substantial economic impact of diabetes on the healthcare systems and national economies (700 million dollars each year \cite{Australia}). To predict diabetes using machine learning techniques, we need its symptoms and clinical data.
	The common symptoms and possible causes 
	of diabetes Type II are weakness, obesity, delayed healing, visual blurring, partial paresis, muscle stiffness, alopecia, among others \cite{ADA}. 
	Although we do not know what causes diabetes Type II \cite{Australia}, one may argue that the occurrence, rate, or frequency of \textit{Delayed Healing}, \textit{Blurred Vision}, \textit{Partial Paresis}, and \textit{Weakness} increases with age, and \textit{Delayed Healing} and \textit{Partial Paresis} may cause some complications that result in pancreas malfunction.
	
\begin{figure}[!ht]
\centering
\captionsetup[subfigure]{justification=centering}
\centering
    \subcaptionbox{{Causal Graph for Diabetes}}[.33\textwidth]{%
    \begin{tikzpicture}[transform shape,scale=.55]
	\tikzset{vertex/.style = {shape=ellipse,inner sep=2pt,align=center,draw=black, fill=white}}
\tikzset{edge/.style = {->,> = latex',thick}}
	\node[vertex,thick,fill=Cyan] (f) at  (2.5,5.5) {\large Delayed Healing};
	\node[vertex,thick,fill=green] (g) at  (3.35,1) {\large Diabetes};
	\node[vertex,thick,fill=Dandelion] (b) at  (-.35,2.25) {\large Weakness};
	\node[vertex,thick] (a) at  (-2.5,5.5) {\large Age};
	\node[vertex,thick,fill=Dandelion] (c) at  (-1.5,1) {\large Blurred Vision};
	\node[vertex,thick,fill=Cyan] (e) at  (.75,3.75) {\large Partial Paresis};
	\draw[edge] (a) to (b);
	\draw[edge] (a) to (e);
	\draw[edge] (a) to (f);
	\draw[edge] (a) to (c);
	\draw[edge] (e) to (g);
	\draw[edge] (g) to (b);
	\draw[edge] (g) to (c);
	\draw[edge] (f) to (g);
\end{tikzpicture}
}%
\subcaptionbox{Diabetes (Young Patients)}[.33\textwidth]{%
\includegraphics[width=.9\linewidth]{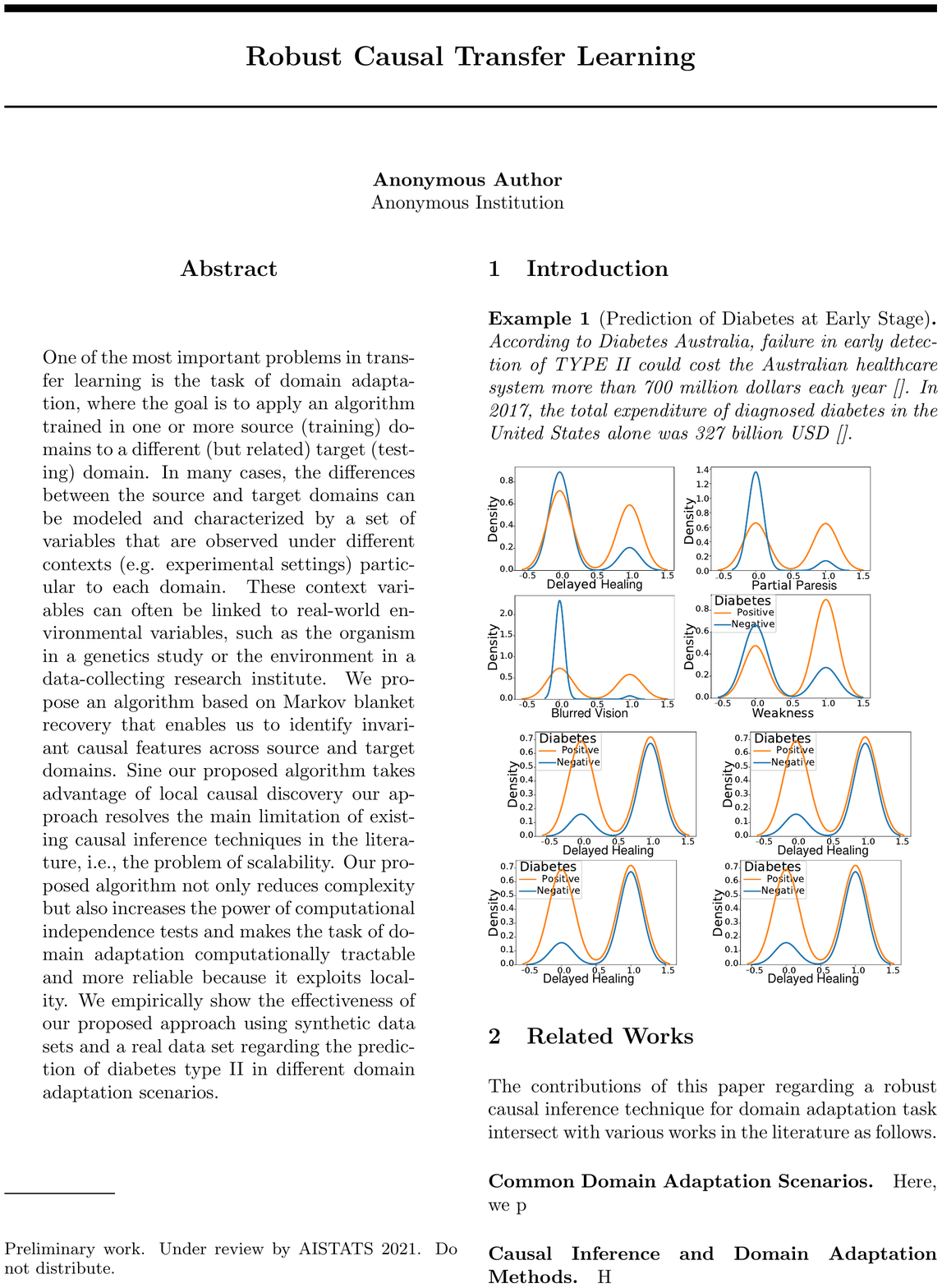}
}
\subcaptionbox{Diabetes (Old Patients)}[.33\textwidth]{%
\includegraphics[width=.9\linewidth]{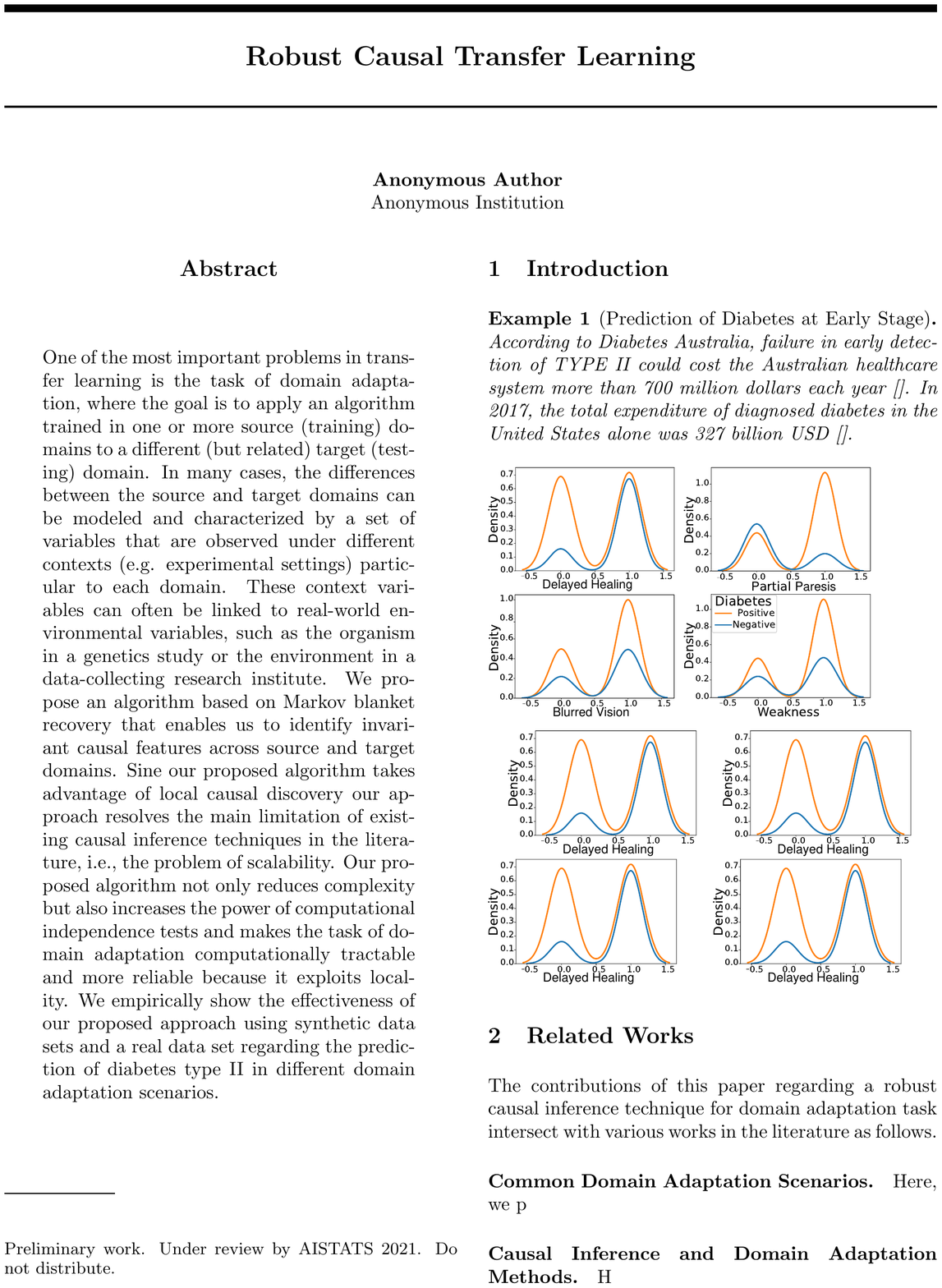}
}
    \caption{\footnotesize{Prediction of Diabetes
at an early stage: Here, age intervention leads to a shift of distributions between the source (Young Patients $<50$) and target (Old Patients $\ge 50$) domains (see also Example 1). In all cases, the \textcolor{orange}{orange curves} indicate the distribution of \textcolor{orange}{tested positive} patients, and the \textcolor{blue}{blue curves} indicate  the distribution of \textcolor{blue}{tested negative} patients. A standard feature selection method that does snot consider the causal structure and would use all features to predict \textit{Diabetes} would obtain biased predictions in the target domain (MSE = 0.29  and SSE = 65.322). Using only \textit{Delayed Healing} and \textit{Partial Paresis} instead yields less accurate predictions in the source domain, but much more accurate ones in the target domain (MSE = 0.221 and SSE = 49.8927)}.}
    \label{fig:diabetes}
\end{figure}
The causal graph of this scenario can be represented as the directed acyclic graph (DAG)  in Figure \ref{fig:diabetes} (a). To provide an instance of a domain adaptation problem, we divided a diabetes dataset \cite{islam2020likelihood} into two subpopulations: (1) source domain with patients of the age less than 50 (dubbed young) and (2) target domain with patients of the age greater than or equal to 50 (old patients). As shown in Figure \ref{fig:diabetes} (b) and (c), intervention in age leads to shifting  distributions across domains. In our experiments, feature selection methods that do not consider the causal structure select \textit{highly relevant features} to diabetes, i.e., all four variables \textit{Delayed Healing}, \textit{Blurred Vision}, \textit{Partial Paresis}, and \textit{Weakness} to achieve high prediction accuracy in the source domain.  This per se causes worse predictions in the target domain (MSE = 0.29 and SSE = 65.322) than the case that we only consider causally invariant features, i.e., \textit{Delayed Healing} and \textit{Partial Paresis} for predictions in the target domain (MSE = 0.221 and SSE = 49.8927). The reason is that conditioning on the variables \textit{Blurred Vision} and \textit{Weakness} makes the paths between age and diabetes open, and hence  the set of all features does not generalize	to the target domain. However, conditioning only on \textit{Delayed Healing} and \textit{Partial Paresis} blocks the paths between age and diabetes. Hence,  these causally invariant features enable us to predict diabetes with higher accuracy in the target domain, even in distribution shifts due to the age intervention.
\end{example}

An important limitation of existing causal inference methods \cite{MagliacaneNIPS18,rojas2018invariant,subbaswamy2018counterfactual,subbaswamy2019preventing} is that they, currently, \textit{do not scale beyond dozens of variables} due to either exponentially large number of conditional independence (CI) tests or difficulties in causal structure recovery \cite{kouw2019review}.
To overcome the problem of \textit{scalability}, we propose an algorithm based on Markov blanket discovery (see Appendix \ref{sec:defs} for the definition) that in contrast to existing methods: (1) it takes advantage of local computation by finding only the Markov blanket of the target variable(s), and hence (2) reduces the search space for finding causally invariant features drastically because of the small size of Markov blankets  ($\le 10$) in (many) causal models in practice, e.g., see \cite{bnlearn}. As a result, the CI tests becomes more reliable and robust when the number of variables increases, an essential characteristic in high dimensional and low sample size scenarios.  Our main contributions are as follows:\\
$\bullet$ We propose a new algorithm, called \textbf{S}calable \textbf{C}ausal \textbf{T}ransfer \textbf{L}earning (\rctl), to solve the problem of domain adaptation in the presence of covariate shift and scales to high-dimensional problems (section \ref{sec:method}).\\
$\bullet$ For the first time we characterize Markov blankets in \textbf{A}cyclic \textbf{D}irected \textbf{M}ixed \textbf{G}raphs (ADMGs) i.e., causal graphs in the presence of unmeasured confounders\footnote{In this paper we will assume that there is
no selection bias.}, and we prove that the standard Markov blanket discovery algorithms such as Grow-Shrink (GSMB), IAMB and its variants are still correct under the faithfulness assumption where causal sufficiency is not assumed. We prove the correctness of \rctl~based on these new theoretical results (section \ref{sec:method}).\\
$\bullet$ We demonstrate on synthesized and real-world data that our proposed algorithm improves performance over several state-of-the-art algorithms in the presence of covariate shift (section \ref{sec:results}). 
Code and data for reproducing our results is available at supplementary materials.
\url{https://github.com/softsys4ai/SCTL}.

\section{Related Work}\label{sec:relatedwork}
\textbf{Domain Adaptation.} Here, we provide a brief overview of the main domain adaptation scenarios that can be found in the literature. There are three main domain adaptation problems: (1) \textbf{Covariate shift}, which is one of the most studied forms of data shift, occurs if the marginal distributions of \textit{context variables} change across the source and target domains while the posterior (conditional) distributions are the same between source and target domains \cite{SHIMODAIRA2000,sugiyama2008direct,johansson19a}. (2) \textbf{Target shift} occurs if the marginal distributions of the \textit{target variable} change across the  source and target domains while the posterior distributions remain the same \cite{Storkey09,zhang2013domain,Lipton18}.
 (3) \textbf{Concept shift} occurs if marginal distributions between source and target domains remain unchanged while the posteriors change across the domains \cite{moreno2012unifying,Zhang15,gong2016domain}. 
We only focus on the \textit{covariate shift} in this paper. Since assuming invariance of conditionals makes sense if the conditionals represent causal mechanisms \cite{rojas2018invariant}, we use the relaxed version of the usual covariate shift assumption and assume that it holds for a subset of predictor variables, as suggested and used in \cite{rojas2018invariant,MagliacaneNIPS18}. 

\noindent\textbf{Causal Inference in Domain Adaptation.} Here, we briefly provide an overview of causal inference methods that address the problem of covariate shift: (1) \textbf{Transportability} \cite{BareinboimPearl11,BareinboimPearl12,BareinboimPearl14,Correa-ijcai2019} expresses knowledge about differences and commonalities between the source and target domains in a formalism called \textit{selection diagram}. Using this representation and the \textit{do-calculus} \cite{Pearl09}, enable us to derive a procedure for deciding whether effects in the target domain can be inferred from experiments conducted in the source domain(s). 
(2) \textbf{Invariant Causal Prediction} (ICP) techniques \cite{peters2016causal,pfister2019invariant,pfister2019stabilizing} search
for finding a subset of variables to estimate individual regressions for
each domain in a way that produces regression residuals with equal distribution across all
domains. Theoretical identifiability guarantees for the set of direct causal predictors for ICP techniques are limited to the linear Gaussian models. In other words, when the model is non-linear, non-Gaussian, or there are latent confounders, there is no guarantee that the obtained set is the set of direct causal predictors \cite{peters2016causal,pfister2019stabilizing}. 
(3) \textbf{Graph surgery} \cite{subbaswamy2018counterfactual,subbaswamy2019preventing}
removes variables generated by unstable mechanisms from the joint factorization to yield a distribution \textit{invariant} to the differences across domains. (4) \textbf{Graph pruning} methods \cite{MagliacaneNIPS18,rojas2018invariant} formalized as a feature selection problem in which the goal is to find
the \textit{optimal subset} that the conditional distribution of the target variable given this subset of predictors is invariant across domains. Both transportability and graph surgery methods need to know the causal model of interest in advance.
However, learning causal models from data is a challenging task, especially in the presence of \textit{unmeasured confounders} \cite{Glymour19}. On the other hand, graph pruning methods do not rely on prior knowledge of the causal graph, but they currently \textit{do not scale beyond dozens of variables} \cite{kouw2019review}.

\section{Theory}\label{sec:method}
In this section, first, we prove that the domain adaptation task under the  \textbf{C}ausal \textbf{D}omain \textbf{A}daptation (CDA) assumptions can be done effectively via searching inside the set of \textit{Markov blanket} of the target variable $T$ rather than a brute-force search over all variables as it has been done in \textbf{E}xhaustive \textbf{S}ubset \textbf{S}earch (\ess) in \cite{MagliacaneNIPS18}, the closest work  to~\rctl. In the next section, we present an \textit{efficient} and \textit{scalable} algorithm, called the \rctl, that searches over the \textit{Markov blanket} of the target variable $T$ to find causally invariant features that provide an accurate prediction for the target variable $T$ given the distribution shift in terms of context variables. To apply \rctl~in practice, we need an algorithm for Markov blanket discovery in the presence of unmeasured confounders. For this purpose, we prove that GSMB, IAMB, and its variants are still sound under the faithfulness assumption, even when the causal sufficiency assumption does not hold. The proof of theorems can be found in Appendix \ref{app:A}.

\subsection{Problem: Causal Domain Adaptation}\label{sec:defCDA}
Here we formally state the causal domain adaptation task that we address in this work:
\begin{task}[Domain Adaptation Task]
We are given data for a source and a target domain such that the marginal distribution of the context variable $C$ changes across domains. Assume the source and target domains data are complete (i.e., no missing values), except for all values of a specific target variable $T$. The task is to predict these missing values of the target variable $T$ given the available source and target domains data.  
\end{task}

For predicting $T$ from a subset of features $S \subseteq V \setminus \{C,T\}$, where $T$ is the target variable, $C=\{C_{s\in D_s},C_{t\in D_t}\}$ is the context variable with the values $C_{s\in D_s}$ in the source domain(s) and the values $C_{t\in D_t}$ in the target domain (note that we can extend our formalism to multiple context variables), and $S$ is a set that separates $T$ from $C$ in the causal model, we define the \textit{transfer bias} as $\hat{T}^t_S-\hat{T}^s_S$, where $\hat{T}^t_S:=\mathbb{E}(T|S, C=C_{t\in D_t})$ and $\hat{T}^s_S:=\mathbb{E}(T|S, C=C_{s\in D_s})$. We define the \textit{incomplete information bias} as $\hat{T}^t_{V \setminus \{C,T\}}-\hat{T}^t_S$. The \textit{total bias} when using $\hat{T}^s_S$ to predict $T$ is the sum of the transfer bias and the incomplete
information bias:
$$\underbrace{\hat{T}^t_{V \setminus \{C,T\}}-\hat{T}^s_S}
_\text{total bias} = \underbrace{\hat{T}^t_S-\hat{T}^s_S}
_\text{transfer bias}+\underbrace{\hat{T}^t_{V \setminus \{C,T\}}-\hat{T}^t_S.}
_\text{incomplete information bias}$$
For more details, see \cite{MagliacaneNIPS18}.
Note that using invariant features only guarantees transfer bias to be zero, but the incomplete information bias can be quite large for the invariant feature set. 
There is a trade-off
between the two terms in the total bias expression, which is hard to determine because the two terms are not identifiable using the source data alone.

\subsection{Assumptions}\label{sec:mainass}
We consider the same assumptions as discussed in \cite{MagliacaneNIPS18}, the closest work to ours. 
\begin{assumption}[\textbf{J}oint \textbf{C}ausal \textbf{I}nference (JCI) assumptions]\label{assum:jci}
We consider that causal graph $G=(V,E)$ is an ADMG with  the variable set $V$. From now on, we will distinguish \textit{system variables} $ X_{j \in J}$ describing the system of interest,
and \textit{context variables} $C_{i \in I}$ describing the context in which the system has been observed:\\
(a) Context variables are never caused by system variables: $(\forall j\in J,i\in I: X_j \rightarrow C_i \notin E )$,\\
(b) System variables are not confounded by context variables: $(\forall j\in J,i\in I: X_j \leftrightarrow C_i \notin E )$, 
(c) All pairs of context variables are confounded: $(\forall i,k \in I: C_i \leftrightarrow C_k \in E \textrm{, and }\forall i,k \in I: C_i \rightarrow C_k \notin E)$.
\end{assumption}
The case (a) in the assumption is called \textit{exogeneity} and captures what we mean by “context”. Cases (b), (c) are not as important as the exogeneity and can be relaxed, depending on the application \cite{MagliacaneNIPS18}. 

\begin{assumption}[\textbf{C}ausal \textbf{D}omain \textbf{A}daptation (CDA) assumptions]\label{assum:cda}
We consider a causal graph $G$ that satisfies Assumption \ref{assum:jci}. We say $G$ satisfies CDA assumptions if

\noindent(a) The probability distribution of $V$ and the causal graph $G=(V,E)$ satisfy Markov condition and faithfulness assumption,\\
(b) For the target variable $T$ and a set $A$ that $T \perp\!\!\!\perp  A |S$ (i.e., $T$ is independent of $A$ given $S$) in the source domain we have the same CI  in the target domain, where $\{T\}\not\subseteq A\cup S$,\\
(c) No context variable $C$ is the parent of the target variable $T$, i.e., $C \rightarrow T \notin E_G$.
\end{assumption}

Assumption \ref{assum:cda}(b) states that the pooled source and target domains distributions are Markov and faithful to the local causal structure around the target variable of interest $T$. This assumption implies that the causal structure of the target variable $T$ is invariant when going from the source to the target domain. Note that Assumption \ref{assum:cda}(b) of the CDA assumption in \cite{MagliacaneNIPS18} is different from ours: both the pooled source and pooled target domains distribution are \textit{Markov} and \textit{faithful} to $G$'s subgraph, which excludes the context variable $C$. This strong assumption forces them to do an \textbf{exhaustive search}. Assumption \ref{assum:cda}(c) is strong and restrictive, which might not hold in real data sets. However, since interventions are local in their nature\footnote{This is called \textit{modularity assumption} in the literature \cite{neal2020introduction}.  It means that intervening on a variable $X_i$ only changes the causal mechanism for $X_i$, i.e., $p(X_i|\pa({X_i}))$ and it does not change the causal mechanisms that generate
any other variables.}, if the
interventions are targeted precisely, then Assumption \ref{assum:cda}(c) is more likely to be satisfied.

\subsection{Theoretical Results}\label{sec:mainthm}
The following theorem enables us to localize the task of finding invariant features under the CDA assumptions. 
\begin{theorem}\label{thm:rctl}
Assume that CDA assumptions hold for the context $C_{i \in I}$ and system variables $X_{j \in J}$ given data for  single or multiple source domains. To find the best separating set(s) of features that $m$-separate(s) $C_{i \in I}$ from the target variable $T$ in the causal graph $G$, it is enough to restrict our search to the set of Markov blanket, $\mb(T)$, of the target variable $T$. 
\end{theorem}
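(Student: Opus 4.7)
The plan is to exploit the defining property of the Markov blanket, namely $T \perp\!\!\!\perp V \setminus (\mb(T) \cup \{T\}) \mid \mb(T)$, combined with the JCI and CDA assumptions that eliminate most direct interactions between context variables and $T$.

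First I would establish that under the stated assumptions the context variables $C_{i \in I}$ are not adjacent to $T$ in $G$: JCI (a) rules out $T \rightarrow C_i$, JCI (b) rules out $T \leftrightarrow C_i$, and CDA (c) rules out $C_i \rightarrow T$. Using the ADMG characterization of $\mb(T)$ that the paper develops (parents, children, and appropriate spouse/district structure), I would then argue that no context variable can sit in $\mb(T)$: the only remaining way for a $C_i$ to enter $\mb(T)$ would be as a spouse via a common child, but any such conditioning would only open a collider path $C_i \to X_k \leftarrow T$, so such $C_i$ need not belong to any minimal Markov blanket used for separation. With $C_{i \in I} \subseteq V \setminus (\mb(T) \cup \{T\})$, the defining Markov blanket identity immediately gives $T \perp\!\!\!\perp C \mid \mb(T)$, so $\mb(T)$ is itself a separating set, which establishes feasibility of the restricted search.

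Second, I would prove that any ``best'' separating set can be taken inside $\mb(T)$, in the sense of preserving predictive accuracy for $T$. Let $S$ be any set with $T \perp\!\!\!\perp C \mid S$ under the source distribution, and set $S^{*} := S \cap \mb(T)$. Because $\mb(T)$ screens $T$ off from every other variable, under faithfulness (CDA (a)) we have $P(T \mid \mb(T), U) = P(T \mid \mb(T))$ for any $U \subseteq V \setminus (\mb(T) \cup \{T\})$, so variables in $S \setminus \mb(T)$ contribute no additional predictive information about $T$ once the blanket is accounted for. Combining this with the graphoid axioms (decomposition and weak union applied to $T \perp\!\!\!\perp (C, S \setminus \mb(T)) \mid \mb(T) \cup S^{*}$) yields a subset $S^{*} \subseteq \mb(T)$ that still m-separates $C$ from $T$ and matches $S$ in predictive power for $T$. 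Assumption CDA (b) then lifts the separating property from the source to the target domain.

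The main obstacle I anticipate is the careful treatment of the spouse case in Step 1: rigorously ruling out $C_i$ from $\mb(T)$ in the ADMG setting requires invoking exactly the ADMG Markov blanket characterization the paper proves separately, and checking that the modularity/exogeneity built into JCI (a) prevents any context-through-common-child route from being an \emph{essential} member of the blanket used for separation. Once that step is nailed down, the rest reduces to a clean application of the graphoid axioms and faithfulness.
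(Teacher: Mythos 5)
Your proposal rests on a claim that is false and that short-circuits the substantive content of the theorem: you assert that no context variable can lie in $\mb(T)$, so that $\mb(T)$ itself is automatically a separating set. The assumptions only exclude $C_i$ from being \emph{adjacent} to $T$ (JCI (a), (b) and CDA (c) rule out $T\rightarrow C_i$, $T\leftrightarrow C_i$, $C_i\rightarrow T$); they do not exclude $C_i$ from being a spouse of $T$ or from lying on a collider path to $T$, e.g.\ $C_i \rightarrow X \leftarrow T$ with $X$ a system variable. In that configuration $C_i\in\mb(T)$ by the ADMG definition of the Markov blanket, and indeed the paper's own running example (Figure 2 and Remark 1) has $\mb(T)=\{X,Y,C_1\}$. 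Your attempt to dismiss this with ``such $C_i$ need not belong to any minimal Markov blanket used for separation'' conflates the Markov blanket, which is a fixed graph-theoretic object, with the separating set one is searching for; whether $C_i\in\mb(T)$ is not a matter of choice. Worse, in exactly this spouse configuration, conditioning on all of $\mb(T)$ \emph{opens} the collider path $C_i\rightarrow X\leftarrow T$, so your conclusion that ``the defining Markov blanket identity immediately gives $T\perp\!\!\!\perp C\mid \mb(T)$'' fails. This is precisely why Algorithm 1 must search over \emph{subsets} of $\mb(T)$ rather than condition on the whole blanket.

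The paper's proof spends essentially all of its effort on the case $C\in\mb(T)$ that you skip. It first reduces the search to ancestors via Proposition 1 ($S$ contains a separating subset iff $S\cap\an(X\cup Y)$ separates), then splits on whether $C$ belongs to the induced Markov blanket $imb_A(T)$ with $A=\an_G(T)$: when $C\notin imb_A(T)$ it invokes the ordered local Markov property and Richardson's results to show $imb_A(T)\subseteq\mb(T)$ separates; when $C\in imb_A(T)$ it shows \emph{no} separating set exists at all. Your proposal has no counterpart to this last possibility, and your Step 2 claim that $S^{*}:=S\cap\mb(T)$ inherits the separating property from an arbitrary separating set $S$ is asserted rather than proved --- the semi-graphoid manipulation you sketch starts from $T\perp\!\!\!\perp (C, S\setminus\mb(T))\mid \mb(T)\cup S^{*}$, which is itself the statement in need of justification once a context variable sits inside the blanket. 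To repair the argument you would need the induced-Markov-blanket machinery (or an equivalent), not just the screening property $T\perp\!\!\!\perp V\setminus(\mb(T)\cup\{T\})\mid\mb(T)$.
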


Theorem \ref{thm:rctl} enables us to develop a new \textit{efficient} and \textit{scalable} algorithm, called \textbf{S}calable \textbf{C}ausal \textbf{T}ransfer \textbf{L}earning (\rctl), that exploits locality for learning \textit{invariant causal features} to be used for domain adaptation. 
Since CDA assumptions do not require \textit{causal sufficiency assumption}, we need sound and scalable algorithms for Markov blanket discovery in the presence of unmeasured confounders. For this purpose, we first need to provide a graphical characterization of Markov blankets in ADMGs. 

Let $G = (V,E,P)$ be an ADMG model. Then, $V$ is a set of random variables, $(V,E)$ is an ADMG, and $P$ is a joint probability distribution over $V$.  Let $T\in V$, then the \textit{Markov blanket} $\textbf{Mb}(T)$ is the set of all variables that there is a \textit{collider path} between them and $T$.
We now show that the Markov blanket of the target variable $T$ in an ADMG probabilistically shields $T$ from the rest of the variables. Under the faithfulness assumption, the Markov blanket is the \textit{smallest set} with this property. Formally we have:

\begin{theorem}\label{thm:admg} 
	Let $G=(V,E,P)$ be an ADMG model.
	Then, $T{\!\perp\!\!\!\perp}_p V\setminus\{T,\textbf{Mb}(T)\}|\textbf{Mb}(T)$.
\end{theorem}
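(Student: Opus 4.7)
The plan is to reduce the claim to a graphical statement via the global Markov property for ADMGs and then give a short path-analysis argument. Because an ADMG model comes equipped with a distribution $P$ that is Markov with respect to $G$, it suffices to show that $\mb(T)$ $m$-separates $T$ from $V \setminus (\{T\} \cup \mb(T))$ in $G$; the probabilistic statement $T \perp\!\!\!\perp_p V \setminus \{T, \mb(T)\} \mid \mb(T)$ then follows automatically.

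Fix any $W \in V \setminus (\{T\} \cup \mb(T))$ and any path $\pi$ between $T$ and $W$ in $G$. The first observation is that $\pi$ cannot be a collider path in its entirety: if it were, the very definition of $\mb(T)$ would force $W \in \mb(T)$, contradicting the choice of $W$ (this handles the degenerate case of a direct edge $T$--$W$, for which the collider-path condition on internal vertices is vacuous). Hence $\pi$ must contain at least one non-collider internal vertex; let $V_i$ denote the one closest to $T$ along $\pi$.

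By the minimality of $V_i$, every internal vertex strictly between $T$ and $V_i$ on $\pi$ is a collider, so the prefix sub-path of $\pi$ from $T$ to $V_i$ is itself a collider path. (The status of a vertex as a collider depends only on the two path-edges incident to it, which are inherited by any sub-path containing that vertex in its interior.) By the definition of the Markov blanket, this forces $V_i \in \mb(T)$. Since $V_i$ is a non-collider on $\pi$ lying in the conditioning set $\mb(T)$, the path $\pi$ is blocked. As $W$ and $\pi$ were arbitrary, $\mb(T)$ $m$-separates $T$ from $V \setminus (\{T\} \cup \mb(T))$, and the global Markov property yields the desired conditional independence.

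The one delicate step is the path-inheritance observation, namely that ``being a collider'' is a local property and therefore transfers from $\pi$ to the prefix ending at $V_i$. This is immediate from the definition (two arrowheads meeting at a vertex), but it is the linchpin that rules out a subtler scenario in which the collider structure near $T$ could change on a shorter sub-path. I do not expect any other serious obstacle; the remainder of the proof is just a standard appeal to $m$-separation and the global Markov property for ADMG models.
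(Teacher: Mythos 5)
Your proof is correct and follows essentially the same route as the paper's: reduce to an $m$-separation statement via the global Markov property, and block every path from $T$ to a non-blanket vertex by exhibiting a non-collider on it that lies in $\mb(T)$. Your identification of the first non-collider after the maximal collider-path prefix from $T$ is exactly the mechanism underlying the paper's five-case enumeration (which spells out the possible shapes of that prefix), just stated once and uniformly.
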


Our following  theorem safely enables us to use standard Markov blanket recovery algorithms for domain adaptation task without causal sufficiency assumption:
\begin{theorem}\label{thm:Mbalgs}
Given the Markov assumption and the faithfulness assumption, a causal system represented by an ADMG, and i.i.d. sampling, in the large sample limit, the Markov blanket recovery algorithms GSMB \cite{Margaritis2003}, IAMB \cite{Tsamardinos0Mb}, Fast-IAMB \cite{Yaramakala05}, Interleaved Incremental Association (IIAMB) \cite{Tsamardinos0Mb}, and IAMB-FDR \cite{Pena08Mb} correctly identify all Markov blankets for each variable.  (Note that Causal Sufficiency is not assumed.)
\end{theorem}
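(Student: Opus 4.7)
The plan is to observe that the correctness arguments for GSMB, IAMB, Fast-IAMB, IIAMB, and IAMB-FDR in the DAG setting do not actually invoke any DAG-specific combinatorics; they rely only on three abstract ingredients: (i) the existence of a Markov blanket that probabilistically shields $T$ from the remaining variables, (ii) uniqueness and minimality of that blanket under faithfulness, and (iii) the (semi)graphoid axioms for the underlying CI relation. Ingredient (i) for ADMGs is precisely the content of Theorem \ref{thm:admg}, so the work is to supply (ii) and (iii) and then rerun the original algorithmic proofs essentially verbatim.

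For (iii), the semigraphoid axioms hold for any probability distribution, and the intersection axiom (needed to promote semigraphoid to graphoid reasoning) holds whenever the joint density is strictly positive, which is the standard regime for Markov blanket recovery. For (ii), under faithfulness one needs the characterization that $\mb(T)$ is the unique minimal $M \subseteq V \setminus \{T\}$ with $T \perp\!\!\!\perp V \setminus (\{T\} \cup M) \mid M$. Existence of such an $M$ is Theorem \ref{thm:admg}; minimality follows by showing that every $X \in \mb(T)$ is $m$-connected to $T$ given $\mb(T) \setminus \{X\}$. This in turn uses the fact that the collider path witnessing $X \in \mb(T)$ has all of its internal (collider) nodes lying in $\mb(T)$ itself, so conditioning on $\mb(T) \setminus \{X\}$ activates precisely the colliders needed to render the path $m$-connecting, and faithfulness turns this graphical connection into a genuine dependence.

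With (i)--(iii) in hand, I would walk through the loop invariants of each algorithm. For GSMB and IAMB, the grow phase maintains the invariant that whenever the current set $S$ is a strict subset of $\mb(T)$, there exists some $X \in V \setminus (S \cup \{T\})$ with $X \not\perp\!\!\!\perp T \mid S$ (otherwise $S$ would itself shield $T$, contradicting the minimality part of (ii)); hence growing terminates only when $S \supseteq \mb(T)$. The shrink phase then uses the shielding property together with weak union to conclude that any $X \in S \setminus \mb(T)$ satisfies $X \perp\!\!\!\perp T \mid S \setminus \{X\}$ and will be removed, while faithfulness prevents any true blanket member from being erroneously removed. Fast-IAMB, IIAMB, and IAMB-FDR differ from IAMB only in how CI tests are batched, ordered, or significance-corrected; in the large-sample independence-oracle limit each variant accepts exactly the same CI queries as IAMB, so their termination sets coincide with $\mb(T)$.

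The main obstacle I expect is the minimality argument in the second paragraph: certifying that a collider path between $X$ and $T$ is truly $m$-connecting given $\mb(T) \setminus \{X\}$ requires a careful check that every internal node on such a path is itself joined to $T$ by a collider path and is therefore in $\mb(T)$, together with a check that no non-collider on the path gets inadvertently placed in the conditioning set. Once this closure-under-subpaths property of collider paths in ADMGs is nailed down, the rest of the theorem reduces to transcribing the published correctness proofs of Margaritis, Tsamardinos et al., and Peña, with every appeal to "$d$-separation in the underlying DAG" replaced by the corresponding appeal to Theorem \ref{thm:admg} plus the CI-based faithfulness characterization.
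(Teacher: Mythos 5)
Your overall architecture matches the paper's: both proofs analyze the grow and shrink phases separately, both rest on Theorem \ref{thm:admg} for the shielding property, both use weak union to remove non-members in the shrink phase, and both use the $m$-connectedness of the witnessing collider path (whose internal nodes all lie in $\mb(T)$) plus faithfulness to show that true members are never shrunk away; the paper likewise dispatches Fast-IAMB, IIAMB, and IAMB-FDR by remarking that the same argument applies in the CI-oracle limit. Where you genuinely diverge is the grow phase. The paper proves by induction on the length of the collider path joining $X$ to $T$ that every member of $\mb(T)$ is eventually admitted: adjacent vertices are never $m$-separated from $T$, and once the interior of a collider path is in the candidate set the path becomes $m$-connecting, forcing the endpoint in. You instead argue by termination: if the grow loop halts with $\mb(T)\not\subseteq S$, then $S$ would shield $T$, which is impossible. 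Your route is shorter but quietly uses two facts you attribute to the wrong sources. First, passing from ``each remaining $X$ satisfies $X\perp\!\!\!\perp T\mid S$'' to ``$V\setminus(S\cup\{T\})\perp\!\!\!\perp T\mid S$'' requires the composition property, which is \emph{not} a (semi)graphoid axiom and fails for general positive distributions; it must be obtained from faithfulness by translating to $m$-separation (which is compositional by definition) and back via the Markov condition. Second, ``$S$ shields $T$ contradicts minimality'' needs the stronger statement that \emph{every} shielding set contains $\mb(T)$, which follows from intersection together with minimality (or directly from the $m$-connection argument), not from minimality alone. Both repairs are available with the hypotheses you already have, so the proof goes through, but the paper's induction avoids invoking composition and intersection entirely, whereas your version is closer in spirit to the classical Markov-boundary uniqueness argument and generalizes more transparently to the IAMB variants.
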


\section{{\sc \texttt{\textbf{SCTL}}}: \underline{S}calable \underline{C}ausal \underline{T}ransfer \underline{L}earning}\label{sec:rctlalg}
\rctl~addresses the task of domain adaptation by finding a separating set $S \subset V$, where $V$ is the set of context variables $C_{i \in I}$ and system variables $X_{j \in J}$ such that for the target variable $T$ we have $C_i \perp\!\!\!\perp  T| S$, for every $i\in I$ in the source domain. Since Assumption  \ref{assum:cda}(b) implies that this conditional independence holds across domains, if such a separating set $S$ can be found, $S$ is considered as a set of causally \textit{invariant} features for $T$ across environments. 
Our \rctl~algorithm, described in Algorithm \ref{alg:rctl}, consists of two main steps: \\
\textbf{Step 1.} We find the Markov blanket of the target variable $T$, i.e., $\mb(T)$ (line 3 in Algorithm \ref{alg:rctl}). Using the property of Markov blankets, i.e., $T \perp\!\!\!\perp  V\setminus \{T,\mb(T)\}| \mb(T)$, if there is no context variable in the Markov blanket of the target variable $T$, then $T{\perp\!\!\!\perp}  C_{i \in I}|\mb(T)$. This means that the Markov blanket of $T$ provides a minimal feature set required
for predicting the target variable $T$ with maximum predictivity \cite{JMLRaliferis10a}.

\noindent\textbf{Step 2.} If there exists a context variable $C$ in $\mb(T)$, we consider all possible subsets of the Markov blanket of the target variable $T$, $\mb(T)$, to find those subsets $S_i$s that satisfy the separating condition $T\perp\!\!\!\perp  C_{i \in I}|S_i$. For this purpose, the algorithm filters out those subsets for which the $p_{value}$ is below the significance level $\alpha$, i.e., $T$ is \textit{not} conditionally independent of the context variables given those sets. At the end of line 14, we have a list $S$ that contains all possible separating sets. If $S$ is empty, we abstain from making predictions as the domain adaptation task is unsuccessful. This may happen if there exists a context variable $C$ in the adjacency of the target variable $T$, which means the data set does not satisfy the CDA assumptions because it violates the Assumption \ref{assum:jci}(a), (b), or Assumption \ref{assum:cda}(c). In this case, the algorithm throws a failure because 
there is no subset of features that is causally invariant across domains.  Otherwise, we sort the list $S$ based on the obtained prediction errors in the source domain (line 18). Then, the algorithm returns those subsets $sub_i$ in $S$ with the lowest prediction error in the source domain as the best possible separating sets, because according to Theorem \ref{thm:rctl}, these sets $m$-separates $T$ from the context variable(s) and minimize(s) the transfer bias. The computational complexity of \rctl~is discussed in the supplementary material.

\begin{algorithm}[!ht]
 \caption{\rctl: A Scalable Transfer Learning Algorithm for Causal Domain Adaptation}\label{alg:rctl}
\SetAlgoLined
\scriptsize\KwIn{A Dataset with variable set $V$ is the set of context variables $C_{i \in I}$ and system variables $X_{j \in J}$, target variable \textit{T}, and significance level $ \alpha $.} 
 
\KwOut{A list of separating sets or null.}

\textcolor{ForestGreen}{
\tcp{Step 1: Find $\mb(T)$, Markov blanket of $T$.}}
Set $S$ as an empty list\;

Find $\mb(T)$\;

\uIf{$\mb(T)\cap C_i,_{i \in I} = \emptyset $ }{
\textbf{return} $S=\mb(T)$\;
}
\Else{
\textcolor{Red}{
\tcp{Step 2: Find a subset of $\mb(T)$ that makes $T$ $m$-separated from the context variable(s).}}

Set Subs = \textbf{Subset}($\mb(T)$) \tcc*[H]{All possible combinations of subsets for the obtained set $\mb(T)$} 
\For{\textit{each} $sub_i \in $  Subs }{
$p_{val}$ = $p_{value}(C_{i \in I} \perp\!\!\!\perp T | sub_i )$ \;

\If{$p_{val} > \alpha $}{
\tcp{This means $T$ is conditionally independent of the context variables given the set $sub_i$.}
Add $sub_i$ to \textit{S}\;
}
}
\If{S = $\emptyset$}{
\textbf{return null}\;
}
Sort $S$ \tcc*[H]{Sort the list $S$ in descending order based on prediction errors in the source domain(s)}
\textbf{return} \textit{a list of elements of $S$} with the lowest prediction error in the source domain(s);
}
\end{algorithm}

\begin{remark}\label{rem:binarycontextvars}
Assume the causal graph in Figure \ref{fig:rem} with the binary context variables $C_1$, $C_2$, and the target variable $T$.
Also, assume that the domain adaptation task is to predict the missing values of $T$ in the target
domains ($C_1 = 1$), based on the observed data from the source domain ($C_1 = 0$)  and the target domain
without knowledge of the causal graph. In this case, the true Markov blanket of the target variable $T$ is the set of $\mb(T)=\{X,Y,C_1\}$.
However, since the value of the context variable $C_1$ is fixed in the source domain, Markov blanket discovery algorithms cannot discover the true Markov blanket in such cases. If we have access to the part of the data with no missing values for $T$ in the target domain, which is a realistic scenario in some situations, e.g., clinical data from a target hospital, the data from the source and the target domains can be used for learning the true $\mb(T)$. Otherwise, the problem is reduced to Markov blanket discovery in the presence of \textit{missing data}. For this purpose, we can use the proposed method in \cite{Friedman97}.     
\end{remark}

\begin{remark}\label{rem:SR}
It has already been suggested in \cite[Theorem 3.5]{pfister2019invariant} that the optimal set of causally 
invariant features is a subset of the Markov blanket of the target variable, called \emph{stable blanket}. Here, we highlight the differences between \cite{pfister2019invariant} and \rctl:\\
\textbf{(1)} In \cite{pfister2019invariant}, the underlying causal structure for the system variables is considered as a DAG, which means there is no latent confounder among system variables (causal sufficiency assumption). In contrast, in our work, this restrictive assumption is relaxed. Note that causal sufficiency assumption is a strong assumption and often violated in practice. Moreover, the paper provides no theoretical guarantee that the stable blanket is a subset of
the Markov blanket of the target variable $T$, $\mb(T)$, in the presence of latent confounders. In fact, the invariance principle used in \cite{peters2016causal} shows that the stable set returned by the ICP techniques is a subset of ancestors of $T$. However, in our work, Theorem \ref{thm:rctl} relaxes the causal sufficiency assumption and states that searching in $\mb(T)$ is enough to find causally invariant features even in the presence of latent confounders.\\
\textbf{(2)} In \cite{pfister2019invariant}, the joint probability distribution of variables is assumed to be continuous and experimental evaluations are restricted to the linear Gaussian models, while in our work, there is no restriction on the type of data and the probability distribution of variables. In fact, in section \ref{sec:eval}, we do a comprehensive evaluation that is not restricted to the linear Gaussian models.
\end{remark}
\begin{figure}[!ht]
    \centering
    \begin{minipage}{.45\textwidth}
        \centering
        \begin{tikzpicture}[transform shape,scale=.75]
	\tikzset{vertex/.style = {shape=circle,align=center,draw=black, fill=white}}
\tikzset{edge/.style = {->,> = latex',thick}}
    \node[vertex,thick,fill=green] (c1) at  (-2.5,5) {$C_1$};
    \node[vertex,thick,fill= gray!40,dashed] (u) at  (-.5,6) {U};
    \node[vertex,thick,fill=green] (c2) at  (1.5,5) {$C_2$};
    \node[vertex,thick](x) at  (1.5,3) {X};
    \node[vertex,thick,fill=Dandelion ] (t) at  (-0.5,2) {T};
    \node[vertex,thick ] (y) at  (-2.5,3) {Y};
	\draw[edge,fill = Cyan] (u) to (c1);
	\draw[edge,fill = Cyan] (u) to (c2);
	\draw[edge] (c1) to (y);
	\draw[edge] (c2) to (x);
	\draw[edge] (x) to (t);
	\draw[edge] (t) to (y);
\end{tikzpicture}
 \caption{\footnotesize{Ground truth synthetic graph: The grey node \textcolor{gray!40}{U} represents the unknown confounder connecting the context variables filled in green \textcolor{green}{$C_1$} \& \textcolor{green}{$C_2$}. The node in orange, \textcolor{Dandelion}{T}, represent the target variable.}}\label{fig:rem}
    \end{minipage}\quad
    \begin{minipage}{0.5\textwidth}
        \centering
        \begin{tikzpicture}[transform shape,scale=.7]
	\tikzset{vertex/.style = {shape=circle,align=center,draw=black, fill=white}}
\tikzset{edge/.style = {->,> = latex',thick}}
	\node[vertex,thick,dotted] (l) at  (-5.3,5) {L};
	\node[vertex,thick,dotted] (k) at  (-3.7,5) {K};
	\node[vertex,thick,dotted] (j) at  (-3.7,3.8) {J};
	\node[vertex,thick,fill=Dandelion,dotted,dotted] (m) at  (-5.3,3.8) {M};
	\node[vertex,thick,fill=Dandelion,dotted] (n) at  (-4.5,2.6) {N};
    \node[vertex,thick,fill=green] (c1) at  (-2.5,5) {$C_1$};
    \node[vertex,thick,fill= gray!40,dashed] (u) at  (-1.5,6.2) {U};
    \node[vertex,thick,dotted] (z) at  (-1.4,2.75) {Z};
    \node[vertex,thick,fill=green] (c2) at  (-0.5,5) {$C_2$};
    \node[vertex,thick](x) at  (-0.5,3.8) {X};
    \node[vertex,thick,fill=Dandelion ] (t) at  (-0.5,1.4) {T};
    \node[vertex,thick ] (y) at  (-1.5,.2) {Y};
	\node[vertex,thick] (p) at  (0.8,1.4) {P};
	\node[vertex,thick] (q) at  (0.8,0.2) {Q};
	\node[vertex,thick] (b) at  (0.8,3.8) {B};
    \node[vertex,thick,dotted] (d) at  (0.8,5) {D};
    \node[vertex,thick,fill=Dandelion,dotted] (e) at  (2,5) {E};
	\node[vertex,thick,dotted] (i) at  (2,3.8) {I};
    \node[vertex,thick,dotted] (f) at  (0.8,6.2) {F};
    \node[vertex,thick,dotted] (g) at  (2,6.2) {G};
    \node[vertex,thick,dotted] (h) at  (3.2,6.2) {H};
	\draw[edge,dashed] (k) to (l);
	\draw[edge,dashed] (k) to (m);
	\draw[edge,dashed] (k) to (j);
	\draw[edge,dashed] (m) to (n);
	\draw[edge,dashed] (j) to (n);
	\draw[edge,dashed] (l) to (m);
	\draw[edge,fill = Cyan] (u) to (c1);
	\draw[edge,fill = Cyan] (u) to (c2);
	\draw[edge] (c1) to (y);
	\draw[edge] (c2) to (x);
	\draw[edge] (x) to (t);
	\draw[edge] (t) to (y);
	\draw[edge] (t) to (p);
	\draw[edge,] (p) to (q);
	\draw[edge,dashed] (d) to (b);
	\draw[edge] (c2) to (b);    
	\draw[edge,dashed] (e) to (b);
	\draw[edge,dashed] (e) to (i);
	\draw[edge,dashed] (f) to (d);
	\draw[edge,dashed] (g) to (e);
	\draw[edge,dashed] (e) to (i);
	\draw[edge,dashed] (h) to (e);
	\draw[edge,dashed] (j) to (x);
	\draw[edge,dashed] (c1) to (z);
	\draw[edge,dashed] (z) to (t);
    \draw[edge,dashed] (b) to (p);
\end{tikzpicture}
 \caption{\footnotesize{Ground truth synthetic graph: The grey node \textcolor{gray!40}{U} represents the unknown confounder connecting the context variables filled in green \textcolor{green}{$C_1$} \& \textcolor{green}{$C_2$}. The nodes in orange, like \textcolor{Dandelion}{E}, \textcolor{Dandelion}{T}, represent possible target variables. The central structure represented by solid lines is common to all graphs. The dashed lines and nodes are used to introduce variability in the structure across domains}.}\label{fig:Graph}
    \end{minipage}
\end{figure}

\section{Experimental Results}\label{sec:results}
Experimental results, over various settings as discussed in Section~\ref{sec:eval}, have been shown in Figure~\ref{fig:3},\ref{fig:4},\ref{fig:5},\ref{fig:6},\ref{fig:7},\ref{fig:8}, and \ref{fig:9}, the remaining results are in Appendix \ref{sec:moreresults}.
\subsection{Synthetic Dataset}
\vspace{-.2cm}
\rctl~outperforms (in some cases a comparable performance) other approaches in all environment settings, considered on the synthetic dataset. In Figure~\ref{fig:3}-\ref{fig:9}, we report the most complicated scenarios involving severe domain shifts, extreme sample sizes, and multiple ground truth models; results for other scenarios can be found in the supplementary material. \rctl~is, overall, as good as or even better than the state-of-the-art feature selection and domain adaptation algorithms listed in section \ref{sec:eval}. 

\textbf{Key highlights.} (1) \textit{Scalability}:~\ess, the closest work  to~\rctl, employs CI test based feature selection by conducting an  exhaustive search over the entire feature set, which increases the time required for subset generation exponentially, so cannot scale on datasets beyond ten variables. We used~\ess~for computation on a 12 variable synthetic dataset for 72 hours, and it crashed without generating any results.~\rctl~drastically reduces the time required for subset search as it searches only those variables in the locality of the target variable. Note that the target could be a subset of variables
rather than a single variable; parallel computation can speed up Markov blanket
recovery because Markov blankets of different nodes can be
learned independently \cite{Scutari17}. 
This makes ~\rctl~scalable to high-dimensional data and has potential applications in big data (as we will show in a real dataset with 400k variables in Section \ref{sec:cancer}).\\
(2) \textit{Robustness to Conditional Independence Tests}: In cases where CI tests have to be estimated from data, mistakes occur in keeping or removing members from the estimated separating sets. Erroneous CI
tests' primary  source are large condition sets in high-dimensional low sample size scenarios ~\cite{cheng1997learning}. In such cases, the resulting changes in the CI tests can lead to different separating sets. However, $\rctl$~based on Markov blanket discovery only uses a small fraction of variables in the vicinity of the target, increasing the reliance on CI tests, making the algorithm robust in practice. Our experimentation confirms that for most environment settings, where we know the ground truth causal graph, \rctl~can find the correct invariant features (e.g., see Table \ref{table:robust1} and \ref{table:robust2} in the appendix).\\
(3) \textit{Quantitative analysis of results on Gaussian settings}: Although we report only the most interesting scenarios in Figures~\ref{fig:3}-\ref{fig:9}, we notice that for Gaussian settings (see  Figure~\ref{fig:8} (b)), the error rates of many of these feature selection algorithms are slightly lower than~\rctl. Considering total bias, as discussed in section \ref{sec:method}, these approaches enjoy higher predictivity  due to lower incomplete information bias. For example, in Figure~\ref{fig:8} (b), Adaboost has a slightly lower MSE than~\rctl~for the specified setting. We perform t-tests and F-tests to confirm our observation
from the results for all such scenarios. In these tests, we found that the difference in error rates is almost insignificant (Figures \ref{ttest1} - \ref{fig:targetM}).\\
\noindent(4) \textit{Increase in Error on small sample size}: We do see a slight increase in the error on small sample size setting for~\rctl~(see  Figure~\ref{fig:9} (b)). Further investigation reveals that, since we do not utilize the underlying ground truth graph structure, the lack of data is straining the Markov blanket algorithms, causing them sometimes incorrectly to learn the Markov blanket of the target variable. For example, for the setting in Figure~\ref{fig:9} (b) using Figure~\ref{fig:9} (a) as the ground truth, we found that the Markov blanket of the target variable $T$ using IAMB is $\{P, X, C_1, G\}$. In this case, the Markov blanket is wrong, but the separating set learned $\{P,X\}$ was satisfactory and causally invariant. \\

\newgeometry{left=2cm,bottom=2cm,top = 2cm,right=2cm}
\twocolumn
\begin{figure}[!ht]
\centering%
\captionsetup[subfigure]{font=footnotesize,justification=centering}
    \subcaptionbox{}[.45\linewidth]{%
	\includegraphics[scale=.225]{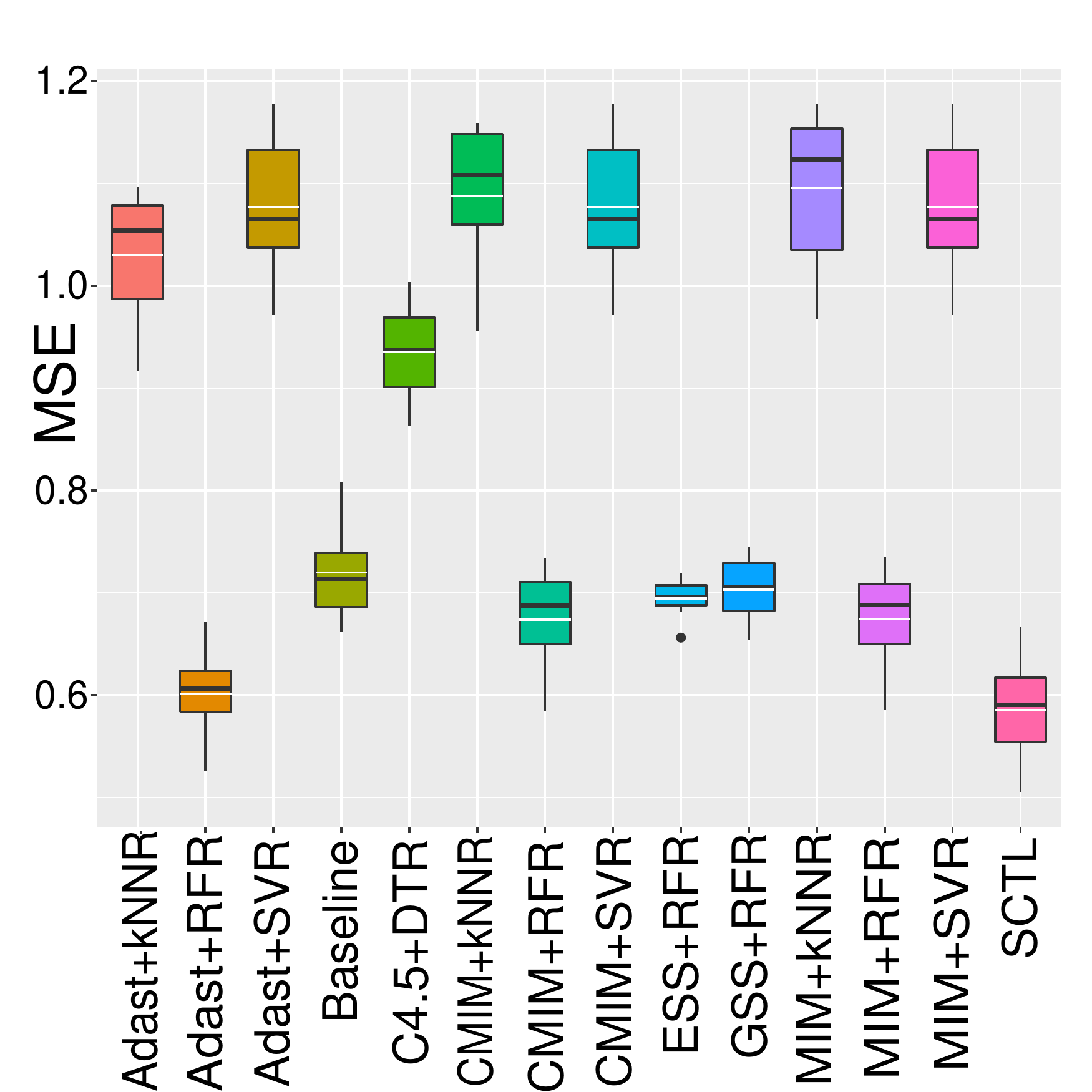}}%
	\subcaptionbox{}[.33\textwidth]{%
	\includegraphics[scale=.225]{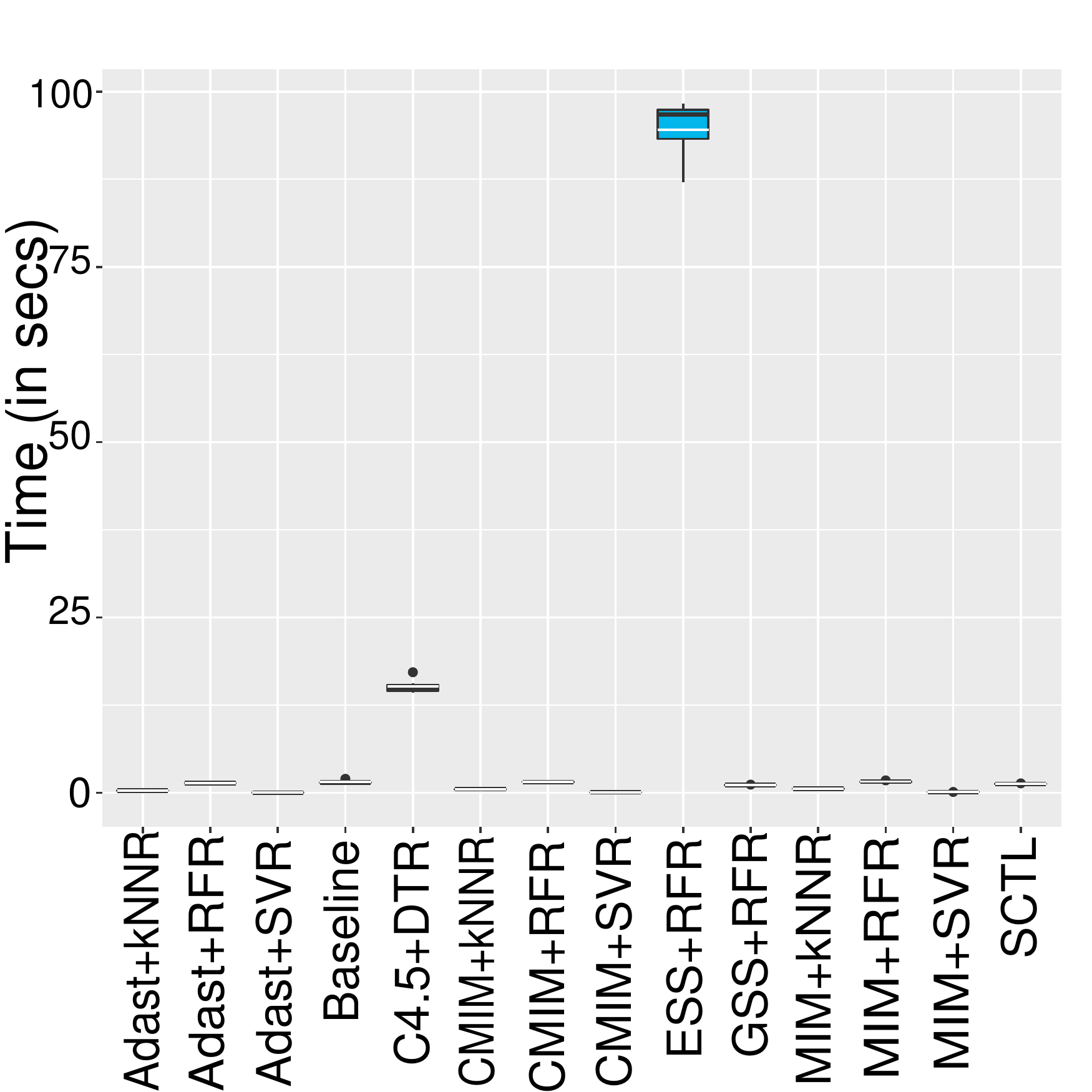}}\vspace{-.3cm}
    \caption{\footnotesize{(a) prediction error, (b) runtime. Data shift: \textcolor{green}{$C_1$} \& \textcolor{green}{$C_2$}, target variable: \textcolor{orange}{$T$}, sample size = 1000, data type: discrete, Ground truth: Figure \ref{fig:Graph} with solid lines.}}
    \label{fig:3}
\centering%
\vspace{-.3cm}\captionsetup[subfigure]{font=footnotesize,justification=centering}
\subcaptionbox{}[.45\linewidth]{%
 \begin{tikzpicture}[transform shape,scale=.45]
	\tikzset{vertex/.style = {shape=circle,align=center,draw=black, fill=white}}
\tikzset{edge/.style = {->,> = latex',thick}}
	\node[vertex,thick] (l) at  (-5.3,5) {L};
	\node[vertex,thick] (k) at  (-3.7,5) {K};
	\node[vertex,thick] (j) at  (-3.7,3.8) {J};
	\node[vertex,thick](m) at  (-5.3,3.8) {M};
	\node[vertex,thick] (n) at  (-4.5,2.6) {N};
    \node[vertex,thick,fill=green] (c1) at  (-2.5,5) {c1};
    \node[vertex,thick,fill= gray!40,dashed] (u) at  (-1.5,6.2) {U};
    \node[vertex,thick] (c2) at  (-0.5,5) {c2};
    \node[vertex,thick](x) at  (-0.5,3.8) {X};
    \node[vertex,thick,fill=Dandelion ] (t) at  (-0.5,1.4) {T};
    \node[vertex,thick ] (y) at  (-1.5,.2) {Y};
	\node[vertex,thick] (p) at  (0.8,1.4) {P};
	\node[vertex,thick] (q) at  (0.8,0.2) {Q};
	\node[vertex,thick] (b) at  (0.8,3.8) {B};
    \node[vertex,thick] (d) at  (0.8,5) {D};
    \node[vertex,thick] (e) at  (2,5) {E};
	\node[vertex,thick] (i) at  (2,3.8) {I};
    \node[vertex,thick] (f) at  (-0.5,6.2) {F};
    \node[vertex,thick] (g) at  (.8,6.2) {G};
    \node[vertex,thick] (h) at  (2,6.2) {H};
	\draw[edge] (k) to (l);
	\draw[edge] (k) to (j);
	\draw[edge] (m) to (n);
	\draw[edge] (j) to (n);
	\draw[edge] (l) to (m);
	\draw[edge,fill = Cyan] (u) to (c1);
	\draw[edge,fill = Cyan] (u) to (c2);
	\draw[edge] (c1) to (y);
	\draw[edge] (c2) to (x);
	\draw[edge] (x) to (t);
	\draw[edge] (t) to (y);
	\draw[edge] (t) to (p);
	\draw[edge,] (p) to (q);
	\draw[edge] (d) to (b);
	\draw[edge] (c2) to (b);    
	\draw[edge] (e) to (b);
	\draw[edge] (e) to (i);
	\draw[edge] (f) to (d);
	\draw[edge] (g) to (e);
	\draw[edge] (e) to (i);
	\draw[edge] (h) to (e);
	\draw[edge] (j) to (x);

\end{tikzpicture}
}%
	\subcaptionbox{}[.33\textwidth]{%
	\includegraphics[scale=.225]{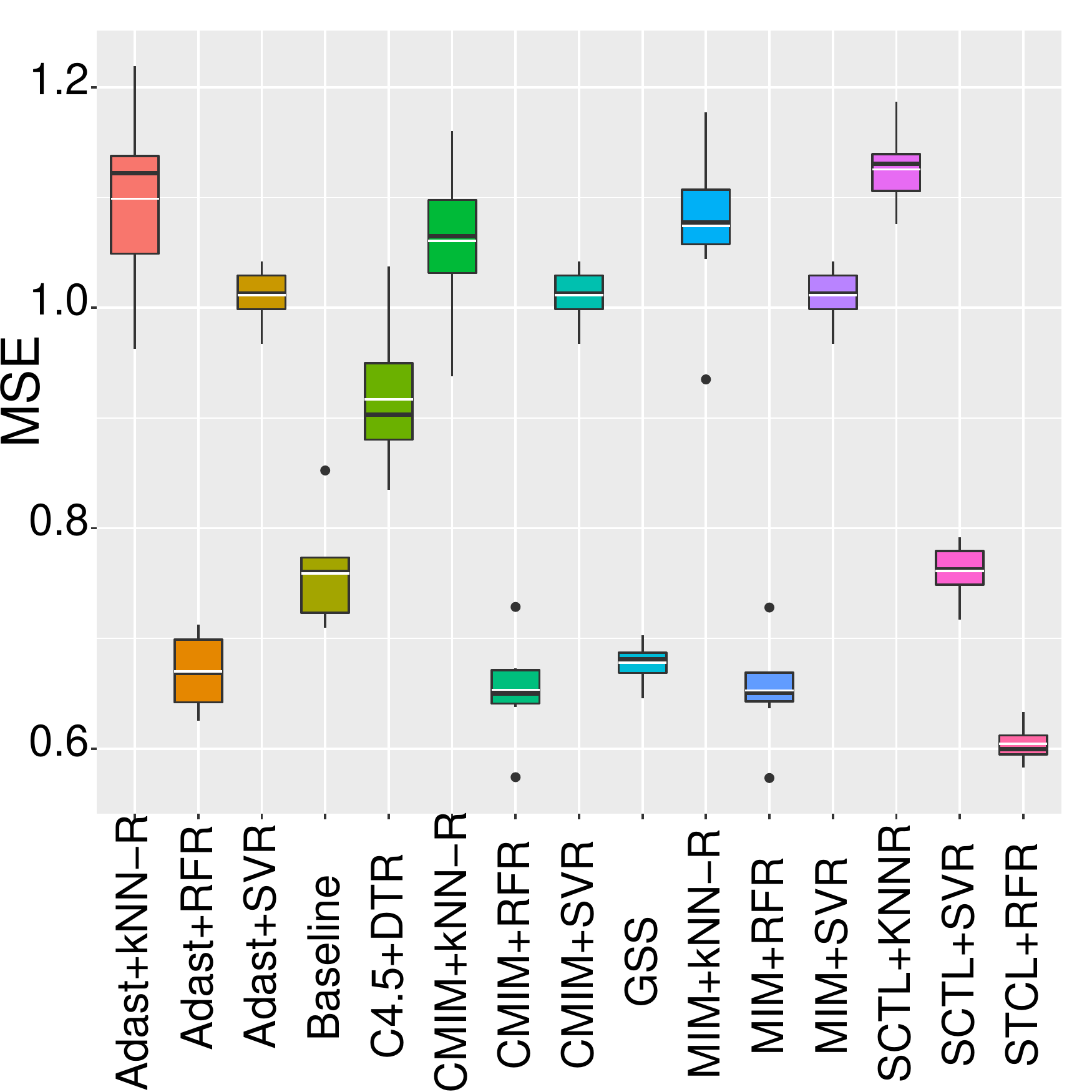}}%
     \vspace{-.3cm}\caption{\footnotesize{(a) the ground truth graph, (b) prediction error. Data shift: \textcolor{green}{$C_1$}, target variable: \textcolor{orange}{$T$}, sample size = 1000, and data type: discrete.}}
     \label{fig:4}
\centering%
\vspace{-.1cm}\captionsetup[subfigure]{font=footnotesize,justification=centering}
\subcaptionbox{}[.45\linewidth]{%
	    \begin{tikzpicture}[transform shape,scale=.45]
	\tikzset{vertex/.style = {shape=circle,align=center,draw=black, fill=white}}
\tikzset{edge/.style = {->,> = latex',thick}}
	\node[vertex,thick] (l) at  (-5.3,5) {L};
	\node[vertex,thick] (k) at  (-3.7,5) {K};
	\node[vertex,thick] (j) at  (-3.7,3.8) {J};
	\node[vertex,thick](m) at  (-5.3,3.8) {M};
	\node[vertex,thick] (n) at  (-4.5,2.6) {N};
    \node[vertex,thick,fill=green] (c1) at  (-2.5,5) {c1};
    \node[vertex,thick,fill= gray!40,dashed] (u) at  (-1.5,6.2) {U};
    \node[vertex,thick,fill=green] (c2) at  (-0.5,5) {c2};
    \node[vertex,thick](x) at  (-0.5,3.8) {X};
    \node[vertex,thick,fill=Dandelion ] (t) at  (-0.5,1.4) {T};
    \node[vertex,thick ] (y) at  (-1.5,.2) {Y};
	\node[vertex,thick] (p) at  (0.8,1.4) {P};
	\node[vertex,thick] (q) at  (0.8,0.2) {Q};
	\node[vertex,thick] (b) at  (0.8,3.8) {B};
    \node[vertex,thick] (d) at  (0.8,5) {D};
    \node[vertex,thick] (e) at  (2,5) {E};
	\node[vertex,thick] (i) at  (2,3.8) {I};
    \node[vertex,thick] (f) at  (-0.5,6.2) {F};
    \node[vertex,thick] (g) at  (.8,6.2) {G};
    \node[vertex,thick] (h) at  (2,6.2) {H};
	\draw[edge] (k) to (l);
	\draw[edge] (k) to (j);
	\draw[edge] (m) to (n);
	\draw[edge] (j) to (n);
	\draw[edge] (l) to (m);
	\draw[edge,fill = Cyan] (u) to (c1);
	\draw[edge,fill = Cyan] (u) to (c2);
	\draw[edge] (c1) to (y);
	\draw[edge] (c2) to (x);
	\draw[edge] (x) to (t);
	\draw[edge] (t) to (y);
	\draw[edge] (t) to (p);
	\draw[edge,] (p) to (q);
	\draw[edge] (d) to (b);
	\draw[edge] (c2) to (b);    
	\draw[edge] (e) to (b);
	\draw[edge] (e) to (i);
	\draw[edge] (f) to (d);
	\draw[edge] (g) to (e);
	\draw[edge] (e) to (i);
	\draw[edge] (h) to (e);
	\draw[edge] (j) to (x);
\end{tikzpicture}
}%
\subcaptionbox{}[.33\textwidth]{%
	\includegraphics[scale=.225]{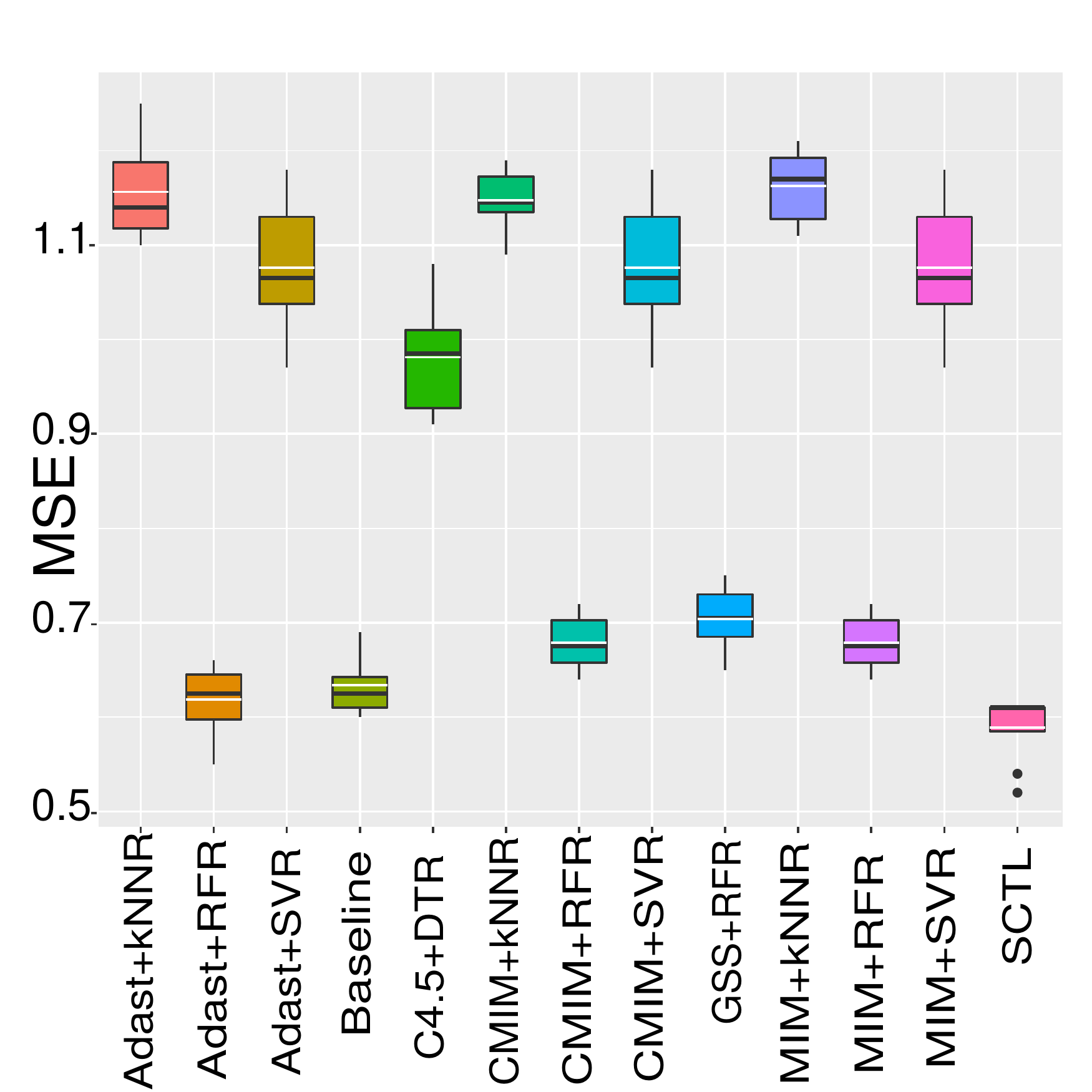}}\vspace{-.3cm}%
    \caption{\footnotesize{(a) the ground truth graph, (b) prediction error. Data shift: \textcolor{green}{$C_1$} \& \textcolor{green}{$C_2$}, target variable: \textcolor{orange}{$T$}, sample size = 1000, and data type: discrete.}}
    \label{fig:5}
\centering
\captionsetup[subfigure]{font=footnotesize,justification=centering}
\subcaptionbox{}[.45\linewidth]{%
	    \begin{tikzpicture}[transform shape,scale=.45]
	\tikzset{vertex/.style = {shape=circle,align=center,draw=black, fill=white}}
\tikzset{edge/.style = {->,> = latex',thick}}
	\node[vertex,thick] (l) at  (-5.3,5) {L};
	\node[vertex,thick] (k) at  (-3.7,5) {K};
	\node[vertex,thick] (j) at  (-3.7,3.8) {J};
	\node[vertex,thick,fill = Dandelion](m) at  (-5.3,3.8) {M};
	\node[vertex,thick] (n) at  (-4.5,2.6) {N};
    \node[vertex,thick,fill=green] (c1) at  (-2.5,5) {c1};
    \node[vertex,thick,fill= gray!40,dashed] (u) at  (-1.5,6.2) {U};
    \node[vertex,thick] (c2) at  (-0.5,5) {c2};
    \node[vertex,thick](x) at  (-0.5,3.8) {X};
    \node[vertex,thick] (t) at  (-0.5,1.4) {T};
    \node[vertex,thick ] (y) at  (-1.5,.2) {Y};
	\node[vertex,thick] (p) at  (0.8,1.4) {P};
	\node[vertex,thick] (q) at  (0.8,0.2) {Q};
	\node[vertex,thick] (b) at  (0.8,3.8) {B};
    \node[vertex,thick] (d) at  (0.8,5) {D};
    \node[vertex,thick] (e) at  (2,5) {E};
	\node[vertex,thick] (i) at  (2,3.8) {I};
    \node[vertex,thick] (f) at  (-0.5,6.2) {F};
    \node[vertex,thick] (g) at  (.8,6.2) {G};
    \node[vertex,thick] (h) at  (2,6.2) {H};
	\draw[edge] (k) to (l);
	\draw[edge] (k) to (j);
	\draw[edge] (m) to (n);
	\draw[edge] (j) to (n);
	\draw[edge] (l) to (m);
	\draw[edge,fill = Cyan] (u) to (c1);
	\draw[edge,fill = Cyan] (u) to (c2);
	\draw[edge] (c1) to (y);
	\draw[edge] (c2) to (x);
	\draw[edge] (x) to (t);
	\draw[edge] (t) to (y);
	\draw[edge] (t) to (p);
	\draw[edge,] (p) to (q);
	\draw[edge] (d) to (b);
	\draw[edge] (c2) to (b);    
	\draw[edge] (e) to (b);
	\draw[edge] (e) to (i);
	\draw[edge] (f) to (d);
	\draw[edge] (g) to (e);
	\draw[edge] (e) to (i);
	\draw[edge] (h) to (e);
	\draw[edge] (j) to (x);
    \draw[edge] (b) to (p);
\end{tikzpicture}
}%
	\subcaptionbox{}[.33\textwidth]{%
	\includegraphics[scale=.225]{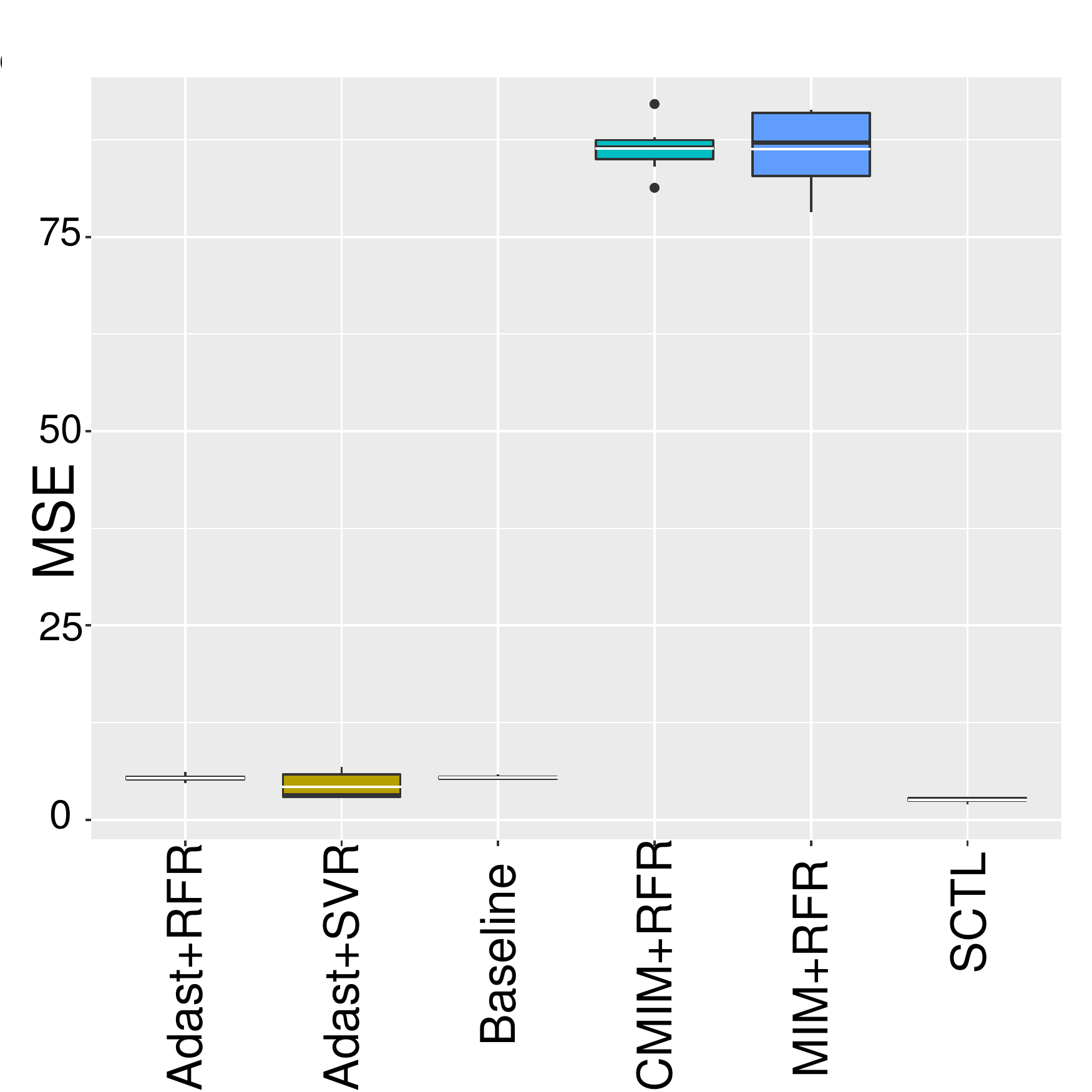}}\vspace{-.3cm}%
\caption{\footnotesize{(a) the ground truth graph, (b) prediction error. Data shift: \textcolor{green}{$C_1$}, target variable: \textcolor{orange}{$M$}, sample size = 1000, and data type: discrete.}}
    \label{fig:6}
    \end{figure}
\begin{figure}
\centering
\captionsetup[subfigure]{font=footnotesize,justification=centering}
        \vspace{-3cm}
        \subcaptionbox{}[.45\linewidth]{%
	    \begin{tikzpicture}[transform shape,scale=.45]
	\tikzset{vertex/.style = {shape=circle,align=center,draw=black, fill=white}}
\tikzset{edge/.style = {->,> = latex',thick}}
	\node[vertex,thick] (l) at  (-5.3,5) {L};
	\node[vertex,thick] (k) at  (-3.7,5) {K};
	\node[vertex,thick] (j) at  (-3.7,3.8) {J};
	\node[vertex,thick](m) at  (-5.3,3.8) {M};
	\node[vertex,thick] (n) at  (-4.5,2.6) {N};
    \node[vertex,thick,fill=green] (c1) at  (-2.5,5) {c1};
    \node[vertex,thick,fill= gray!40,dashed] (u) at  (-1.5,6.2) {U};
    \node[vertex,thick,fill = green] (c2) at  (-0.5,5) {c2};
    \node[vertex,thick](x) at  (-0.5,3.8) {X};
    \node[vertex,thick,fill=Dandelion ] (t) at  (-0.5,1.4) {T};
    \node[vertex,thick ] (y) at  (-1.5,.2) {Y};
	\node[vertex,thick] (p) at  (0.8,1.4) {P};
	\node[vertex,thick] (q) at  (0.8,0.2) {Q};
	\node[vertex,thick] (b) at  (0.8,3.8) {B};
    \node[vertex,thick] (d) at  (0.8,5) {D};
    \node[vertex,thick] (e) at  (2,5) {E};
	\node[vertex,thick] (i) at  (2,3.8) {I};
    \node[vertex,thick] (f) at  (-.5,6.2) {F};
    \node[vertex,thick] (g) at  (.8,6.2) {G};
    \node[vertex,thick] (h) at  (2,6.2) {H};
	\draw[edge] (k) to (l);
	\draw[edge] (k) to (m);
	\draw[edge] (k) to (j);
	\draw[edge] (m) to (n);
	\draw[edge] (j) to (n);
	\draw[edge] (l) to (m);
	\draw[edge,fill = Cyan] (u) to (c1);
	\draw[edge,fill = Cyan] (u) to (c2);
	\draw[edge] (c1) to (y);
	\draw[edge] (c2) to (x);
	\draw[edge] (x) to (t);
	\draw[edge] (t) to (y);
	\draw[edge] (t) to (p);
	\draw[edge] (p) to (q);
	\draw[edge] (d) to (b);
	\draw[edge] (c2) to (b);    
	\draw[edge] (e) to (i);
	\draw[edge] (f) to (d);
	\draw[edge] (g) to (e);
	\draw[edge] (e) to (i);
	\draw[edge] (h) to (e);
	\draw[edge] (j) to (x);
    \draw[edge] (b) to (p);
\end{tikzpicture}
}%
	\subcaptionbox{}[.33\textwidth]{%
	\includegraphics[scale=.225]{images/plots/MSE_C1_C2_graph1_discrete_.pdf}}\vspace{-.3cm}%
    \caption{\footnotesize{(a) the ground truth graph, (b) prediction error. Data shift: \textcolor{green}{$C_1$} \& \textcolor{green}{$C_2$}, target variable: \textcolor{orange}{$T$}, sample size = 1000, and data type: discrete.}}
    \label{fig:7}
\centering
\captionsetup[subfigure]{font=footnotesize,justification=centering}
\subcaptionbox{}[.45\linewidth]{%
	    \begin{tikzpicture}[transform shape,scale=.45]
	\tikzset{vertex/.style = {shape=circle,align=center,draw=black, fill=white}}
\tikzset{edge/.style = {->,> = latex',thick}}
	\node[vertex,thick] (l) at  (-5.3,5) {L};
	\node[vertex,thick] (k) at  (-3.7,5) {K};
	\node[vertex,thick] (j) at  (-3.7,3.8) {J};
	\node[vertex,thick](m) at  (-5.3,3.8) {M};
	\node[vertex,thick] (n) at  (-4.5,2.6) {N};
    \node[vertex,thick] (c1) at  (-2.5,5) {c1};
    \node[vertex,thick,fill= gray!40,dashed] (u) at  (-1.5,6.2) {U};
    \node[vertex,thick] (z) at  (-1.4,2.75) {Z};
    \node[vertex,thick] (c2) at  (-0.5,5) {c2};
    \node[vertex,thick](x) at  (-0.5,3.8) {X};
    \node[vertex,thick,fill=Dandelion ] (t) at  (-0.5,1.4) {T};
    \node[vertex,thick ] (y) at  (-1.5,.2) {Y};
	\node[vertex,thick] (p) at  (0.8,1.4) {P};
	\node[vertex,thick] (q) at  (0.8,0.2) {Q};
	\node[vertex,thick] (b) at  (0.8,3.8) {B};
    \node[vertex,thick] (d) at  (0.8,5) {D};
    \node[vertex,thick] (e) at  (2,5) {E};
	\node[vertex,thick] (i) at  (2,3.8) {I};
    \node[vertex,thick] (f) at  (-.5,6.2) {F};
    \node[vertex,thick] (g) at  (.8,6.2) {G};
    \node[vertex,thick] (h) at  (2,6.2) {H};
	\draw[edge] (k) to (l);
	\draw[edge] (k) to (j);
	\draw[edge] (m) to (n);
	\draw[edge] (j) to (n);
	\draw[edge] (l) to (m);
	\draw[edge,fill = Cyan] (u) to (c1);
	\draw[edge,fill = Cyan] (u) to (c2);
	\draw[edge] (c1) to (y);
	\draw[edge] (c2) to (x);
	\draw[edge] (x) to (t);
	\draw[edge] (t) to (y);
	\draw[edge] (t) to (p);
	\draw[edge] (p) to (q);
	\draw[edge] (d) to (b);
	\draw[edge] (c2) to (b);    
	\draw[edge] (e) to (b);
	\draw[edge] (e) to (i);
	\draw[edge] (f) to (d);
	\draw[edge] (g) to (e);
	\draw[edge] (e) to (i);
	\draw[edge] (h) to (e);
	\draw[edge] (j) to (x);
 	\draw[edge] (c1) to (z);
 	\draw[edge] (z) to (t);
\end{tikzpicture}
}%
	\subcaptionbox{}[.33\textwidth]{%
	\includegraphics[scale=.225]{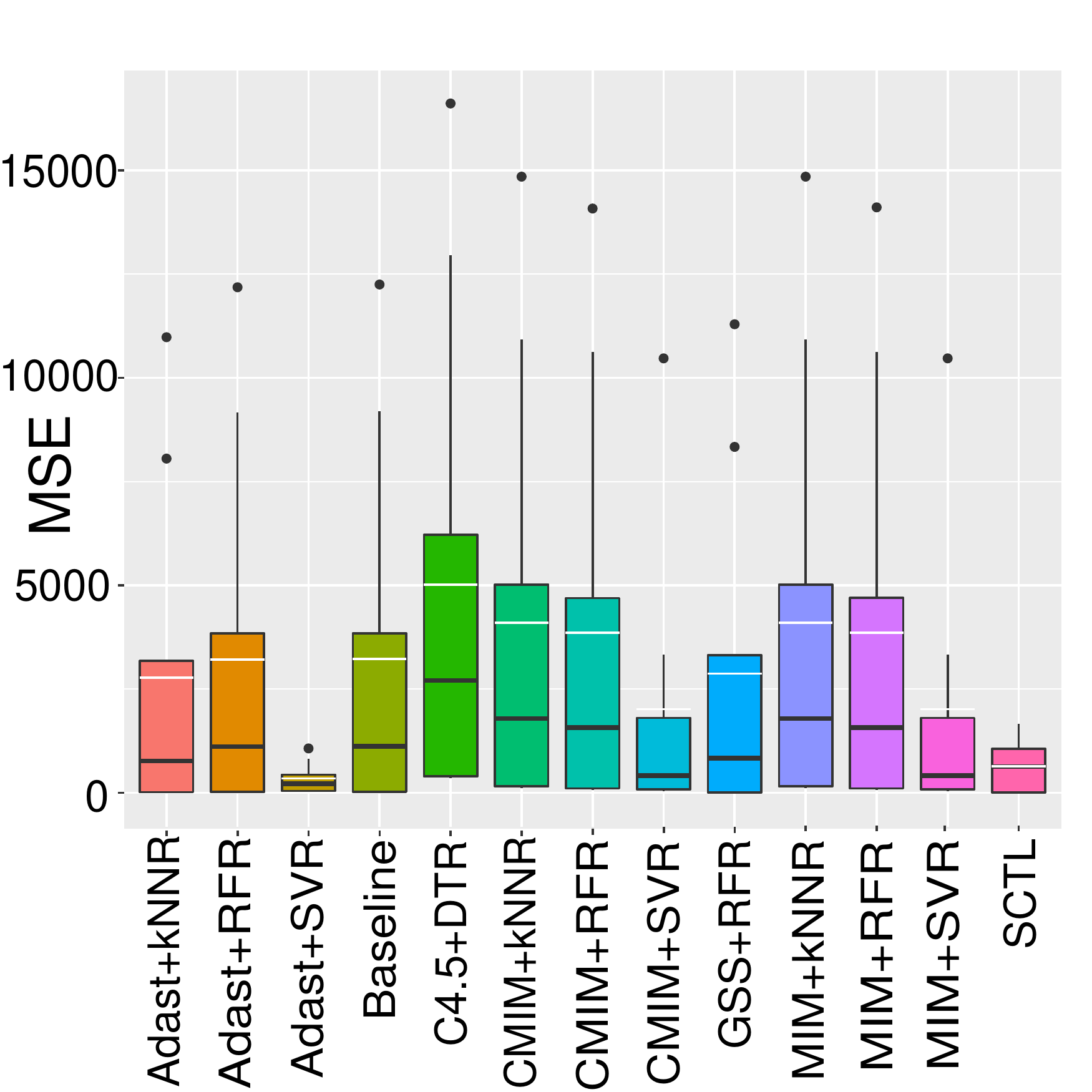}}\vspace{-.3cm}
    \caption{\footnotesize{(a) the ground truth graph, (b) prediction error. No data shift, target variable: \textcolor{orange}{$T$}, sample size = 1000, and data type: Gaussian.}}
    \label{fig:8}

\centering
\captionsetup[subfigure]{font=footnotesize,justification=centering}
\subcaptionbox{}[.45\linewidth]{%
	    \begin{tikzpicture}[transform shape,scale=.45]
	\tikzset{vertex/.style = {shape=circle,align=center,draw=black, fill=white}}
\tikzset{edge/.style = {->,> = latex',thick}}
	\node[vertex,thick] (l) at  (-5.3,5) {L};
	\node[vertex,thick] (k) at  (-3.7,5) {K};
	\node[vertex,thick] (j) at  (-3.7,3.8) {J};
	\node[vertex,thick](m) at  (-5.3,3.8) {M};
	\node[vertex,thick] (n) at  (-4.5,2.6) {N};
    \node[vertex,thick,fill=green] (c1) at  (-2.5,5) {c1};
    \node[vertex,thick,fill= gray!40,dashed] (u) at  (-1.5,6.2) {U};
    \node[vertex,thick,fill=green] (c2) at  (-0.5,5) {c2};
    \node[vertex,thick](x) at  (-0.5,3.8) {X};
    \node[vertex,thick,fill=Dandelion ] (t) at  (-0.5,1.4) {T};
    \node[vertex,thick ] (y) at  (-1.5,.2) {Y};
	\node[vertex,thick] (p) at  (0.8,1.4) {P};
	\node[vertex,thick] (q) at  (0.8,0.2) {Q};
	\node[vertex,thick] (b) at  (0.8,3.8) {B};
    \node[vertex,thick] (d) at  (0.8,5) {D};
    \node[vertex,thick] (e) at  (2,5) {E};
	\node[vertex,thick] (i) at  (2,3.8) {I};
    \node[vertex,thick] (f) at  (-.5,6.2) {F};
    \node[vertex,thick] (g) at  (.8,6.2) {G};
    \node[vertex,thick] (h) at  (2,6.2) {H};
	\draw[edge] (k) to (l);
	\draw[edge] (k) to (j);
	\draw[edge] (m) to (n);
	\draw[edge] (j) to (n);
	\draw[edge] (l) to (m);
	\draw[edge,fill = Cyan] (u) to (c1);
	\draw[edge,fill = Cyan] (u) to (c2);
	\draw[edge] (c1) to (y);
	\draw[edge] (c2) to (x);
	\draw[edge] (x) to (t);
	\draw[edge] (t) to (y);
	\draw[edge] (t) to (p);
	\draw[edge] (p) to (q);
	\draw[edge] (d) to (b);
	\draw[edge] (c2) to (b);    
	\draw[edge] (e) to (b);
	\draw[edge] (e) to (i);
	\draw[edge] (f) to (d);
	\draw[edge] (g) to (e);
	\draw[edge] (e) to (i);
	\draw[edge] (h) to (e);
	\draw[edge] (j) to (x);
\end{tikzpicture}
}%
	\subcaptionbox{}[.33\textwidth]{%
  	\includegraphics[scale=0.225]{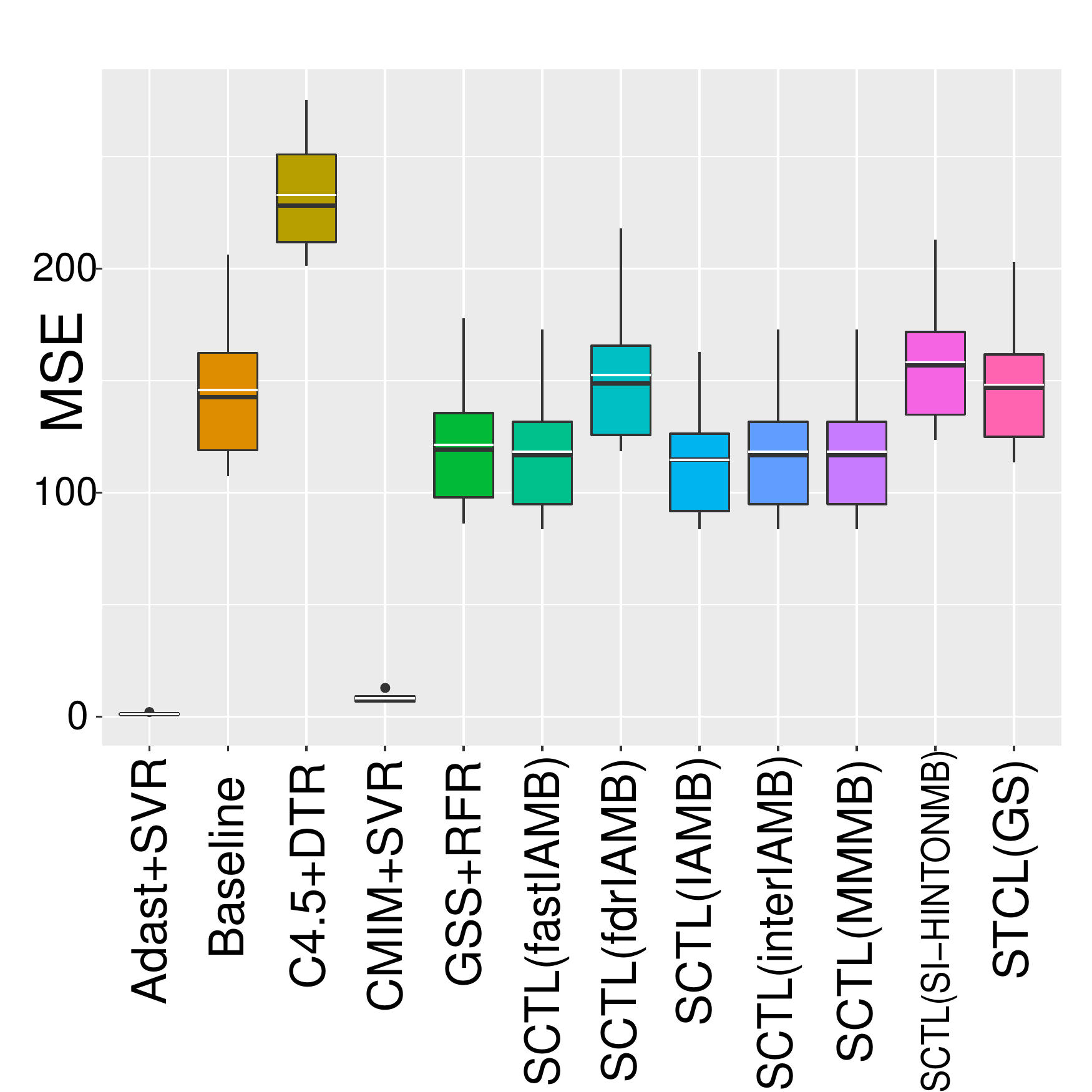}}\vspace{-.3cm}
	\caption{\footnotesize{(a) the ground truth graph, (b) prediction error. Data shift: \textcolor{green}{$C_1$} \& \textcolor{green}{$C_2$}, target variable: \textcolor{orange}{$T$}, sample size = 50, and data type:  Gaussian.}}
	\label{fig:9}
	\includegraphics[scale=0.225]{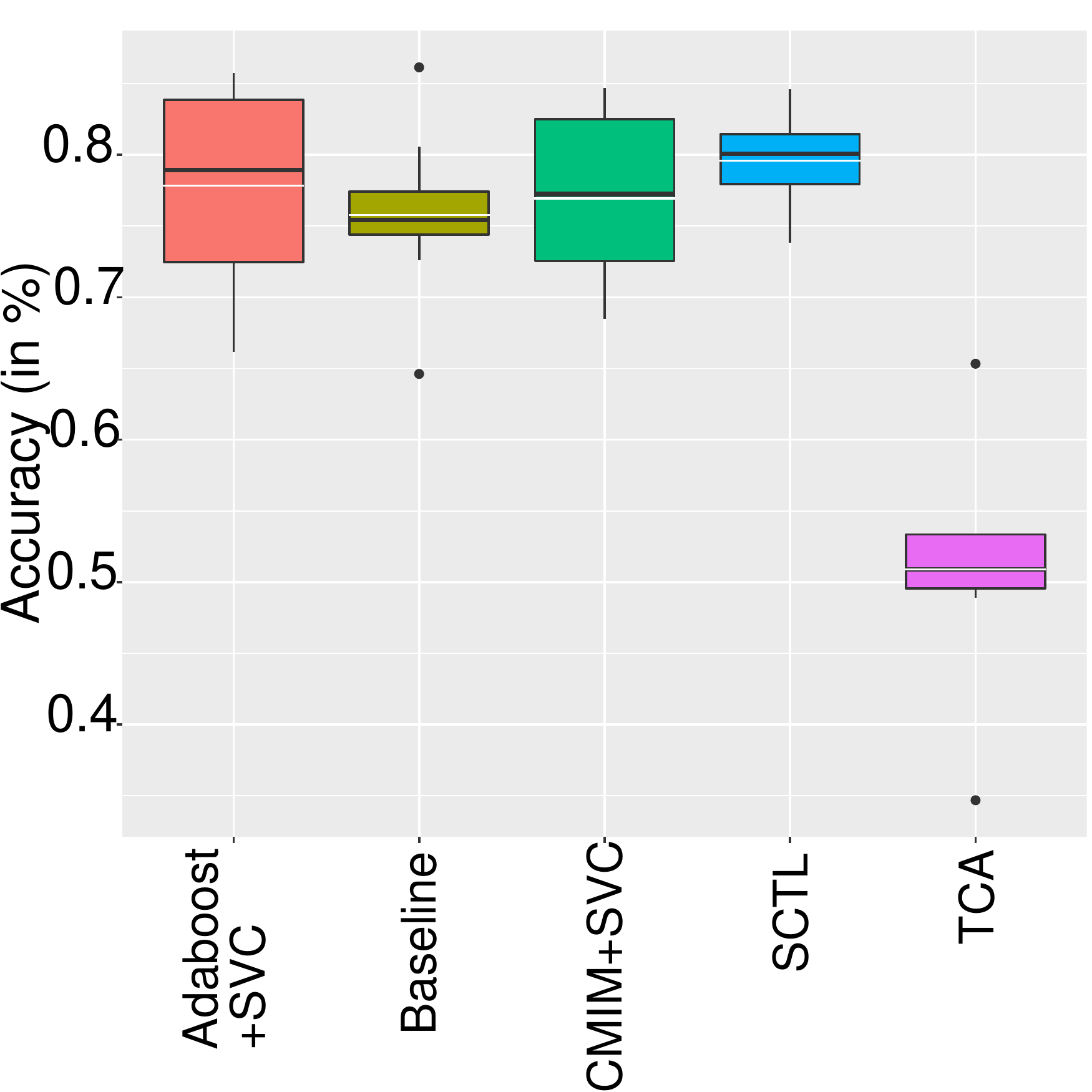}%
	\includegraphics[scale=0.225]{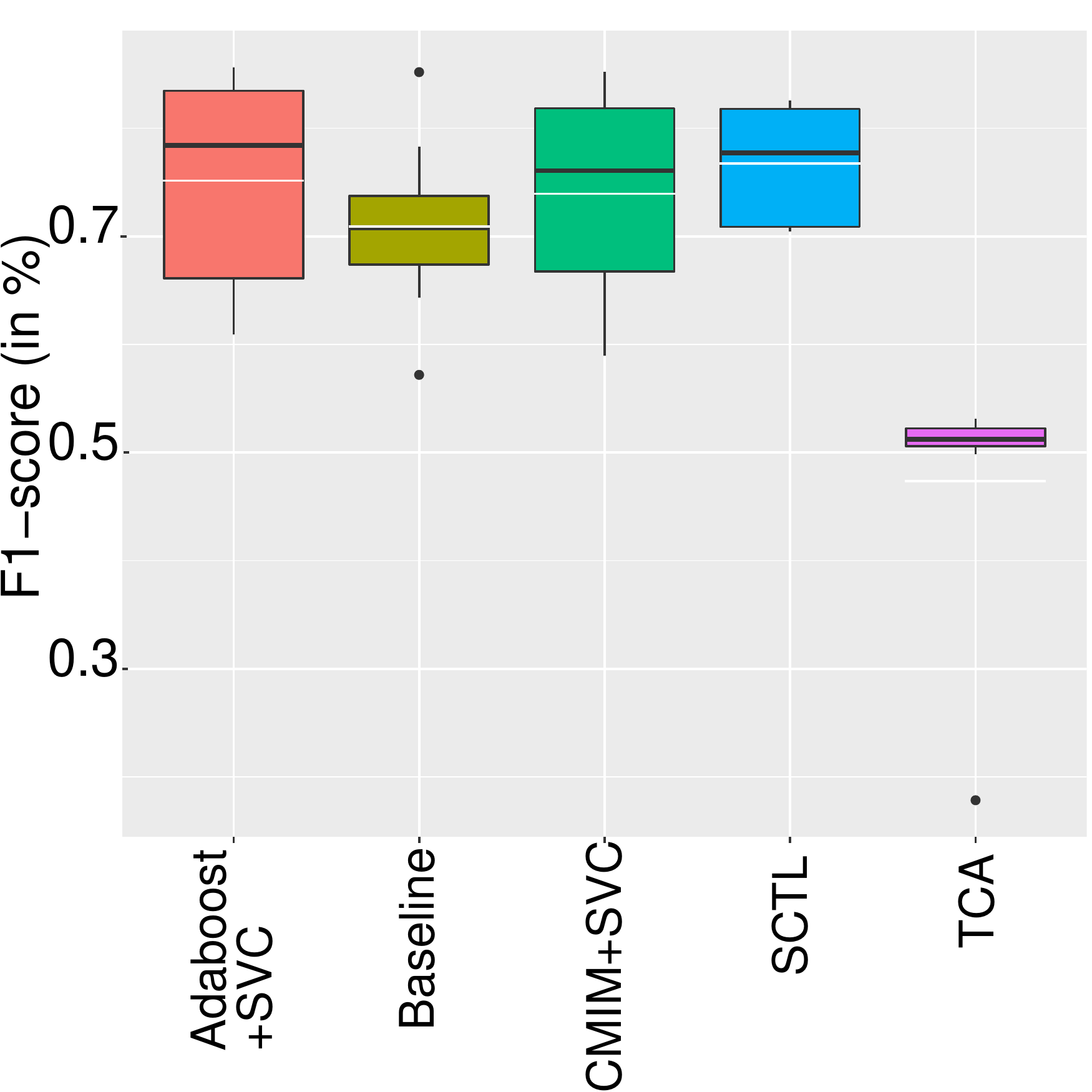}
	\caption{\footnotesize{Error rate comparison of feature selection approaches on the Diabetes dataset with \textit{Age Shift} (discussed in Section~\ref{sec:real_descp}).} }
	\label{fig:real_data}
\end{figure}
\restoregeometry
\onecolumn

(5)\label{key5} \textit{Choice of Markov Blanket Algorithms}: As shown in Figure~\ref{fig:9} (b), the Markov blanket algorithm used holds key significance. Their practical uses may show results different from the theoretical perception. For example, GS does not consider the ordering and the strength of the association of the candidate variable and the target variable $T$. On the other hand, IAMB  orders variables based on the strength of their association with the target variable $T$ first and then check their membership in the $\mb(T)$. Since choosing different p-values, the sample size of the data, and the maximum size of the conditioning sets have a big effect on the quality of learned $\mb(T)$, it is necessary to choose an appropriate Markov blanket approach depending on the given setting. For example, fdrIAMB \cite{Pena08Mb} is particularly suitable when dealing with more nodes than samples in Gaussian models.
\subsection{Real-world data: Diabetes Dataset}
Although we do not have the ground truth causal graph for the real world diabetes data~\cite{islam2020likelihood}, we assume it follows Assumption~\ref{assum:jci} \&~\ref{assum:cda}. We consider \textit{Age} as a context variable, because previous studies \cite{kirkman2012diabetes} have shown that older people are at higher risk of developing Type II diabetes. As this is a classification
problem, the commonly used metrics Accuracy and F1-score were used to measure the
algorithms' performance.
We compare \rctl~against Adaboost and CMIM as they showed the best performance in the synthetic case. 
Similar to synthetic data, we ran ~\ess~on the computer for 72 hours, after which it was crashed. The results in Figure~\ref{fig:real_data} show that although \rctl~uses fewer features for prediction, it provides higher accuracy and F1-score in average compared with other approaches. We observe a more significant variability for Adaboost+SVC and CMIM+SVC accuracy and F1-score compared to \rctl~as well as larger outliers. This can be considered an indication of the robustness of \rctl~in real-world scenarios. On the other hand, although there is a greater variability for \rctl~accuracy and F1-score compared to the baseline as well as larger outliers, \rctl~enjoys higher accuracy and F1-score in general. These observations  verify the importance of finding causally invariant features where data shift occurs.  Notice that the total sample size for both training and testing scenarios had 200 entries for each. This signifies that even for a moderate number of samples, the Markov blankets can learn the vicinity of the target well enough for giving low error rates in the target domain. We also compared \rctl~against Transfer Component Analysis (TCA), a state-of-the-art
algorithm in unsupervised domain adaptation that identifies transfer components
that remains invariant across domains. As shown in \cite{pan2010domain}, the performance of TCA is sensitive to the choice of kernel choice and hyperparameters of the kernel function. We, therefore,  performed hyperparameter tuning (details in Appendix) and reported the best result. We hypothesize that TCA cannot learn the transfer components correctly in this scenario and requires further investigations. 
\vspace{-.25cm}
\subsection{Real-world data: Cancer Dataset}
\label{sec:cancer}
\vspace{-.25cm}
To show the scalability of \rctl, we used the Colorectal Cancer Dataset. Similar to the diabetes case, we did not have the ground truth graph. For the same reasons, we compared \rctl~against Adaboost and CMIM using Accuracy and F1-score. For this experiment, we considered \textit{Gender} as the context variable. Firstly, this is a dataset with ~{400k} variables, making it impossible to scale under any circumstances using \cite{MagliacaneNIPS18}. 
\begin{wraptable}{r}{8cm}
\caption{Results for the cancer dataset (100k ranked features)} 
\centering 
\footnotesize\begin{tabular}{l c c} 
\hline\hline 
Methodology & Accuracy & F1-Score \\ [0.5ex] 
\hline 
Baseline &  91.25	& 90.94  \\
\rctl~(GS) (167 features)& 88.12	& 87.88  \\
\rctl~(IAMB) (80 features)& 96.25 &	96.22  \\
CMIM + SVC & 90.05 & 89.79	  \\
Adaboost & \textbf{96.87} &	\textbf{97.83}   \\ [1ex] 
\hline 
\end{tabular}
\label{table:cancer100k} 
\end{wraptable}
We conducted a sensitivity analysis to evaluate the robustness of algorithms in this high-dimensional and small sample size setting. For this purpose, we extracted 1000, 10k, and 100k  features using an Extra-Trees classifier \cite{pedregosa2011scikit} to rank features based on their importance. In addition to the sensitivity analysis, by this intervention to the dataset, we made the setting in favor
of feature selection algorithms that we compare by making the problem less dimensional and sorting the features based on their importance using the complete dataset
(details in Appendix \ref{sec:moreresults}). The results for each are shown in Tables  \ref{table:cancer100k}, \ref{table:cancer1000}, and \ref{table:cancer10k}. We found~\rctl~to be either as good as or better than other approaches except in comparison to Adaboost, which enjoys slightly better performance
than \rctl~in (almost) all settings. We conjecture that this is due to a misalignment between theory and dynamics of the domain. It also is well-known that Adaboost  is a successful classifier that takes advantage of boosting \cite{wyner2017explaining}. Although non-causal feature selection methods such as Adaboost are often used in
practice, they cannot be interpreted causally even
when they achieve excellent predictivity. Note that the size of the returned feature set for \rctl(IAMB)~in Table \ref{table:cancer100k} is considerably smaller than for \rctl(GS); the performance of \rctl(IAMB)~is better than \rctl(GS). This indicates  that IAMB is a better choice than GS in high-dimensional settings, as discussed in key highlight (5) for synthetic experiments.

\section*{Conclusion}
In this paper, we proposed a new algorithm, called \textbf{S}calable \textbf{C}ausal \textbf{T}ransfer \textbf{L}earning (\rctl), that identifies causal invariance in the presence of covariate shift and showed that it scales to high-dimensional and robust in low sample size settings in both synthetic and real-world scenarios. Weakening  Assumption \ref{assum:cda}(b) and relaxing Assumption \ref{assum:cda}(c) are interesting directions for future work. 

\section*{Acknowledgements} This work has been supported in part by NASA (Awards 80NSSC20K1720 and 521418-SC) and NSF (Awards 2007202 and 2107463). We are grateful to all who provided feedback on this work, including anonymous reviewers of NeurIPS WHY-21.

\bibliographystyle{plain}
\bibliography{references.bib}

\begin{thebibliography}{10}

\bibitem{ADA}
American Diabetes~Association (ADA).
\newblock Statistics about diabetes.
\newblock \url{https://www.diabetes.org/}, 2020.

\bibitem{JMLRaliferis10a}
Constantin~F. Aliferis, Alexander Statnikov, Ioannis Tsamardinos, Subramani
  Mani, and Xenofon~D. Koutsoukos.
\newblock Local causal and markov blanket induction for causal discovery and
  feature selection for classification part i: Algorithms and empirical
  evaluation.
\newblock {\em JMLR}, pages 171--234, 2010.

\bibitem{Australia}
Diabetes Australia.
\newblock Diabetes in {A}ustralia.
\newblock \url{https://www.diabetesaustralia.com.au}, 2020.

\bibitem{BareinboimPearl12}
Elias Bareinboim and Judea Pearl.
\newblock Transportability of causal effects: Completeness results.
\newblock In {\em Proceedings of the 26th AAAI Conference on Artificial
  Intelligence}, pages 698--704, Toronto, Ontario, Canada, Jul 2012. AAAI
  Press.

\bibitem{BareinboimPearl14}
Elias Bareinboim and Judea Pearl.
\newblock Transportability from multiple environments with limited experiments:
  Completeness results.
\newblock In Z.~Ghahramani, M.~Welling, C.~Cortes, N.~D. Lawrence, and K.~Q.
  Weinberger, editors, {\em Advances in Neural Information Processing Systems
  27}, pages 280--288. Curran Associates, Inc., 2014.

\bibitem{borgwardt2006integrating}
Karsten~M Borgwardt, Arthur Gretton, Malte Rasch, Hans-Peter Kriegel, Bernhard
  Sch{\"o}lkopf, and Alex Smola.
\newblock Integrating structured biological data by kernel maximum mean
  discrepancy.
\newblock {\em Bioinformatics}, 22(14):49--57, 2006.

\bibitem{chen16d}
Xiangli Chen, Mathew Monfort, Anqi Liu, and Brian~D. Ziebart.
\newblock Robust covariate shift regression.
\newblock volume~51 of {\em Proceedings of Machine Learning Research}, pages
  1270--1279, Cadiz, Spain, 09--11 May 2016. PMLR.

\bibitem{cheng1997learning}
Jie Cheng, David~A Bell, and Weiru Liu.
\newblock Learning belief networks from data: An information theory based
  approach.
\newblock In {\em Proceedings of the 6th CIKM}, pages 325--331, 1997.

\bibitem{Correa-ijcai2019}
Juan~D. Correa and Elias Bareinboim.
\newblock From statistical transportability to estimating the effect of
  stochastic interventions.
\newblock In {\em Proceedings of the Twenty-Eighth International Joint
  Conference on Artificial Intelligence, {IJCAI-19}}, pages 1661--1667.
  International Joint Conferences on Artificial Intelligence Organization, 7
  2019.

\bibitem{fleuret2004fast}
Fran{\c{c}}ois Fleuret.
\newblock Fast binary feature selection with conditional mutual information.
\newblock {\em Journal of Machine learning research}, 5(Nov):1531--1555, 2004.

\bibitem{freund1999short}
Yoav Freund and Robert Schapire.
\newblock A short introduction to boosting.
\newblock {\em Journal-Japanese Society For Artificial Intelligence},
  14(771-780):1612, 1999.

\bibitem{Friedman97}
Nir Friedman.
\newblock Learning belief networks in the presence of missing values and hidden
  variables.
\newblock In {\em Proceedings of the Fourteenth International Conference on
  Machine Learning}, ICML '97, page 125–133. Morgan Kaufmann Publishers Inc.,
  1997.

\bibitem{Glymour19}
Clark Glymour, Kun Zhang, and Peter Spirtes.
\newblock Review of causal discovery methods based on graphical models.
\newblock {\em Frontiers in Genetics}, 10:524, 2019.

\bibitem{gong2016domain}
Mingming Gong, Kun Zhang, Tongliang Liu, Dacheng Tao, Clark Glymour, and
  Bernhard Sch{\"o}lkopf.
\newblock Domain adaptation with conditional transferable components.
\newblock In {\em International conference on machine learning}, pages
  2839--2848, 2016.

\bibitem{islam2020likelihood}
MM~Faniqul Islam, Rahatara Ferdousi, Sadikur Rahman, and Humayra~Yasmin Bushra.
\newblock Likelihood prediction of diabetes at early stage using data mining
  techniques.
\newblock In {\em Computer Vision and Machine Intelligence in Medical Image
  Analysis}, pages 113--125. 2020.

\bibitem{johansson19a}
Fredrik~D. Johansson, David Sontag, and Rajesh Ranganath.
\newblock Support and invertibility in domain-invariant representations.
\newblock In Kamalika Chaudhuri and Masashi Sugiyama, editors, {\em Proceedings
  of Machine Learning Research}, volume~89 of {\em Proceedings of Machine
  Learning Research}, pages 527--536, 2019.

\bibitem{kalisch2020package}
Markus Kalisch, Martin M\"{a}chler, Diego Colombo, Marloes Maathuis, and Peter
  B\"{u}hlmann.
\newblock Causal inference using graphical models with the {R} package pcalg.
\newblock {\em Journal of Statistical Software}, pages 1--26, 2012.

\bibitem{kirkman2012diabetes}
M~Sue Kirkman, Vanessa~Jones Briscoe, Nathaniel Clark, Hermes Florez, Linda
  Haas, Jeffrey Halter, Elbert Huang, Mary Korytkowski, Medha Munshi,
  Peggy~Soule Odegard, et~al.
\newblock Diabetes in older adults.
\newblock {\em Diabetes care}, 35(12):2650--2664, 2012.

\bibitem{kisamori20a}
Keiichi Kisamori, Motonobu Kanagawa, and Keisuke Yamazaki.
\newblock Simulator calibration under covariate shift with kernels.
\newblock volume 108 of {\em Proceedings of Machine Learning Research}, pages
  1244--1253, 2020.

\bibitem{kouw2019review}
Wouter~Marco Kouw and Marco Loog.
\newblock A review of domain adaptation without target labels.
\newblock {\em IEEE transactions on pattern analysis and machine intelligence},
  2019.

\bibitem{lewis-1992-feature}
David~D. Lewis.
\newblock Feature selection and feature extraction for text categorization.
\newblock In {\em Speech and Natural Language: Proceedings of a Workshop Held
  at Harriman, New York, {F}ebruary 23-26, 1992}, 1992.

\bibitem{li20b}
Fengpei Li, Henry Lam, and Siddharth Prusty.
\newblock Robust importance weighting for covariate shift.
\newblock volume 108 of {\em Proceedings of Machine Learning Research}, pages
  352--362, Online, 26--28 Aug 2020. PMLR.

\bibitem{li19b}
Yitong Li, Michael Murias, Samantha Major, Geraldine Dawson, and David Carlson.
\newblock On target shift in adversarial domain adaptation.
\newblock Proceedings of Machine Learning Research, pages 616--625, 2019.

\bibitem{Lipton18}
Zachary~C. Lipton, Yu{-}Xiang Wang, and Alexander~J. Smola.
\newblock Detecting and correcting for label shift with black box predictors.
\newblock In Jennifer~G. Dy and Andreas Krause, editors, {\em Proceedings of
  the 35th International Conference on Machine Learning, {ICML} 2018,
  Stockholmsm{\"{a}}ssan, Stockholm, Sweden, July 10-15, 2018}, volume~80 of
  {\em Proceedings of Machine Learning Research}, pages 3128--3136. {PMLR},
  2018.

\bibitem{MagliacaneNIPS18}
Sara Magliacane, Thijs van Ommen, Tom Claassen, Stephan Bongers, Philip
  Versteeg, and Joris~M. Mooij.
\newblock Domain adaptation by using causal inference to predict invariant
  conditional distributions.
\newblock In {\em Proceedings of the 32nd International Conference on Neural
  Information Processing Systems}, NIPS’18, page 10869–10879, 2018.

\bibitem{Margaritis2003}
Dimitris Margaritis.
\newblock {\em Learning Bayesian Network Model Structure from Data}.
\newblock PhD thesis, Carnegie-Mellon University, 2003.

\bibitem{Margaritis99}
Dimitris Margaritis and Sebastian Thrun.
\newblock {B}ayesian network induction via local neighborhoods.
\newblock In {\em Proceedings of the NIPS'99}, pages 505--511, 1999.

\bibitem{moreno2012unifying}
Jose~G Moreno-Torres, Troy Raeder, Roc{\'\i}O Alaiz-Rodr{\'\i}Guez, Nitesh~V
  Chawla, and Francisco Herrera.
\newblock A unifying view on dataset shift in classification.
\newblock {\em Pattern recognition}, 45(1):521--530, 2012.

\bibitem{neal2020introduction}
Brady Neal.
\newblock Introduction to causal inference from a machine learning perspective.
\newblock {\em Course Lecture Notes (draft)}, 2020.

\bibitem{pan2010domain}
Sinno~J. Pan, Ivor Tsang, James~T Kwok, and Qiang Yang.
\newblock Domain adaptation via transfer component analysis.
\newblock {\em IEEE Transactions on Neural Networks}, pages 199--210, 2010.

\bibitem{park20b}
Sangdon Park, Osbert Bastani, James Weimer, and Insup Lee.
\newblock Calibrated prediction with covariate shift via unsupervised domain
  adaptation.
\newblock volume 108 of {\em Proceedings of Machine Learning Research}, pages
  3219--3229, Online, 26--28 Aug 2020. PMLR.

\bibitem{Pearl09}
J.~Pearl.
\newblock {\em Causality. Models, reasoning, and inference}.
\newblock Cambridge University Press, 2009.

\bibitem{BareinboimPearl11}
Judea Pearl and Elias Bareinboim.
\newblock Transportability of causal and statistical relations: A formal
  approach.
\newblock In {\em Proceedings of the 25th AAAI Conference on Artificial
  Intelligence}, pages 247--254, 2011.

\bibitem{pedregosa2011scikit}
Fabian Pedregosa, Ga{\"e}l Varoquaux, Alexandre Gramfort, Vincent Michel,
  Bertrand Thirion, Olivier Grisel, Mathieu Blondel, Peter Prettenhofer, Ron
  Weiss, Vincent Dubourg, et~al.
\newblock Scikit-learn: Machine learning in python.
\newblock {\em the Journal of machine Learning research}, 12:2825--2830, 2011.

\bibitem{Pena08Mb}
Jose~M. Pe{\~{n}}a.
\newblock Learning {G}aussian graphical models of gene networks with false
  discovery rate control.
\newblock In {\em Evolutionary Computation, Machine Learning and Data Mining in
  Bioinformatics}, pages 165--176, 2008.

\bibitem{peters2016causal}
Jonas Peters, Peter B{\"u}hlmann, and Nicolai Meinshausen.
\newblock Causal inference by using invariant prediction: identification and
  confidence intervals.
\newblock {\em Journal of the Royal Statistical Society. Series B (Statistical
  Methodology)}, pages 947--1012, 2016.

\bibitem{pfister2019invariant}
Niklas Pfister, Peter B{\"u}hlmann, and Jonas Peters.
\newblock Invariant causal prediction for sequential data.
\newblock {\em Journal of the American Statistical Association},
  114(527):1264--1276, 2019.

\bibitem{pfister2019stabilizing}
Niklas Pfister, Evan~G Williams, Jonas Peters, Ruedi Aebersold, and Peter
  B{\"u}hlmann.
\newblock Stabilizing variable selection and regression.
\newblock {\em arXiv preprint arXiv:1911.01850}, 2019.

\bibitem{quinlan1986induction}
J.~Ross Quinlan.
\newblock Induction of decision trees.
\newblock {\em Machine learning}, 1(1):81--106, 1986.

\bibitem{redko19a}
Ievgen Redko, Nicolas Courty, R\'emi Flamary, and Devis Tuia.
\newblock Optimal transport for multi-source domain adaptation under target
  shift.
\newblock volume~89 of {\em Proceedings of Machine Learning Research}, pages
  849--858. PMLR, 16--18 Apr 2019.

\bibitem{richardson2003markov}
Thomas Richardson.
\newblock Markov properties for acyclic directed mixed graphs.
\newblock {\em Scandinavian Journal of Statistics}, 30(1):145--157, 2003.

\bibitem{rojas2018invariant}
Mateo Rojas-Carulla, Bernhard Sch{\"o}lkopf, Richard Turner, and Jonas Peters.
\newblock Invariant models for causal transfer learning.
\newblock {\em The Journal of Machine Learning Research}, 19(1):1309--1342,
  2018.

\bibitem{sadeghi2017faithfulness}
Kayvan Sadeghi.
\newblock Faithfulness of probability distributions and graphs.
\newblock {\em The Journal of Machine Learning Research}, 18(1):5429--5457,
  2017.

\bibitem{ICML2012Schoelkopf_625}
Bernhard Schoelkopf, Dominik Janzing, Jonas Peters, Eleni Sgouritsa, Kun Zhang,
  and Joris Mooij.
\newblock On causal and anticausal learning.
\newblock In John Langford and Joelle Pineau, editors, {\em Proceedings of the
  29th International Conference on Machine Learning (ICML-12)}, ICML '12, pages
  1255--1262, 2012.

\bibitem{Scutari17}
Marco Scutari.
\newblock {B}ayesian network constraint-based structure learning algorithms:
  Parallel and optimized implementations in the bnlearn {R} package.
\newblock {\em Journal of Statistical Software, Articles}, 77(2):1--20, 2017.

\bibitem{bnlearn}
Marco Scutari.
\newblock bnlearn - an {R} package for {B}ayesian network learning and
  inference.
\newblock \url{https://www.bnlearn.com/bnrepository/}, 2021.

\bibitem{SHIMODAIRA2000}
Hidetoshi Shimodaira.
\newblock Improving predictive inference under covariate shift by weighting the
  log-likelihood function.
\newblock {\em Journal of Statistical Planning and Inference}, 90(2):227 --
  244, 2000.

\bibitem{steinwart2001influence}
Ingo Steinwart.
\newblock On the influence of the kernel on the consistency of support vector
  machines.
\newblock {\em Journal of machine learning research}, 2(Nov):67--93, 2001.

\bibitem{stojanov19b}
Petar Stojanov, Mingming Gong, Jaime Carbonell, and Kun Zhang.
\newblock Data-driven approach to multiple-source domain adaptation.
\newblock volume~89 of {\em Proceedings of Machine Learning Research}, pages
  3487--3496. PMLR, 16--18 Apr 2019.

\bibitem{stojanov19a}
Petar Stojanov, Mingming Gong, Jaime Carbonell, and Kun Zhang.
\newblock Low-dimensional density ratio estimation for covariate shift
  correction.
\newblock volume~89 of {\em Proceedings of Machine Learning Research}, pages
  3449--3458. PMLR, 16--18 Apr 2019.

\bibitem{Storkey09}
Amos~J Storkey.
\newblock {\em When Training and Test Sets Are Different: Characterizing
  Learning Transfer}, pages 3--28.
\newblock MIT Press, 2009.

\bibitem{subbaswamy2018counterfactual}
Adarsh Subbaswamy and Suchi Saria.
\newblock Counterfactual normalization: Proactively addressing dataset shift
  using causal mechanisms.
\newblock In Amir Globerson and Ricardo Silva, editors, {\em Proceedings of the
  Thirty-Fourth Conference on Uncertainty in Artificial Intelligence, {UAI}
  2018}, pages 947--957, 2018.

\bibitem{subbaswamy2019preventing}
Adarsh Subbaswamy, Peter Schulam, and Suchi Saria.
\newblock Preventing failures due to dataset shift: Learning predictive models
  that transport.
\newblock In {\em The 22nd International Conference on Artificial Intelligence
  and Statistics}, pages 3118--3127. PMLR, 2019.

\bibitem{sugiyama2008direct}
Masashi Sugiyama, Taiji Suzuki, Shinichi Nakajima, Hisashi Kashima, Paul von
  B{\"u}nau, and Motoaki Kawanabe.
\newblock Direct importance estimation for covariate shift adaptation.
\newblock {\em Annals of the Institute of Statistical Mathematics},
  60(4):699--746, 2008.

\bibitem{Tsamardinos0Mb}
Ioannis Tsamardinos, Constantin Aliferis, Alexander Statnikov, and
  Er~Statnikov.
\newblock Algorithms for large scale {M}arkov blanket discovery.
\newblock In {\em In The 16th International FLAIRS Conference, St}, pages
  376--380. AAAI Press, 2003.

\bibitem{kugelgen19a}
Julius von K\"{u}gelgen, Alexander Mey, and Marco Loog.
\newblock Semi-generative modelling: Covariate-shift adaptation with cause and
  effect features.
\newblock volume~89 of {\em Proceedings of Machine Learning Research}, pages
  1361--1369. PMLR, 16--18 Apr 2019.

\bibitem{wang2020dysfunctional}
Ting Wang, Sean~K Maden, Georg~E Luebeck, Christopher~I Li, Polly~A Newcomb,
  Cornelia~M Ulrich, Ji-Hoon~E Joo, Daniel~D Buchanan, Roger~L Milne, Melissa~C
  Southey, et~al.
\newblock Dysfunctional epigenetic aging of the normal colon and colorectal
  cancer risk.
\newblock {\em Clinical epigenetics}, 12(1):1--9, 2020.

\bibitem{weiss2016survey}
Karl Weiss, Taghi~M Khoshgoftaar, and DingDing Wang.
\newblock A survey of transfer learning.
\newblock {\em Journal of Big data}, 3(1):1--40, 2016.

\bibitem{wyner2017explaining}
Abraham~J Wyner, Matthew Olson, Justin Bleich, and David Mease.
\newblock Explaining the success of adaboost and random forests as
  interpolating classifiers.
\newblock {\em JMLR}, 18(1):1558--1590, 2017.

\bibitem{Yaramakala05}
S.~{Yaramakala} and D.~{Margaritis}.
\newblock Speculative {M}arkov blanket discovery for optimal feature selection.
\newblock In {\em Proceedings of the ICDM'05}, 2005.

\bibitem{Zhang15}
Kun Zhang, Mingming Gong, and Bernhard Scholkopf.
\newblock Multi-source domain adaptation: A causal view.
\newblock In {\em Proceedings of the Twenty-Ninth AAAI Conference on Artificial
  Intelligence}, AAAI'15, page 3150–3157. AAAI Press, 2015.

\bibitem{zhang2013domain}
Kun Zhang, Bernhard Sch{\"o}lkopf, Krikamol Muandet, and Zhikun Wang.
\newblock Domain adaptation under target and conditional shift.
\newblock In {\em International Conference on Machine Learning}, pages
  819--827, 2013.

\end{thebibliography}
\clearpage
\appendix

\section{Basic Definitions and Concepts}\label{sec:defs}
Assume that $G=(V, E)$ is a directed graph, where $ V = \{ v_1,v_2,v_3 \cdots v_n  \}$ is the set of nodes (variables), $n  \geq 1$, and $E$ is the set of directed or bidirected edges. We say $v_i$ is a \textit{parent} of $v_j$ and $v_j$ is a \textit{child} of $v_i$ if $v_i  \rightarrow v_j $ is an edge in $G$.  We denote the set of parents and  children of a variable $v$ by $ \pa(v)$ and $\ch(v)$, respectively. Any bidirected edge $ v_i \leftrightarrow v_j $ means that there exists a node $v_k\not\in V$ as a hidden confounder, such that $v_i\gets v_k\to v_j$. If $G$ is acyclic, then $G$ is an \textit{Acyclic Directed Mixed Graph} (ADMG). Formally, $\nbr(T)=\pa(T)\cup \ch(T)\cup\{v\in V|v\leftrightarrow T\textrm{ is a bidirected edge in } G\}$ refers to the \textit{neighbours} of $T$. We define \textit{spouses} of $v$ as $\spouse(v)=\{u\in V|\exists w\in V \textrm{ s.t. }u\to w\gets v \textrm{ in }G\}$. We define \textit{Markov blanket} of node $T$ as $\mb(T)=\pa(T) \cup \ch(T)\cup\spouse(T)$ when $G$ is a directed acyclic graph (DAG) and the set of children, parents, and spouses of $T$, and vertices connected with $T$ or
children of $T$ by a bidirected path (i.e., only with edges $\leftrightarrow$) and their respective parents is the
Markov blanket of $T$ when $G$ is an ADMG. 

A path of length $n$ from $x$ to $y$ in an ADMG $G=(V,E)$ is a sequence $x=a_0,\dots , a_n=y$ of distinct vertices such that $(a_i,a_{i+1})\in E$, for all $i=1,\dots ,n$. A vertex $\alpha$ is said to be an \emph{ancestor} of a vertex $\beta$ if either there is a directed path $\alpha \to \dots \to \beta$ from $\alpha$ to $\beta$, or $\alpha=\beta$. We apply this definition to sets: $\an(X) = \{\alpha | \alpha \textrm{ is an ancestor of } \beta \textrm{ for some } \beta \in X\}$. 
\begin{definition}\label{RS}
	A nonendpoint vertex $\zeta$ on a path is a \emph{collider} on the path if the edges preceding and succeeding $\zeta$ on the path have an arrowhead at $\zeta$, that is, $\to \zeta \gets, or \leftrightarrow \zeta \leftrightarrow, or\leftrightarrow \zeta \gets, or\to \zeta \leftrightarrow$. A nonendpoint vertex $\zeta$ on a path which is not a collider is a noncollider on the path. A path between vertices $\alpha$ and $\beta$ in an ADMG G is said to be \emph{m}-connecting given a set Z (possibly empty), with $\alpha, \beta \notin Z$, if: 
	
	\noindent (i) every noncollider on the path is not in Z, and 
	
	\noindent (ii) every collider on the path is in $\an_G(Z)$.
	
	If there is no path m-connecting $\alpha$ and $\beta$ given Z, then $\alpha$ and $\beta$ are said to be \emph{$m$-separated} given Z. Sets X and Y are \emph{m}-separated given Z, if for every pair $\alpha, \beta$, with $\alpha\in X$ and $\beta \in Y$, $\alpha$ and $\beta$ are \emph{m}-separated given $Z$ (X, Y, and Z are disjoint sets; X, Y are nonempty). This criterion is referred to as a \emph{global Markov property}. We denote the independence model resulting from applying the \emph{m}-separation criterion to G, by $\Im_m$(G). This is an extension of Pearl's $d$-separation criterion to mixed graphs in that in a DAG D, a path is $d$-connecting if and only if it is m-connecting.
\end{definition}
\begin{definition}
	Let $G_A$ denote the induced subgraph of $G$ on the vertex set $A$, formed by removing from $G$ all vertices that are not in $A$, and all edges that do not have both endpoints in $A$. Two vertices $x$ and $y$ in an ADMG $G$ are said to be collider connected if there is a path from $x$ to $y$ in $G$ on which every non-endpoint vertex is a collider; such a path is called a collider path. (Note that a single edge trivially forms a collider path, so if $x$ and $y$ are adjacent in an ADMG then they are collider connected.) The \emph{augmented graph} derived from $G$, denoted $(G)^a$, is an undirected graph with the same vertex set as $G$ such that $c\--d \textrm{ in } (G)^a \Leftrightarrow c \textrm{ and } d \textrm{ are collider connected in } G.$
\end{definition}

\begin{definition}\label{RS2}
	Disjoint sets $X, Y\ne \emptyset,$ and $Z$ ($Z$ may be empty) are said to be
	\emph{$m^\ast$-separated} if $X$ and $Y$ are separated by Z in $(G_{\an(X\cup Y\cup Z)})^a$. Otherwise, $X$ and $Y$ are said to be $m^\ast$-connected
	given $Z$. The resulting independence model is denoted by $\Im_{m^\ast}(G)$.
\end{definition}
Richardson  in \cite[Theorem 1]{richardson2003markov} shows that for an ADMG $G$, $\Im_m(G)=\Im_{m^\ast}(G)$.

The \textit{Markov condition} is said to hold for $G = (V,E)$ and a probability distribution $P(V)$ if $\langle G,P\rangle$ satisfies the following implication:
$\forall X,Y\in V, \forall Z\subseteq V\setminus\{X,Y\}:(X{\!\perp\!\!\!\perp}_{m} Y |Z \Longrightarrow X{\!\perp\!\!\!\perp}_{p} Y |Z)$.
The \textit{faithfulness condition} states that the only conditional independencies to hold are those specified by the Markov condition, formally:
$\forall X,Y\in V, \forall Z\subseteq V\setminus\{X,Y\}: (X{\not\!\perp\!\!\!\perp}_{m} Y |Z \Longrightarrow X{\not\!\perp\!\!\!\perp}_{p} Y |Z)$. 

\section{Proofs of Theoretical Results}\label{app:A}
To prove Theorem \ref{thm:rctl}, we need the following definitions and propositions.
\begin{definition}[District and Induced Markov Blanket \cite{richardson2003markov}]\label{def:imb}
Let $G$ be an acyclic directed mixed graph. We first specify a total ordering $(\prec)$ on the vertices of $G$, such that $x\prec y \Rightarrow y\not\in \an(x)$; such an ordering is said to be consistent with $G$. Let $pre_{G,\prec}=\{v|v\prec x\textrm{ or }v=x\}$. The \emph{district} of $x$ in $G$ is $x$ and the set of vertices connected to
$x$ by a path on which every edge is of the form $\leftrightarrow$, denoted $dis_G(x)$. So,
$dis_G(x)=\{v|v\leftrightarrow\cdots\leftrightarrow x\in G\textrm{ or }v=x\}$. A set $A$ is said to be \emph{ancestral} if it is closed under the ancestor relation, i.e., if $\an(A)=A$. If $A$ is an ancestral set in an ADMG $G$, and $x$ is a vertex in $A$ that has \textbf{no} children in $A$ then we
define the Markov blanket of a vertex $x$ with respect to the \emph{induced subgraph} on $A$, called \emph{induced Markov blanket}, as the following:
$imb_A(x)=\pa_{G_A}(dist_{G_A}(x))\cup(dist_{G_A}(x)\setminus\{x\})$. It is not difficult to see that if $A=\an_G(T)$, then $imb_A(T)\subseteq \mb(T)$ by the definition of $\mb(T)$ in an ADMG.
\end{definition}
\begin{proposition}\label{prop:ancestor}
Given two nodes $X$ and $Y$ in an ADMG $G$ and a set $S$ of nodes not	containing $X$ and $Y$, there exists some subset of $S$ which $m$-separates $X$ and $Y$ if only if the set $S'=S\cap \an(X\cup Y)$ $m$-separates $X$ and $Y$.
\end{proposition}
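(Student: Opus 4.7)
The plan is to reduce the statement to ordinary vertex separation in the undirected augmented graph, using the $m^\ast$-separation characterisation (Definition \ref{RS2} together with the equality $\Im_m(G) = \Im_{m^\ast}(G)$ cited immediately after it from Richardson). The $(\Leftarrow)$ direction is immediate because $S' = S \cap \an(X \cup Y)$ is itself a subset of $S$. For the non-trivial direction, suppose some $S'' \subseteq S$ $m$-separates $X$ from $Y$, and set $A := \an(X \cup Y)$ and $B := \an(X \cup Y \cup S'')$, so that $A \subseteq B$. By the $m^\ast$-characterisation, $S''$ separates $X$ from $Y$ in the undirected graph $(G_B)^a$.

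The first step is to show that $S'' \cap A$ separates $X$ from $Y$ in $(G_A)^a$. The key observation is that every collider path in $G_A$ is automatically a collider path in $G_B$: enlarging the vertex set from $A$ to $B$ does not alter the edges between vertices of $A$, and colliderness of an internal vertex on a given path is determined solely by the two adjacent edges on that path. Hence every edge of $(G_A)^a$ is also an edge of $(G_B)^a$. Any hypothetical undirected path from $X$ to $Y$ in $(G_A)^a$ that avoids $S'' \cap A$ would therefore lift to an undirected path in $(G_B)^a$ whose vertices all lie in $A$; since those vertices avoid $S'' \cap A$ and lie in $A$, they avoid all of $S''$, contradicting the separation already established in $(G_B)^a$.

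Next, observe that $S'' \cap A \subseteq S \cap A = S'$ and that $\{X,Y\} \cap S = \emptyset$, so in particular $\{X,Y\} \cap S' = \emptyset$. In any undirected graph, enlarging a separator by vertices distinct from the two endpoints cannot destroy separation, so $S'$ also separates $X$ from $Y$ in $(G_A)^a$. To close the argument, use $S' \subseteq A$, which forces $\an(X \cup Y \cup S') = A$; hence separation of $X$ from $Y$ by $S'$ in $(G_A)^a$ is exactly $m^\ast$-separation in $G$, and by Richardson's theorem this coincides with $m$-separation. This yields the claim.

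The main obstacle I anticipate is the bookkeeping around the inclusion $(G_A)^a \subseteq (G_B)^a$ on the common vertex set $A$: one must carefully verify that every non-endpoint vertex on a collider path in $G_A$ remains a collider when the path is reread inside $G_B$, and that no collider path in $G_A$ is silently destroyed by the passage to the smaller induced subgraph. An alternative, more elementary route is to choose an $m$-connecting path given $S'$ all of whose vertices lie in $\an(X \cup Y \cup S') = A$ (the standard ``ancestral path'' lemma) and derive a direct contradiction with $S''$; this approach stumbles at colliders, however, since colliders on such a path are guaranteed to lie in $\an(S')$ but not a priori in $\an(S'')$, so the augmented-graph detour is the cleaner route.
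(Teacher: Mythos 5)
Your proof is correct and follows essentially the same route as the paper's: both reduce $m$-separation to vertex separation in the augmented graphs via Richardson's $\Im_m(G)=\Im_{m^\ast}(G)$, exploit that $S'\subseteq\an(X\cup Y)$ forces $\an(X\cup Y\cup S')=\an(X\cup Y)$, and use the inclusion $(G_{\an(X\cup Y)})^a\subseteq(G_{\an(X\cup Y\cup S'')})^a$ together with the observation that a path through $\an(X\cup Y)$ avoiding $S\cap\an(X\cup Y)$ avoids all of $S$. The only difference is organizational: the paper argues the contrapositive (a connecting path given $S'$ lifts to one given any $S''\subseteq S$), while you argue the direct implication by first restricting the separator to $S''\cap\an(X\cup Y)$ and then enlarging it to $S'$ by monotonicity of undirected separation.
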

\begin{proof}
	($\Rightarrow$) Proof by contradiction. Let $S'=S\cap \an(X\cup Y)$ and ${\langle X, Y \not| S'\rangle}$, i.e., $S'$ does not $m$-separates $X$ and $Y$ in $G$. Since $S'\subseteq \an(X\cup Y)$, it is obvious that $\an(X\cup Y\cup S')=\an(X\cup Y)$. So, $X$ and $Y$ are not separated by $S'$ in $(G_{\an(X\cup Y)})^a$, hence there is an undirected path $C$ between $X$ and $Y$ in $(G_{\an(X\cup Y)})^a$ that bypasses $S'$ i.e., the path $C$ is formed from nodes in $\an(X\cup Y)$ that are outside of $S$. Since $\an(X\cup Y)\subseteq \an(X\cup Y\cup S''), \forall S''\subseteq S$, then $(G_{\an(X\cup Y)})^a$ is a subgraph of $(G_{\an(X\cup Y\cup S)})^a$. Then, the previously found path $C$ is also a path in $(G_{\an(X\cup Y\cup S'')})^a$ that bypasses $S''$, which means that $X$ and $Y$ are not separated by any $S''\subseteq S$ in $(G_{\an(X\cup Y\cup S'')})^a$, which is a contradiction.
	
	\noindent ($\Leftarrow$) It is obvious.	
\end{proof}

Now, we prove Theorem \ref{thm:rctl}.
\begin{proof}[Proof of Theorem \ref{thm:rctl}] 
	Let $G$ be a causal graph with variable set $V$ consisting of system variables $X_{j \in J}$ and context variables $C_{i \in I}$. Assume that $C\in C_{i \in I}$, $T$ is the target variable and $\mb(T)$ is the Markov blanket of $T$.
	We have two cases: 
	\begin{itemize}
	    \item $C\not\in\mb(T)$: In this case, $C{\!\perp\!\!\!\perp}T|\mb(T)$ due to the definition of  Markov blanket.
	    \item$C\in\mb(T)$. In this case we show that considering the CDA assumptions, every path between $C$ and $T$ is blocked by $imb_A(T)$, where $A=\an_G(T)$, or there is no subset of variables that $m$-separates $C$ from $T$. Proposition \ref{prop:ancestor} implies that in order to find an $m$-separating set between $C$ and $T$ it is enough to restrict our search to the set $\an_G(C\cup T)=\{C\}\cup \an_G(T)$. Assume $A=\an_G(T)$, we have the following subcases:
	\begin{itemize}
		\item[(a)] $C\in A$ and $C\not\in imb_A(T)$. Using the ordered local Markov property for $G$ and Theorem 2 in \cite{richardson2003markov} implies that 
		$C{\!\perp\!\!\!\perp}_mT|imb_A(T)$. Note that $imb_A(T)\subseteq \mb(T)$ by the definition of $\mb(T)$.
		\item[(b)] $C\not\in A$ and $C\not\in imb_A(T)$. Since $imb_A(T)\subseteq A$ and $A$ is an ancestral set, then $\an_G(imb_A(T))=A$ and  $\an_G(C\cup T\cup imb_A(T))=\{C\}\cup \an_G(T)$. So, if there is an undirected path between $T$ and $C$ in $(G_{\an_G(C\cup T\cup imb_A(T))})^a$, Lemma 4 in \cite{richardson2003markov} implies that this path intersects with the set $imb_A(T)$. In other words, $C{\!\perp\!\!\!\perp}_mT|imb_A(T)$.
		\item[(c)] $C\in imb_A(T)$. Using Definition \ref{def:imb} implies that $C\in A$. In this case, there is no subset of variables that $m$-separates $C$ from $T$. Proof by contradiction: Assume that there is a subset of variables $Z$ that $m$-separates $C$ from $T$ in $G$. Proposition \ref{prop:ancestor} implies that $S=Z\cap\an_G(T\cup C)=Z\cap (\{G\}\cup\an_G(T))=Z\cap\an_G(T)$ must $m$-separate $T$ from $C$ in $G$. Since $C\in imb_A(T)$ and $imb_A(T)\subseteq \an_G(T)$, then it is not difficult to see that in $(G_{\an(T\cup C\cup S)})^a$, $C$ is directly connected to the vertex $T$. This means that $S$ does not $m$-separate $T$ from $C$ in $G$, which is a contradiction. 
		The following simple example illustrates such situations:
		\begin{figure}[h]
		\centering
			\begin{tikzpicture}[transform shape]
			\tikzset{vertex/.style = {shape=circle,inner sep=0pt,
					text width=5mm,align=center,
					draw=black,
					fill=white}}
			\tikzset{edge/.style = {->,> = latex',thick}}
			\node[vertex,thick] (c) at  (2,3) {$C$};
			\node[vertex,thick] (x) at  (2,1.5) {$Y$};
			\node[vertex,thick] (t) at  (2,0) {$T$};
			\node[vertex,thick] (y) at  (0,0) {$X$};
			\draw[thick,edge] (c) to (x);
			\draw[thick,edge] (x) to (t);
			\draw[thick,edge] (y) to (t);
			\draw[thick,edge] (y) to (x);
			\draw[thick,edge] (c) to (y);
			\draw[thick,edge] (t) to (y);
			\end{tikzpicture}
		\end{figure}
	\end{itemize}
	\end{itemize}

Note that in all cases, the Markov condition and faithful assumptions guarantee the correctness of independence relationships. As we have shown in all cases under the CDA assumptions, to find a separating set of features that $m$-separates $C$ from the target variable $T$ in the causal graph $G$, it is enough to restrict our search to the set of Markov blanket  $\mb(T)$ of the target variable $T$. In the case that $C_{i, i \in I}$ has more than one element, a similar argument can be used to prove the theorem.

Using any subset $S$ for prediction that
satisfies the $m$-separating set property, implies zero transfer bias. So, the best predictions
are then obtained by selecting a separating subset that also minimizes the source domain's risk (i.e.,
minimizes the incomplete information bias).
\end{proof}

\begin{proof}[Correctness of Algorithm \ref{alg:rctl}]
	The correctness of Algorithm \ref{alg:rctl} follows from Theorem \ref{thm:rctl}.
\end{proof}

\begin{proof}[Proof of Theorem \ref{thm:admg}]
	It is enough to show that for any $A\in V\setminus\{T,\textbf{Mb}(T)\}$, $T{\!\perp\!\!\!\perp}_mA|\textbf{Mb}(T)$. For this purpose, we prove that any path between $A$ and $T$ in $G$ is blocked by $\textbf{Mb}(T)$. In the following cases ($A-{\!\!\!*}B$, where means $A-B$ or $A\to B$ and $A{{*\!\!\!}-{\!\!\!*}}B$ means $A-B$, $A\to B$, or $A\gets B$) we have:
	\begin{enumerate}
		\item The path $\rho$ between $A$ and $T$ is of the form $A{{*\!\!\!}-{\!\!\!*}}\cdots {{*\!\!\!}-{\!\!\!*}}B\to T$. Clearly, $B\in \mb(T)$ blocks the path $\rho$.
		\item The path $\rho$ between $A$ and $T$ is of the form $A{{*\!\!\!}-{\!\!\!*}}\cdots{{*\!\!\!}-{\!\!\!*}} C\gets B\gets T$. Clearly, $B\in \mb(T)$ blocks the path $\rho$.
		\item The path $\rho$ between $A$ and $T$ is of the form $A{{*\!\!\!}-{\!\!\!*}}\cdots{{*\!\!\!}-{\!\!\!*}} C\to B\gets T$. $C\in \mb(T)$ blocks the path $\rho$.
		\item The path $\rho$ between $A$ and $T$ is of the form $A{{*\!\!\!}-{\!\!\!*}}\cdots{{*\!\!\!}-{\!\!\!*}}C\to D\leftrightarrow \cdots\leftrightarrow B\gets T$, where $\omega=C\to D\leftrightarrow \cdots\leftrightarrow B\gets T$ is the largest collider path between $T$ and a node on the path $\rho$. Since all nodes of $\omega$ are in the Markov blanket of $T$, $\forall X\in \omega, X\ne A$. So, $C\in \mb(T)$ blocks the path $\rho$.
		\item The path $\rho$ between $A$ and $T$ is of the form $A{{*\!\!\!}-{\!\!\!*}}\cdots{{*\!\!\!}-{\!\!\!*}}C\to D\leftrightarrow \cdots\leftrightarrow B\leftrightarrow T$, where $\omega=C\to D\leftrightarrow \cdots\leftrightarrow B\gets T$ is the largest collider path between $T$ and a node on the path $\rho$. Since all nodes of $\omega$ are in the Markov blanket of $T$, $\forall X\in \omega, X\ne A$. So, $C\in \mb(T)$ blocks the path $\rho$.
	\end{enumerate}
From the global Markov property, it follows that every $m$-separation relation in
$G$ implies conditional independence in every joint probability distribution $P$ that satisfies the
global Markov property for $G$. Thus, we have $T{\!\perp\!\!\!\perp}_p V\setminus\{T,\textbf{Mb}(T)\}|\textbf{Mb}(T)$.
\end{proof}

Now, we are ready to prove Theorem \ref{thm:Mbalgs}.
\begin{proof}[Sketch of proof of Theorem \ref{thm:Mbalgs}]
If a variable belongs to $\textbf{Mb}(T)$, then it will be admitted in the first step (Growing phase) at some point, since it will be dependent on $T$ given the candidate set of  $\textbf{Mb}(T)$.  This holds because of the causal faithfulness and because the set $\textbf{Mb}(T)$
is the minimal set with that property. If a variable $X$ is not a
member of $\textbf{Mb}(T)$, then conditioned on $\textbf{Mb}(T)\setminus\{X\}$, it will be independent of $T$ and thus will
be removed from the candidate set of $\textbf{Mb}(T)$ in the second phase (Shrinking phase) because the causal Markov condition entails that independencies in the distribution are represented in the graph. Since the causal faithfulness condition entails dependencies in the distribution from the graph, we never remove any variable $X$ from the candidate set of $\textbf{Mb}(T)$ if $X\in\textbf{Mb}(T)$.  Using this
argument inductively, we will end up with the $\textbf{Mb}(T)$.
\end{proof}

In order to show the details of the proof of the Theorem \ref{thm:Mbalgs}, we prove only the correctness of the Grow-Shrink Markov blanket (GSMB) algorithm without causal sufficiency assumption in details as following (for the other algorithms listed in Theorem \ref{thm:Mbalgs}, a similar argument can be used):
\begin{proof}[Proof of Correctness of the GSMB Algorithm]
By “correctness” we mean that GSMB is able to produce the true Markov blanket of any variable in the
ground truth ADMG under Markov condition and the faithfulness assumption if all conditional independence tests done during its course are assumed to be correct.

\begin{algorithm}[ht]
	\caption{The GS Markov Blanket Algorithm \cite{Margaritis2003}.}\label{alg:gsmb}
	\footnotesize\KwIn{a set $V$ of nodes, a target variable $T$, and a probability distribution $p$ faithful to an unknown ADMG $G$.}
	\KwOut{The Markov blanket of $T$ i.e., $\mb(T)$.}
	Set $S=\emptyset$\;
	\tcc{Grow Phase:}
	\While{$\exists X \in V\setminus\{T\} \textrm{ such that } X{\not\!\perp\!\!\!\perp}_pT|S$}{
		$S\gets S\cup\{X\}$;
	}
	\tcc{Shrink Phase:}
	\While{$\exists X \in S \textrm{ such that } X{\!\perp\!\!\!\perp}_pT|S\setminus\{X\}$}{
		$S\gets S\setminus\{X\}$;
	}
	return ($\mb(T)\gets S$)\;
\end{algorithm}	

We first prove that there does not exist any variable $X\in\mb(T)$ at the end of the growing phase that is not in $S$. The proof is by induction (a semi-induction approach on a finite subset of natural numbers)  on the \textit{length} of the collider path(s) between $X$ and $T$. We define the length of a collider path between $X$ and $T$ as the number of edges between them. Let $s$ be the length of a largest collider path between $X$ and $T$.
\begin{itemize}
    \item (\textit{Base case}) For the base of induction, consider the set of adjacent of $T$ i.e., $\adj(T)=\{X\in V|X\to T,X\gets T,\textrm{ or } X\leftrightarrow T\}$. In this case, $\nexists S\subseteq V\setminus\{T,X\} \textrm{ such that } X{\!\perp\!\!\!\perp}_mT|S$ in $G$. The faithfulness assumption implies that $\nexists S\subseteq V\setminus\{T,X\} \textrm{ such that } X{\!\perp\!\!\!\perp}_pT|S$. So, at the end of the growth phase, all the adjacent of $T$ are in the candidate set for the Markov blanket of $T$.
    \item (\textit{Induction hypothesis}) For all $1\le n<s$, if there is a collider path between $X$ and $T$ of length, $n$ then $X\in\mb(T)$ at the end of the growing phase.
    \item (\textit{Induction step}) To prove the inductive step, we assume the induction hypothesis for $n=s-1$ and then use this assumption to prove that the statement holds for $s$. Assume that $\rho=(v_0=T){*\!\!\!\!}\to v_1\leftrightarrow\cdots\leftrightarrow v_{s-1}\gets{\!\!\!\!*}v_s$ is a collider path of length $s$. From the induction hypothesis, we know that $v_i\in\mb(T), \forall 1\le i<s$ at the end of the grow phase. Using Definition \ref{RS} implies that $\rho$ is $m$-connected to $T$. This means that at some point of the grow phase $v_s$ falls into the set $S$. The faithfulness assumption implies that $\exists S\subseteq V\setminus\{T,v_s\} \textrm{ such that } v_s{\not\!\perp\!\!\!\perp}_pT|S$. So, at the end of the grow phase, all of $X$'s that there is a collider path between $X$ and $T$ fall into the candidate set for the Markov blanket of $T$. 
\end{itemize}
For the correctness of the shrinking phase, we have to prove two things: (1) we never remove any variable $X$ from $S$ if $X\in\mb(T)$, and (2) if $X\not\in\mb(T)$, $X$ is removed in the shrink phase. 

Now, we prove case (1) by contradiction. Assume that $X\in\mb(T)$,  $X\in S$ at the end of the grow phase, and we remove $X$ from $S$ in the shrink phase. This means $X\!\perp\!\!\!\perp_p T|S\setminus\{X\}$. Using the faithful assumption implies that $X\!\perp\!\!\!\perp_m T|S\setminus\{X\}$ i.e., $S\setminus\{X\}$ $m$-separates $X$ from $T$ in $G$, which is a contradiction because there is a collider path between $X$ and $T$ and $\mb(T)\setminus\{X\}\subseteq S$. In other word, the collider path between $X$ and $T$ is $m$-connected by $S\setminus\{X\}$.

To prove the case (2), assume that $X\not\in\mb(T)$, $\textbf{Y}=S\setminus\{\mb(T)\}$, and $X\in \textbf{Y}$. Due to the Markov blanket property, $\mb(T)$ $m$-separates $\textbf{Y}$ from $T$ in $G$. Using the Markov condition implies that $\textbf{Y}\!\perp\!\!\!\perp_p T|\mb(T)$. Since the probability distribution $p$ satisfies the faithfulness assumption, it satisfies the \textit{weak union} condition \cite{sadeghi2017faithfulness}. We recall the weak union property here: $A\!\perp\!\!\!\perp_p BD | C \Rightarrow (A\!\perp\!\!\!\perp_p B | DC \textrm{ and }  A\!\perp\!\!\!\perp_p D | BC)$. Using the weak union property for $\textbf{Y}\!\perp\!\!\!\perp_p T|\mb(T)$ implies that $X\!\perp\!\!\!\perp_p T|(\mb(T)\cup(\textbf{Y}\setminus\{X\}))$, which is the same as $X\!\perp\!\!\!\perp_p T|(S\setminus\{X\})$. This means, $X\not\in \mb(T)$ will be removed at the end of the shrink phase.
\end{proof}

\section{Experimental Evaluation}\label{sec:eval}
We empirically validated the robustness and scalability of~\rctl~on both synthetic (10-20 variables) and real-world high-dimensional data (400k variables) by comparing with various feature selection approaches including Conditional Mutual Information Maximization (CMIM)~\cite{fleuret2004fast}, Mutual Information Maximization (MIM)~\cite{lewis-1992-feature}, Adaboost (Adast)~\cite{freund1999short}, Greedy Subset Search (GSS)~\cite{rojas2018invariant}, C4.5 using Decision Trees (C4.5 + DTR)~\cite{quinlan1986induction} and Exhaustive Subset Search (\ess)~\cite{MagliacaneNIPS18}.

\subsection{Experimental Settings}\label{sec:setting}
In this section, we mention the different criteria used for  data generation, parameter-tuning and validation. Based on changes in parameters and user dependent variables, we divide this section into three key parts. In each subsection we describe the combinations of hyperparameters used, why we used them and how they were implemented in practice.

\subsubsection{Synthetic Data Generation}
Let us consider an ADMG $G$ as shown in Figure~\ref{fig:Graph}. The basic synthetic dataset based on $G$ consists of 19 randomly generated variables, with 16 system variables, 2 context variables ($C_1$ \& $C_2$) and 1 unknown confounder ($U$) (to be removed while using data).  The data were generated in Gaussian as well as Discrete distributions.  In Figure~\ref{fig:Graph} the dashed lines and circles annotate the supplementary additions to the central structure which is represented by solid lines.  We generate a dataset (see Appendix \ref{sec:evalsupp} for details) with desired number of samples and faithful w.r.t $G$. This data (not the graph structure) is used further for our experiments. 
We generated multiple datasets using different configurations to simulate the behaviour across different domain changes. These configurations were carefully generated to exploit characteristics of real-world scenarios:\\
(1) \textbf{Changes to the sample size:} We generated distributions with high disparity in the “amount” of data. The generated datasets contained 50, 1000 and 10000 cases. This would help us to see the difference in accuracy and robustness of the feature selection algorithms under extreme data-sensitive conditions (i.e., low sample size, moderate size, big data). \\
(2) \textbf{Changes to the network size:} We change the size of the graph, by increasing or dropping the number of nodes to monitor change in performance, to ensure scalability of the approach to changes in the number of environment variable. For this, we consider 3 different network sizes 20, 12 and 8 nodes, as shown in Figure~\ref{fig:Graph}
(further instances in Appendix). 
We also experimented by making changes to and on the number of context variables. This was done by using variables that are either partly, or completely unaffected by them (e.g., in~Figure \ref{fig:Graph}, we can use the whole or parts of graph with nodes $M$, $N$, $E$ as target). \\
(3) \textbf{Changes to the complexity:} The difference in complexity of the dataset can be further divided into \textit{structural} change i.e., changing the edge connections or relations among the nodes (e.g., in~Figure~\ref{fig:Graph}, we can add \textit{Z} to induce affect of $C_1$ on the target $T$, or connect \textit{B} and \textit{P}, to affect the Markov blanket of $T$), and \textit{domain-distribution} change i.e., changes between the source and target domain. Since we consider two context variables, domain-distribution change can be brought about by changes in either one or both. 
The changes to the domain can be classified further into three sub-settings: smooth (very little change to context variable),  mild (small change in context variables) and severe (drastic change in context variables between source and target) by varying the mean and variance for Gaussian distributions and changing the probabilities associated with the variables in discrete distributions.
To ensure consistency in our results, for each setting, we generate 8 test datasets, by making slight changes to any one of the system variables at random. The slight change is made so that the algorithms used for prediction do not report the same error each time, instead an error range serves a better depiction of practical robustness. 

\subsubsection{Real-world Dataset: Low-dimensional}\label{sec:real_descp}
We used the diabetes dataset~\cite{islam2020likelihood} that reports common diabetic symptoms of 520 persons and contains 17 features, with 320 cases of diabetes Type II positive and 200 of diabetes negative patients. In the dataset, we consider two context variables, Age and Gender. We perform 3 sets of data shifts between training and testing domains using these context variables: (1) \textit{Gender Shift}: We train on the Male and test on the Female. (2) \textit{Age Shift}: We split the data using an arbitrary threshold (we choose 50 as threshold for our experimentation as it gives a fair distribution of samples between training and testing data), train on young (less than 50) and test on the old (greater than 50). (3) \textit{Double context shift}: In this case, we train on young male patients and test on old female patients. We trained by sampling 100 samples from the source domain, keeping the samples in target domain constant to generates conditions for multiple source domains and single target domain. Such conditions help further test the robustness of the feature selection algorithm.

\subsubsection{Real-world Dataset: High-dimensional}\label{cancerdata}
To assess ~\rctl~in a high-dimension, low-sample size real world setting, we experiment on the Colorectal Cancer Dataset \cite{wang2020dysfunctional}. It contains 334 samples assigned to three cancer risk groups (low, medium, and high) based on their personal adenoma or cancer history. The entire dataset contains $\sim$ 400k features. We perform 2 sets of possible data shifts between training and testing domains using these context variables: (1) \textit{Gender Shift}: We train on the Male and test on the Female. (2) \textit{Age Shift}: We split the data using an arbitrary threshold, in our experiments, train on young (less than 50) and test on the older (greater than 50). 

\subsection{Evaluation Metrics}
We tested~\rctl~and other algorithms using different regression techniques such as RFR, SVR, and kNNR, in order to mitigate doubts about approach specific biases. We found, for \rctl, the same error more or less in (almost) all cases, but we chose Random Forest Regressor as it not only performed well, but also, gave consistent results. Here, the  \textit{Baseline} indicates using all available features and feeding it to the Random Forest Regressor. We reported accuracy with Mean-Square-Error (MSE) and time taken in second. 


\section{Details of Experimental Evaluation}\label{sec:evalsupp}
The data generation processes, conditional independence tests and Markov blanket Algorithms were implemented in \textit{R} by extending the \textit{bnlearn}~\cite{Scutari17} and \textit{pclag}~\cite{kalisch2020package} packages. We used Python 3.6 with \textit{scikit-learn}~\cite{pedregosa2011scikit} library for implementing the above-mentioned feature selection and machine learning algorithms to compare our approach against. The library is also used for prediction over subset of features selected by~\rctl~using \textit{RandomForestRegressor} function. The generated predictions from each of the algorithms are compared to the actual results for finding the error and all experiments have been reported using various comparison metrics. 
\subsection*{Synthetic Data Generation}
For Gaussian setting, a model graph as shown in Figure \ref{fig:Graph} was generated using \textit{model2network} function from the \textit{graphviz} package.
The obtained DAG from Figure \ref{fig:Graph} is passed to the \textit{custom.fit} function from \textit{bnlearn} which defines the mean, variance and relationship (edges values) between the nodes. This process ensures that the generated dataset will be faithful w.r.t $G$. Once the graph is set and all variables have been properly defined, we use the \textit{rbn} function from \textit{bnlearn} to generate the required number of samples.

For the Discrete setting, the same process is followed, but instead of mean, variance and edge values, we use a matrix of random probabilities. For each node, the size of the matrix would be the number of discrete values for the node, plus the number of discrete values for each of its connecting nodes multiplied by the number of modes connected to it.

\subsubsection*{Implementation Parameters}
\rctl~requires the use of Markov-blanket and neighborhood for feature selection. We use multiple Markov blanket discovery-algorithms to evaluate the effect the algorithm-choice can have on final prediction. For this purpose, we select, GS, IAMB, interIAMB, fastIAMB, fdrIAMB, MMMB, SI.HINTON.MB. These algorithms were implemented in \textit{R} using the~\textit{bnlearn} package. The variable \textit{subs} mentioned on line 12 of Algorithm \ref{alg:rctl} contains all possible combination of subsets from the feature set $S_2$. To find all possible subsets of a given feature set we use Python 3.5, one or more of these subsets will act as the separating set. To find the separating set, we sort the subsets based on $p_{value}$ of conditional independence tests and choosing the subset/s with the highest score. We use a significance level $\alpha$ of 0.05 as threshold for the conditional independence tests. The tests for different configurations are based on the type of data being used. For Gaussian data, we used \textit{gaussCItest} from the \textit{pclag} package, which uses Fisher’s z-transformation of the partial correlation, for testing correlation for sets of normally distributed random variables. For discrete data, we use the mutual information (\textit{mi}) test (an information-theoretic distance measure, which is somewhat proportional to the log-likelihood ratio) from the \textit{bnlearn} package.
\paragraph{Change to the complexity}
As an example, for a central structure consist of 8 nodes as shown in Figure~\ref{fig:Graph} having 10000 samples with severe changes in context, we train by generating one dataset on the base setting. Then make severe changes to the domain-distribution (i.e., by drastically changing the values of the context variables) and then generate a new dataset which is used for testing. 
\subsection*{Evaluation Metrics}
We used the RandomForestRegressor function from \textit{scikit-learn} package, using the out-of-bag (OOB) scoring approach on the default parameters, for this process. 

\section{More Experimental Results}\label{sec:moreresults}

In this section, we include the remaining results that we got during experimentation, along with a subsequent ground truth graph for representing the various settings we experiment on  synthetic data. In the later part of this section we also include t-test tables as mentioned in Section~\ref{sec:results}. More experimental results regarding synthetic data can be found at the end of this appendix.

\subsection{Real-world Dataset: Low-dimensional}\label{appendix:diabetes}
We used the diabetes dataset~\cite{islam2020likelihood} that contains reports of common diabetic symptoms of 520 persons. This includes data about symptoms that may cause or are potentially caused by diabetes. The dataset has been created from a direct questionnaire to people who have either recently been diagnosed as diabetic, or who are still non-diabetic but having show few or more symptoms of diabetes. The diabetes dataset contains a total of 17 features, with 320 cases of diabetes Type II positive and 200 of diabetes negative patients. The data has been collected from the patients of the Sylhet Diabetes Hospital, Sylhet, Bangladesh. In the dataset, we consider two context variables, Age, and Gender. We perform 3 sets of data shifts between training and testing domains using these context variables: (1) \textit{Gender Shift}: We train on the Male and test on the Female. (2) \textit{Age Shift}: We split the data using an arbitrary threshold (we choose 50 as threshold for our experimentation as it gives a fair distribution of samples between training and testing data), train on young (less than 50) and test on the older (greater than 50). (3) \textit{Double context shift}: We do a dataset shift for both context variables, in our case, we train on young male patients and test on old female patients. In practice, while experimenting on a context shift setting we train by sampling 100 samples from the source domain, keeping the samples in the target domain constant. This allows us to generate conditions similar to having multiple source domains and single target domain. Such conditions will help further test the robustness of the feature selection algorithm.

\subsection{Real-world Dataset: High-dimensional}\label{appendix:cancerdata}
To assess ~\rctl~ in a high-dimension, low-sample size real world setting, we experiment on the Colorectal Cancer Dataset \cite{wang2020dysfunctional}. Chronological age is a prominent risk factor for many types of cancers, including Colorectal Cancer, on which this dataset is based. The dataset is basically a genome-wide DNA methylation study on samples of normal colon mucosa, it contains 334 samples assigned to three cancer risk groups (low, medium, and high) based on their personal adenoma or cancer history. Our target variable for classification was the `Diagnosis' feature, a binary variable which indicates whether or not the patient was diagnosed with cancer, the entire dataset contains \textasciitilde 400k features. We again perform 2 sets of data shifts between training and testing domains using these context variables: (1) \textit{Gender Shift}: We train on the Male and test on the Female. (2) \textit{Age Shift}: We split the data using an arbitrary threshold (we choose 50 as threshold for our experimentation as it gives a fair distribution of samples between training and testing data), train on young (less than 50) and test on the older (greater than 50).

\begin{table}[ht]
\caption{Cancer dataset (1000 ranked features) result} 
\centering 
\begin{tabular}{l c c} 
\hline\hline 
Methodology & Accuracy & F1-Score \\ [0.5ex] 
\hline 
Baseline &  91.87	& 91.70  \\
~\rctl~(GS) (167 features) & 87.50	& 87.18  \\
~\rctl~(IAMB) (13 features) & 96.82 &	97.22  \\
CMIM + SVC & 87.5 & 87.06	  \\
Adaboost & \textbf{96.87} &	\textbf{97.83}   \\ [1ex] 
\hline 
\end{tabular}
\label{table:cancer1000} 
\end{table}

\begin{table}[ht]
\caption{Cancer dataset (10k ranked features) result} 
\centering 
\begin{tabular}{l c c} 
\hline\hline 
Methodology & Accuracy & F1-Score \\ [0.5ex] 
\hline 
Baseline &  91.25	& 90.94  \\
~\rctl~(GS) (167 features)& 88.125	& 87.88  \\
~\rctl~(IAMB) (79 features)& \textbf{97.5} &	97.65  \\
CMIM + SVC & 90.15 & 89.79	  \\
Adaboost & 96.87 &	\textbf{97.83}   \\ [1ex] 
\hline 
\end{tabular}
\label{table:cancer10k}
\end{table}


\subsection{\rctl~vs Unsupervised Domain Adaptation TCA}
\textbf{T}ransfer \textbf{C}omponent \textbf{A}nalysis (\textbf{TCA}) is an unsupervised technique for domain adaptation proposed by Pan et al. \cite{pan2010domain} that discovers common latent features, called \textit{transfer components}, that have the same marginal distribution across the source and target domains while maintaining the essential structure of the source domain data \cite{weiss2016survey}. TCA learns these transfer components across domains in a reproducing kernel Hilbert space \cite{steinwart2001influence} using maximum mean miscrepancy \cite{borgwardt2006integrating}. Once the transfer components are found, a traditional machine learning technique is used to train the final target classifier. In short, TCA is a two-step
domain adaptation technique where the first step reduces the marginal distributions between the source and target domains and the second step trains a classifier with the adapted domain data. Here, we compare \rctl~against TCA, which is a state-of-the-art
algorithm in unsupervised domain adaptation, and report the results. Note that, in TCA the focus is to identify substructures (i.e., transfer components)
that remains invariant across domains, which is different from the problem of identifying invariant features used
for training a predictive model, in which we do not need to have prior knowledge about the causal structure of the model that we are interested in. For implementation of the TCA we used, the \textit{rbf} kernel, and the \textit{logistic} classifier to predict on the extracted components. We also tune the hyperparameters of the algorithm by using Support Vector Classifier and increasing the number of components from 1 to 5, which lead to further decrease in the accuracy of the model. We hypothesize that TCA cannot learn the transfer components correctly in this setting. 
Further investigation by comparing with semi-supervised versions \cite{pan2010domain} as well as strategies that also incorporate sample reweighing are left for future work.

\begin{figure}[ht]
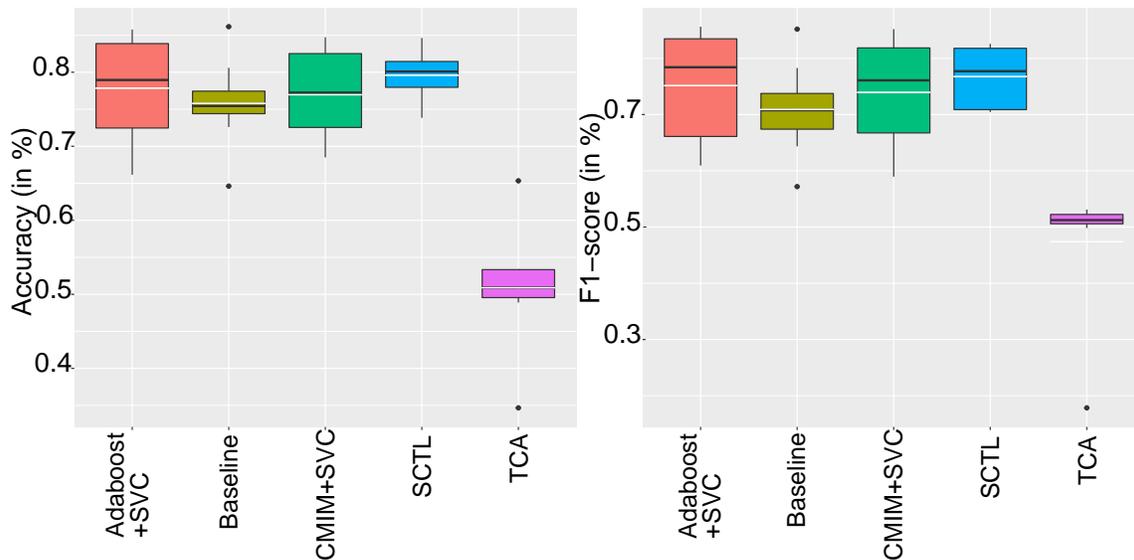

	\includegraphics[width=.5\linewidth]{images/Plot_acc_1.pdf}%
	\includegraphics[width=.5\linewidth]{images/Plot_f1_1.pdf}
	\caption{ Error rate comparison of multiple feature selection approaches (supervised and unsupervised) on a simulation of the Diabetes dataset with \textit{Age Shift} (discussed in Section~\ref{sec:real_descp}).}
	\label{fig:RCTLvsTCA}
\end{figure}

\begin{table}[t]
\caption{Markov blanket and separating feature sets extracted by various algorithms on the target variable $T$ for Figure \ref{fig:targetM}(a).} 
\centering 
\begin{tabular}{l c c} 
\hline\hline 
Methodology & Markov blanket & Separating set \\ [0.5ex] 
\hline 
GS & ["K", "L",  "N" , "Y"]& ["K",  "L","N", "Y" ] \\ 
IAMB & ["K", "L",  "N"]& ["K",  "L","N" ] \\
interIAMB & ["K", "L",  "N"]& ["K",  "L","N" ] \\
fastIAMB & ["K", "L",  "N", "Y"]& ["K",  "L","N","Y" ]\\
MMMB & ["K", "L",  "N"]& ["K",  "L","N" ] \\
fdrIAMB & ["K", "L",  "N"]& ["K",  "L","N" ] \\ [1ex] 
\hline 
\end{tabular}
\label{table:robust1} 
\end{table}

\begin{table}[!ht]
\caption{Markov blanket and separating feature sets extracted by various algorithms on the target variable $T$ for Figure \ref{fig:smallg}(a).} 
\centering 
\begin{tabular}{l c c} 
\hline\hline 
Methodology & Markov blanket & Separating set \\ [0.5ex] 
\hline 
GS & ["C1", "P",  "X" , "Y" ]& ["P",  "X" ] \\ 
IAMB & ["C1", "P" , "X"  ,"Y"]&  ["P",  "X" ]\\
interIAMB & ["C1", "P",  "X" , "Y" ]&  ["P",  "X" ] \\
fastIAMB & ["C1", "P",  "X" , "Y" ] & ["P",  "X" ] \\
MMMB & ["C1", "P",  "X" , "Y" ]  & ["P",  "X" ] \\
fdrIAMB & ["C1", "P",  "X" , "Y" ]  & ["P",  "X" ] \\
ESS & NA & ["Q", "X", "Y"] \\  [1ex] 
\hline 
\end{tabular}
\label{table:robust2} 
\end{table}

\paragraph{Computational complexity of \rctl.}
Assume that the “learning Markov
blankets” phase (Step 1) uses the grow-shrink (GS) algorithm \cite{Margaritis99}, and $n=|V|$, $m=|E|$, where
$G=(V,E)$ is the unknown true causal structure. Since the Markov blanket algorithm
involves $O(n)$ conditional independence (CI) tests, if the cardinality of the Markov
blanket of the target variable is $k$ (in many real world cases $k << n$, see for example real world datasets and their properties in \cite{bnlearn}) then the total
number of needed Cl tests for finding the best separating set is O$(n+2^k)$, see \cite{Margaritis2003} for more details. On the
other hand, the brute force algorithm used in (Magliacane et al., 2018) needs O$(2^n)$
tests in the worst case scenario, which is infeasible in practice as we have shown in our
experiments. The same argument is valid when we have multiple target variables, say $c$, where $c<<n$.

\begin{figure*}[ht]
    \subcaptionbox{}[.33\textwidth]{%
	    \begin{tikzpicture}[transform shape,scale=.56]
	\tikzset{vertex/.style = {shape=circle,align=center,draw=black, fill=white}}
\tikzset{edge/.style = {->,> = latex',thick}}
	\node[vertex,thick] (l) at  (-5.3,5) {L};
	\node[vertex,thick] (k) at  (-3.7,5) {K};
	\node[vertex,thick] (j) at  (-3.7,3.8) {J};
	\node[vertex,thick](m) at  (-5.3,3.8) {M};
	\node[vertex,thick] (n) at  (-4.5,2.6) {N};
    \node[vertex,thick,fill=green] (c1) at  (-2.5,5) {c1};
    \node[vertex,thick,fill= gray!40,dashed] (u) at  (-1.5,6.2) {U};
    \node[vertex,thick] (c2) at  (-0.5,5) {c2};
    \node[vertex,thick](x) at  (-0.5,3.8) {X};
    \node[vertex,thick,fill=Dandelion ] (t) at  (-0.5,1.4) {T};
    \node[vertex,thick ] (y) at  (-1.5,.2) {Y};
	\node[vertex,thick] (p) at  (0.8,1.4) {P};
	\node[vertex,thick] (q) at  (0.8,0.2) {Q};
	\node[vertex,thick] (b) at  (0.8,3.8) {B};
    \node[vertex,thick] (d) at  (0.8,5) {D};
    \node[vertex,thick] (e) at  (2,5) {E};
	\node[vertex,thick] (i) at  (2,3.8) {I};
    \node[vertex,thick] (f) at  (-0.5,6.2) {F};
    \node[vertex,thick] (g) at  (.8,6.2) {G};
    \node[vertex,thick] (h) at  (2,6.2) {H};
	\draw[edge] (k) to (l);
	\draw[edge] (k) to (m);
	\draw[edge] (k) to (j);
	\draw[edge] (m) to (n);
	\draw[edge] (j) to (n);
	\draw[edge] (l) to (m);
	\draw[edge,fill = Cyan] (u) to (c1);
	\draw[edge,fill = Cyan] (u) to (c2);
	\draw[edge] (c1) to (y);
	\draw[edge] (c2) to (x);
	\draw[edge] (x) to (t);
	\draw[edge] (t) to (y);
	\draw[edge] (t) to (p);
	\draw[edge] (p) to (q);
	\draw[edge] (d) to (b);
	\draw[edge] (c2) to (b);    
	\draw[edge] (e) to (b);
	\draw[edge] (e) to (i);
	\draw[edge] (f) to (d);
	\draw[edge] (g) to (e);
	\draw[edge] (e) to (i);
	\draw[edge] (h) to (e);
	\draw[edge] (j) to (x);
\end{tikzpicture}

}%
\subcaptionbox{}[.33\textwidth]{%
	\includegraphics[scale= 0.3]{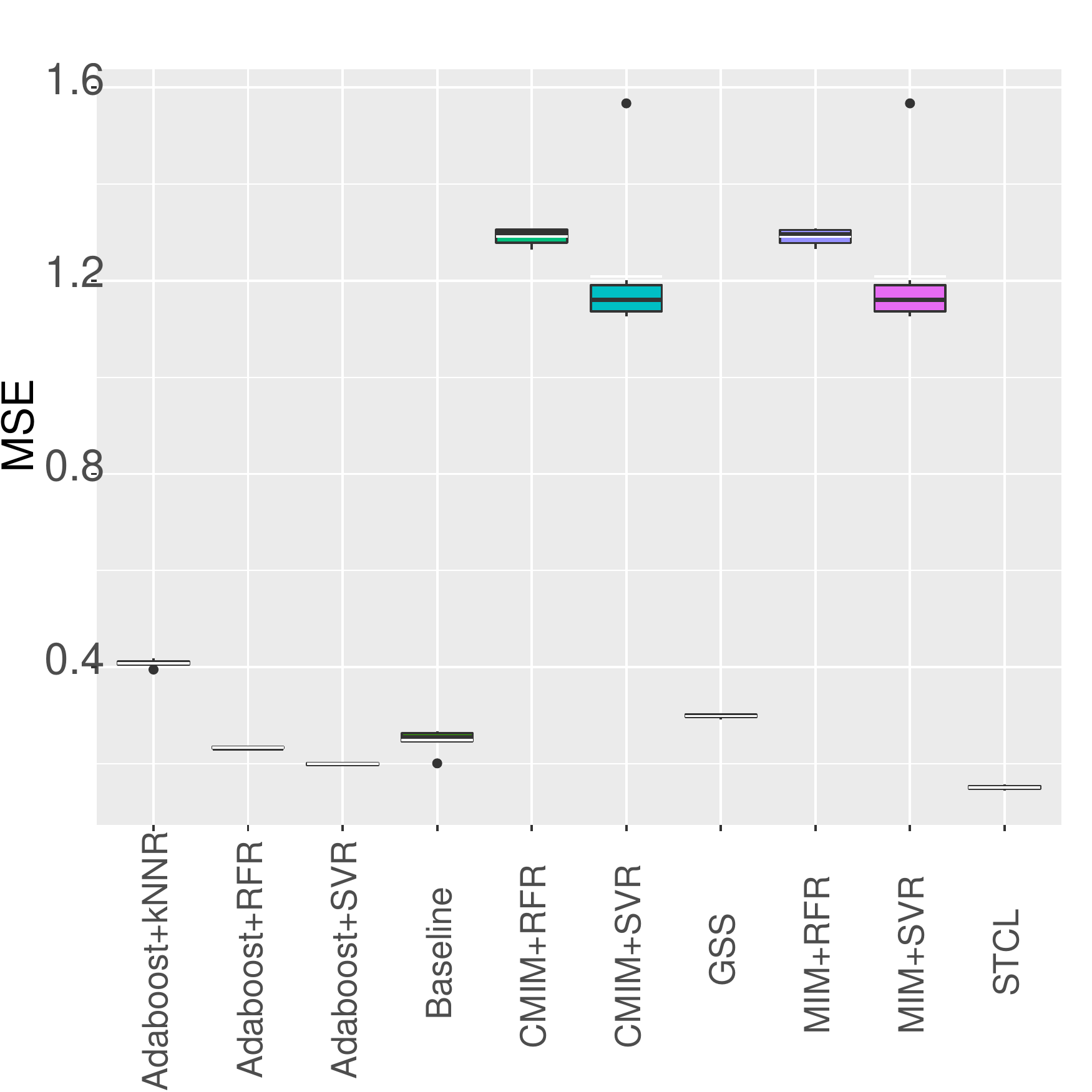}
    }%
	\subcaptionbox{}[.33\textwidth]{%
	\includegraphics[scale= 0.3]{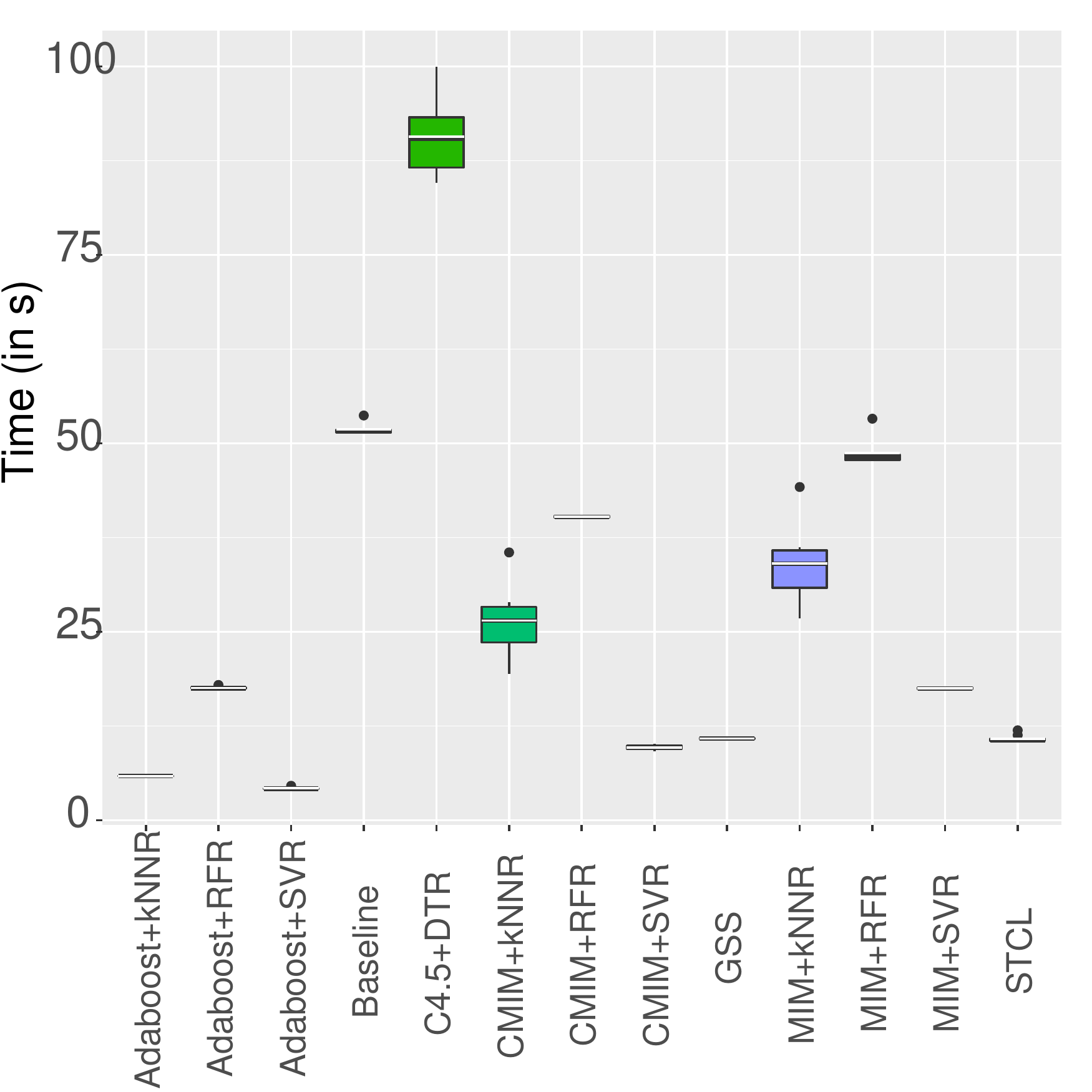}}
\caption{Results on (b), (c) are generated over the ground truth graph (a) for the target variable \textcolor{orange}{$T$}. Data shift between source and target occurs by change in the probability distribution of the context variable \textcolor{green}{$C_1$}, where sample size = 10000 for a Gaussian distribution.}	
\end{figure*}
\begin{figure*}
    \subcaptionbox{}[.33\textwidth]{%
	    \begin{tikzpicture}[transform shape,scale=.56]
	\tikzset{vertex/.style = {shape=circle,align=center,draw=black, fill=white}}
\tikzset{edge/.style = {->,> = latex',thick}}
	\node[vertex,thick] (l) at  (-5.3,5) {L};
	\node[vertex,thick] (k) at  (-3.7,5) {K};
	\node[vertex,thick] (j) at  (-3.7,3.8) {J};
	\node[vertex,thick](m) at  (-5.3,3.8) {M};
	\node[vertex,thick] (n) at  (-4.5,2.6) {N};
    \node[vertex,thick,fill=green] (c1) at  (-2.5,5) {c1};
    \node[vertex,thick,fill= gray!40,dashed] (u) at  (-1.5,6.2) {U};
    \node[vertex,thick] (c2) at  (-0.5,5) {c2};
    \node[vertex,thick](x) at  (-0.5,3.8) {X};
    \node[vertex,thick,fill=Dandelion ] (t) at  (-0.5,1.4) {T};
    \node[vertex,thick ] (y) at  (-1.5,.2) {Y};
	\node[vertex,thick] (p) at  (0.8,1.4) {P};
	\node[vertex,thick] (q) at  (0.8,0.2) {Q};
	\node[vertex,thick] (b) at  (0.8,3.8) {B};
    \node[vertex,thick] (d) at  (0.8,5) {D};
    \node[vertex,thick] (e) at  (2,5) {E};
	\node[vertex,thick] (i) at  (2,3.8) {I};
    \node[vertex,thick] (f) at  (-0.5,6.2) {F};
    \node[vertex,thick] (g) at  (.8,6.2) {G};
    \node[vertex,thick] (h) at  (2,6.2) {H};
	\draw[edge] (k) to (l);
	\draw[edge] (k) to (m);
	\draw[edge] (k) to (j);
	\draw[edge] (m) to (n);
	\draw[edge] (j) to (n);
	\draw[edge] (l) to (m);
	\draw[edge,fill = Cyan] (u) to (c1);
	\draw[edge,fill = Cyan] (u) to (c2);
	\draw[edge] (c1) to (y);
	\draw[edge] (c2) to (x);
	\draw[edge] (x) to (t);
	\draw[edge] (t) to (y);
	\draw[edge] (t) to (p);
	\draw[edge] (p) to (q);
	\draw[edge] (d) to (b);
	\draw[edge] (c2) to (b);    
	\draw[edge] (e) to (b);
	\draw[edge] (e) to (i);
	\draw[edge] (f) to (d);
	\draw[edge] (g) to (e);
	\draw[edge] (e) to (i);
	\draw[edge] (h) to (e);
	\draw[edge] (j) to (x);
\end{tikzpicture}

}%
\subcaptionbox{}[.33\textwidth]{%
	\includegraphics[scale= 0.3]{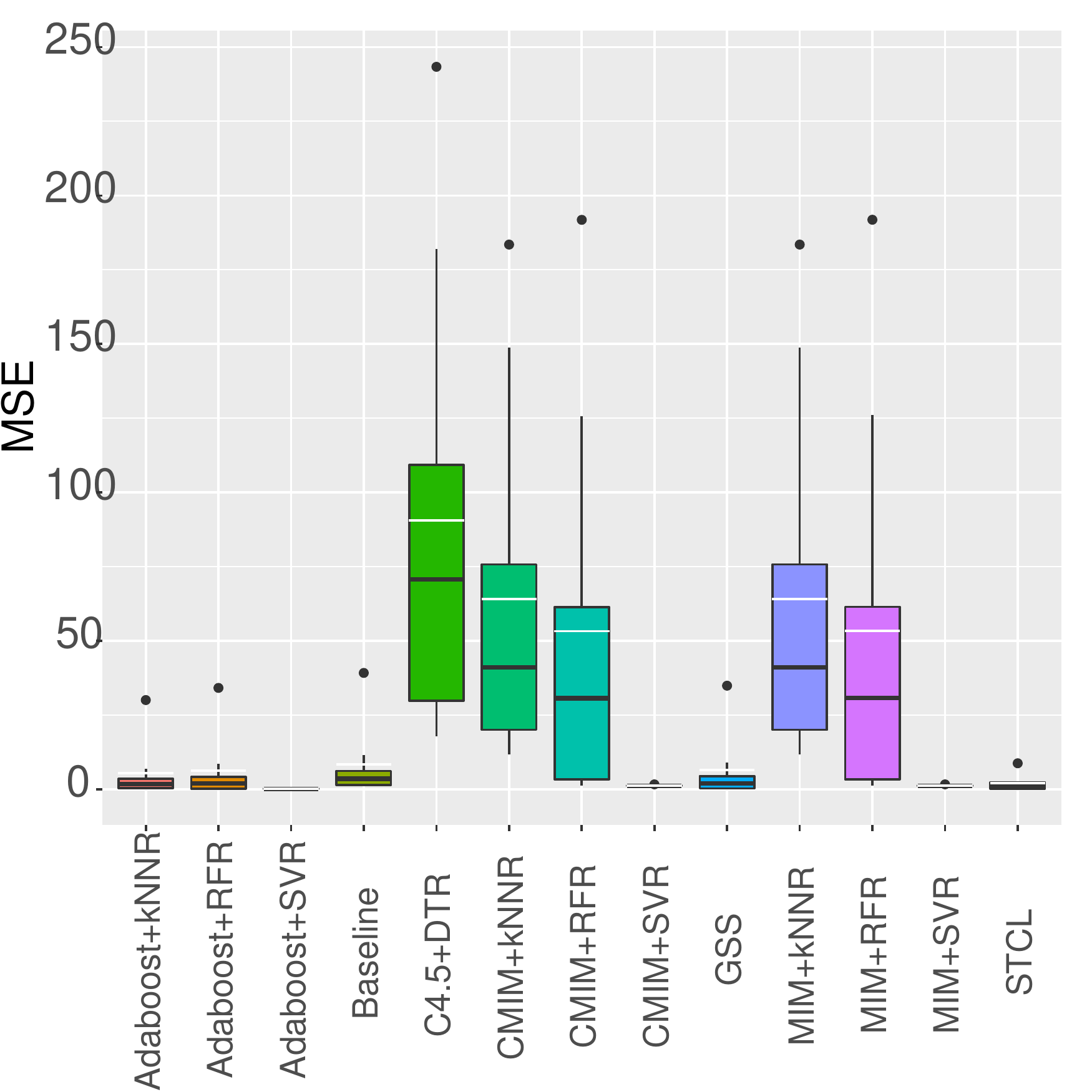}
    }%
	\subcaptionbox{}[.33\textwidth]{%
	\includegraphics[scale= 0.3]{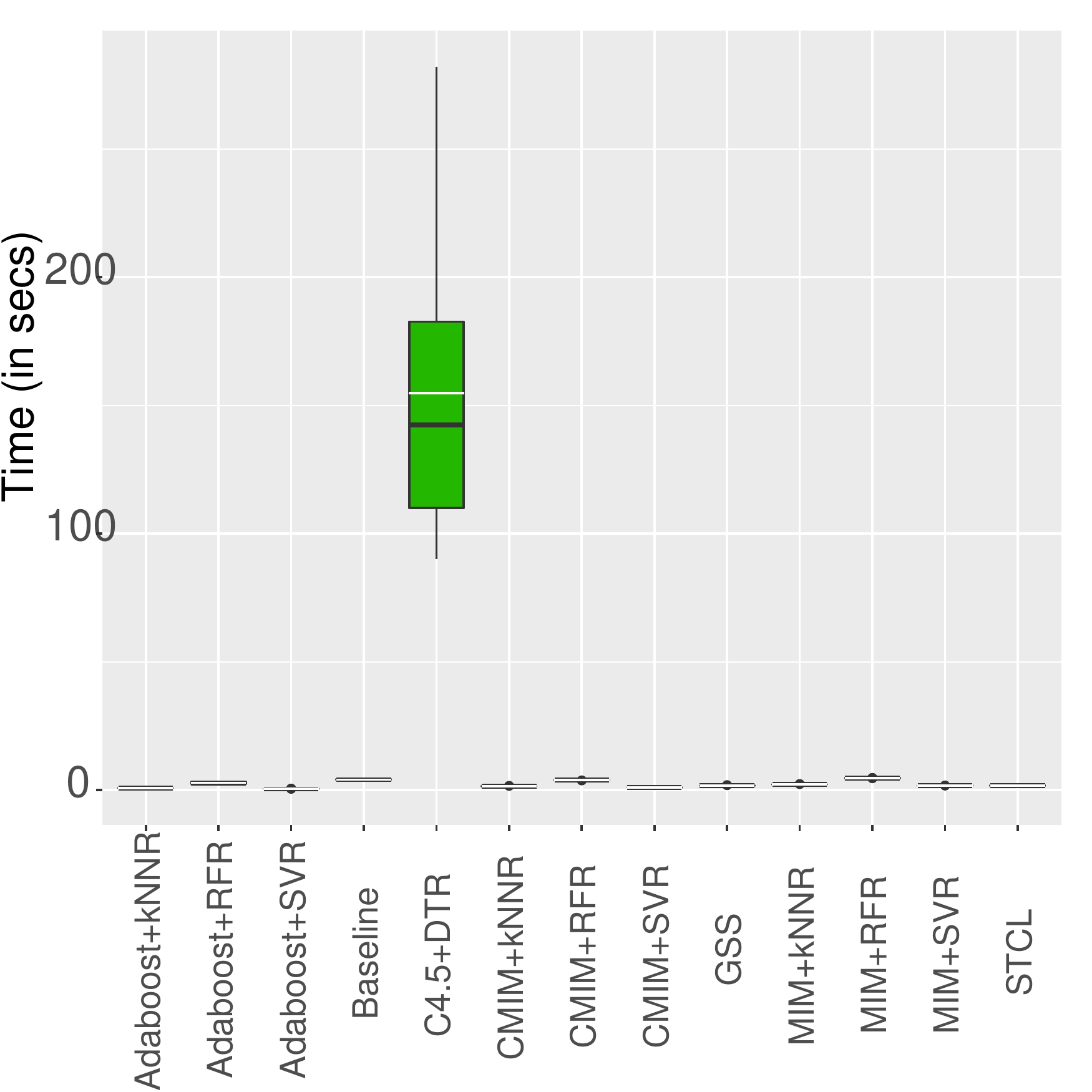}}
\caption{Results on (b), (c), (d) are generated over the ground truth graph (a) for the target variable \textcolor{orange}{$T$}. Data shift between source and target occurs by change in the probability distribution of the context variable \textcolor{green}{$C_1$}, where sample size = 1000 for a Gaussian distribution.}	
\end{figure*}
\begin{figure*}
    \subcaptionbox{}[.33\textwidth]{%
	    \begin{tikzpicture}[transform shape,scale=.56]
	\tikzset{vertex/.style = {shape=circle,align=center,draw=black, fill=white}}
\tikzset{edge/.style = {->,> = latex',thick}}
	\node[vertex,thick] (l) at  (-5.3,5) {L};
	\node[vertex,thick] (k) at  (-3.7,5) {K};
	\node[vertex,thick] (j) at  (-3.7,3.8) {J};
	\node[vertex,thick](m) at  (-5.3,3.8) {M};
	\node[vertex,thick] (n) at  (-4.5,2.6) {N};
    \node[vertex,thick,fill=green] (c1) at  (-2.5,5) {c1};
    \node[vertex,thick,fill= gray!40,dashed] (u) at  (-1.5,6.2) {U};
    \node[vertex,thick] (c2) at  (-0.5,5) {c2};
    \node[vertex,thick](x) at  (-0.5,3.8) {X};
    \node[vertex,thick,fill=Dandelion ] (t) at  (-0.5,1.4) {T};
    \node[vertex,thick ] (y) at  (-1.5,.2) {Y};
	\node[vertex,thick] (p) at  (0.8,1.4) {P};
	\node[vertex,thick] (q) at  (0.8,0.2) {Q};
	\node[vertex,thick] (b) at  (0.8,3.8) {B};
    \node[vertex,thick] (d) at  (0.8,5) {D};
    \node[vertex,thick] (e) at  (2,5) {E};
	\node[vertex,thick] (i) at  (2,3.8) {I};
    \node[vertex,thick] (f) at  (-0.5,6.2) {F};
    \node[vertex,thick] (g) at  (.8,6.2) {G};
    \node[vertex,thick] (h) at  (2,6.2) {H};
	\draw[edge] (k) to (l);
	\draw[edge] (k) to (m);
	\draw[edge] (k) to (j);
	\draw[edge] (m) to (n);
	\draw[edge] (j) to (n);
	\draw[edge] (l) to (m);
	\draw[edge,fill = Cyan] (u) to (c1);
	\draw[edge,fill = Cyan] (u) to (c2);
	\draw[edge] (c1) to (y);
	\draw[edge] (c2) to (x);
	\draw[edge] (x) to (t);
	\draw[edge] (t) to (y);
	\draw[edge] (t) to (p);
	\draw[edge] (p) to (q);
	\draw[edge] (d) to (b);
	\draw[edge] (c2) to (b);    
	\draw[edge] (e) to (b);
	\draw[edge] (e) to (i);
	\draw[edge] (f) to (d);
	\draw[edge] (g) to (e);
	\draw[edge] (e) to (i);
	\draw[edge] (h) to (e);
	\draw[edge] (j) to (x);
\end{tikzpicture}

}%
\subcaptionbox{}[.33\textwidth]{%
	\includegraphics[scale= 0.3]{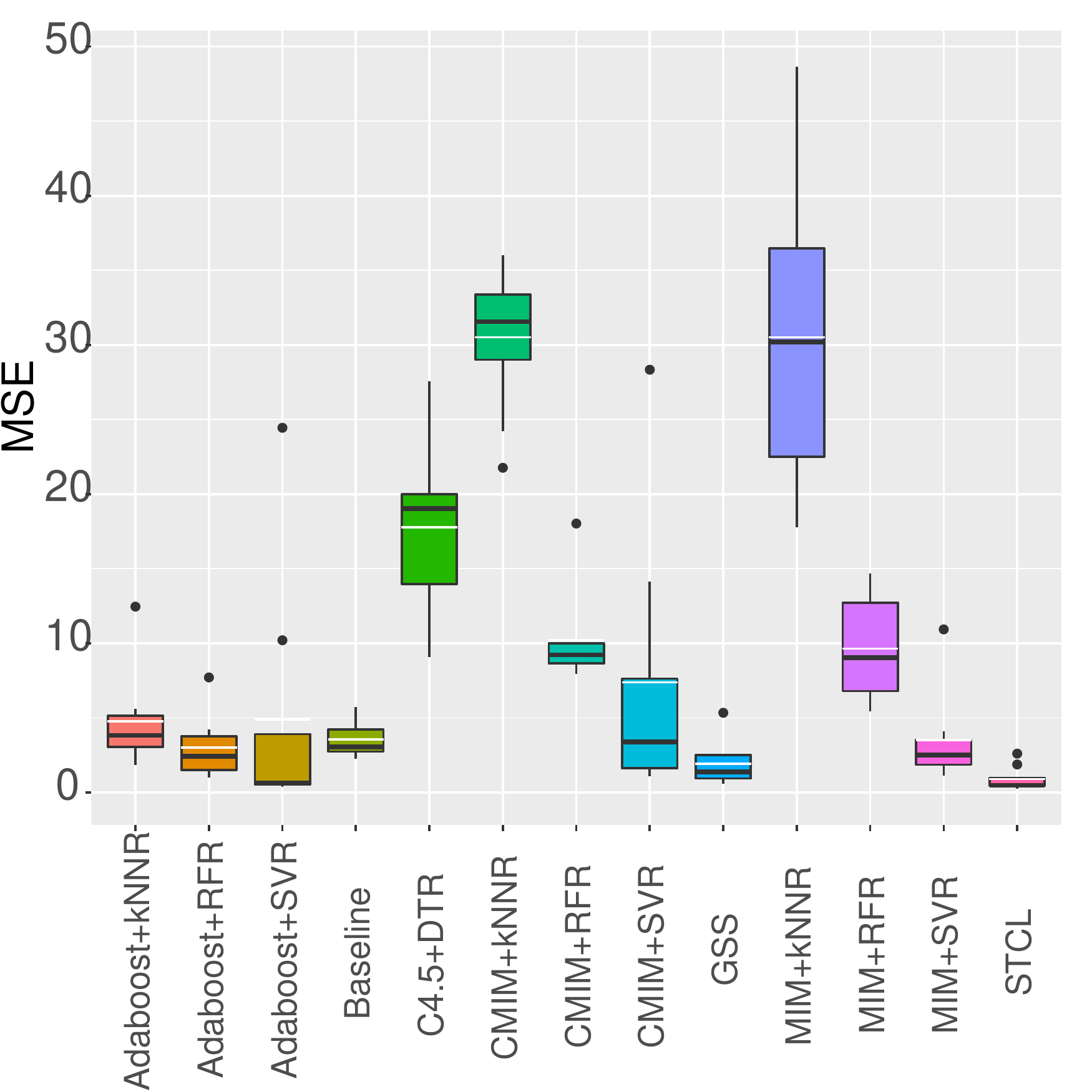}
    }%
	\subcaptionbox{}[.33\textwidth]{%
	\includegraphics[scale= 0.3]{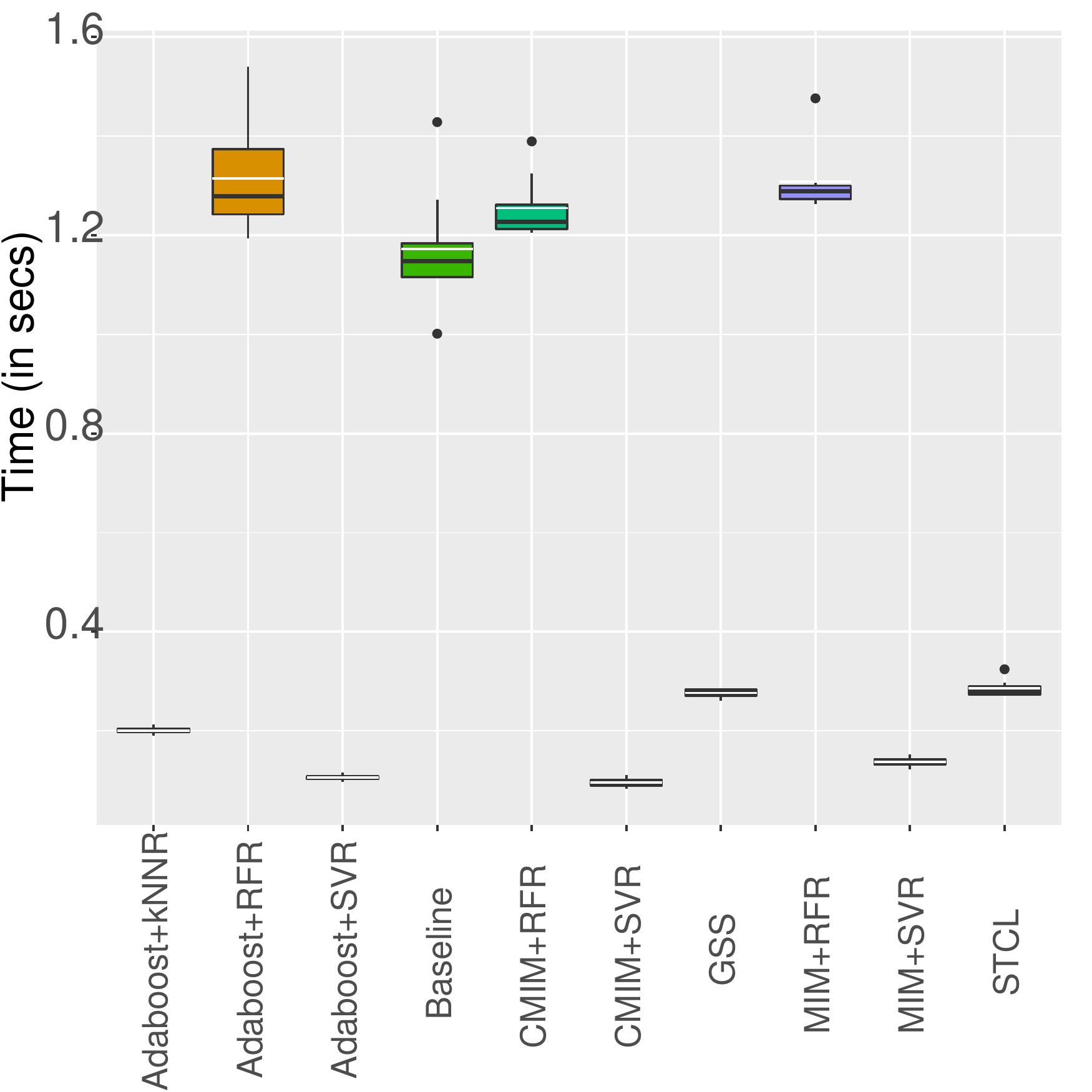}}
\caption{Results on (b), (c), (d) are generated over the ground truth graph (a) for the target variable \textcolor{orange}{$T$}. Data shift between source and target occurs by change in the probability distribution of the context variable \textcolor{green}{$C_1$}, where sample size = 50 for a Gaussian distribution.}	
\end{figure*}

\begin{figure*}
    \subcaptionbox{}[.33\textwidth]{%
	    \begin{tikzpicture}[transform shape,scale=.56]
	\tikzset{vertex/.style = {shape=circle,align=center,draw=black, fill=white}}
\tikzset{edge/.style = {->,> = latex',thick}}
	\node[vertex,thick] (l) at  (-5.3,5) {L};
	\node[vertex,thick] (k) at  (-3.7,5) {K};
	\node[vertex,thick] (j) at  (-3.7,3.8) {J};
	\node[vertex,thick](m) at  (-5.3,3.8) {M};
	\node[vertex,thick] (n) at  (-4.5,2.6) {N};
    \node[vertex,thick,fill=green] (c1) at  (-2.5,5) {c1};
    \node[vertex,thick,fill= gray!40,dashed] (u) at  (-1.5,6.2) {U};
    \node[vertex,thick,fill=green] (c2) at  (-0.5,5) {c2};
    \node[vertex,thick](x) at  (-0.5,3.8) {X};
    \node[vertex,thick,fill=Dandelion ] (t) at  (-0.5,1.4) {T};
    \node[vertex,thick ] (y) at  (-1.5,.2) {Y};
	\node[vertex,thick] (p) at  (0.8,1.4) {P};
	\node[vertex,thick] (q) at  (0.8,0.2) {Q};
	\node[vertex,thick] (b) at  (0.8,3.8) {B};
    \node[vertex,thick] (d) at  (0.8,5) {D};
    \node[vertex,thick] (e) at  (2,5) {E};
	\node[vertex,thick] (i) at  (2,3.8) {I};
    \node[vertex,thick] (f) at  (-0.5,6.2) {F};
    \node[vertex,thick] (g) at  (.8,6.2) {G};
    \node[vertex,thick] (h) at  (2,6.2) {H};
	\draw[edge] (k) to (l);
	\draw[edge] (k) to (m);
	\draw[edge] (k) to (j);
	\draw[edge] (m) to (n);
	\draw[edge] (j) to (n);
	\draw[edge] (l) to (m);
	\draw[edge,fill = Cyan] (u) to (c1);
	\draw[edge,fill = Cyan] (u) to (c2);
	\draw[edge] (c1) to (y);
	\draw[edge] (c2) to (x);
	\draw[edge] (x) to (t);
	\draw[edge] (t) to (y);
	\draw[edge] (t) to (p);
	\draw[edge] (p) to (q);
	\draw[edge] (d) to (b);
	\draw[edge] (c2) to (b);    
	\draw[edge] (e) to (b);
	\draw[edge] (e) to (i);
	\draw[edge] (f) to (d);
	\draw[edge] (g) to (e);
	\draw[edge] (e) to (i);
	\draw[edge] (h) to (e);
	\draw[edge] (j) to (x);
\end{tikzpicture}

}%
\subcaptionbox{}[.33\textwidth]{%
	\includegraphics[scale= 0.3]{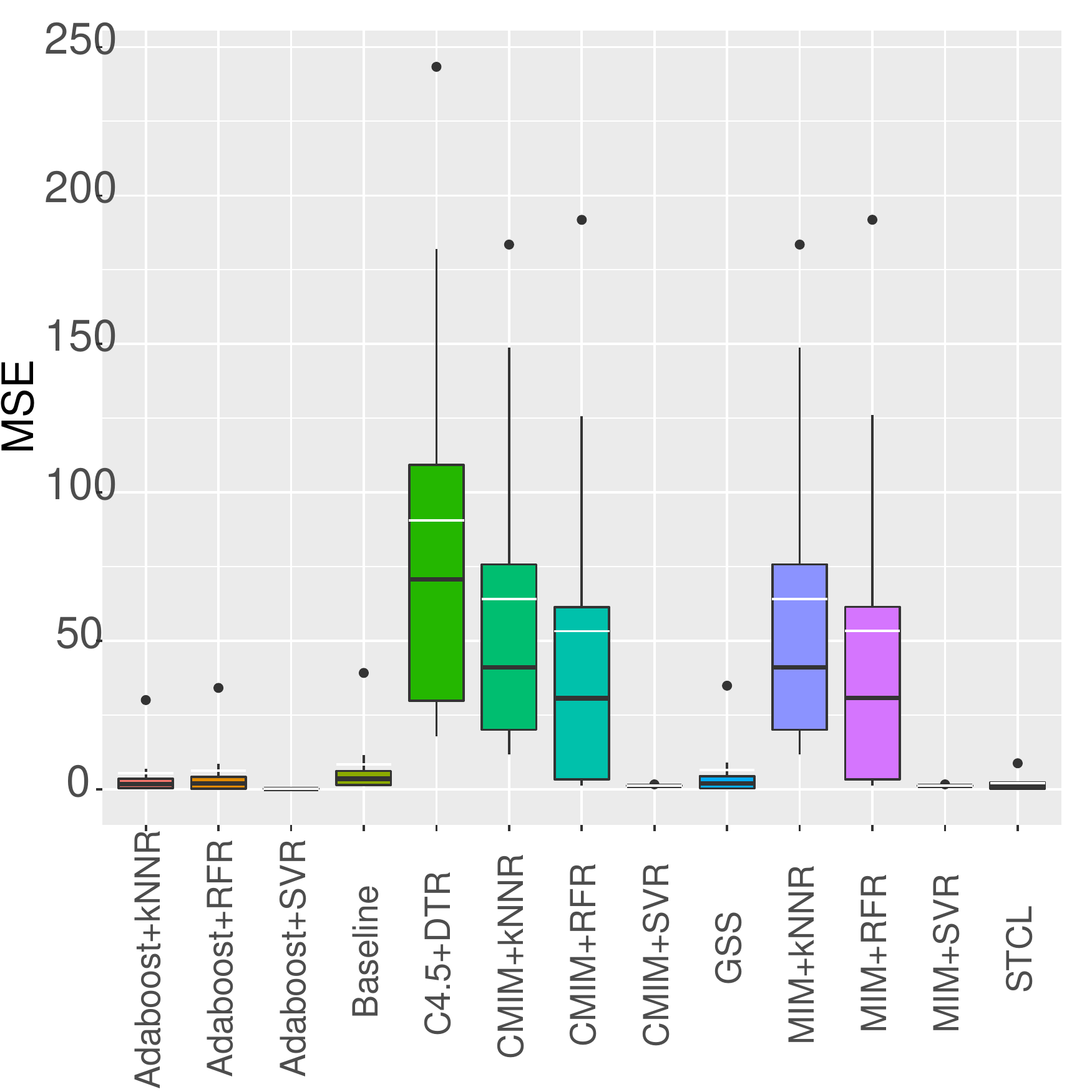}
    }%
	\subcaptionbox{}[.33\textwidth]{%
	\includegraphics[scale= 0.3]{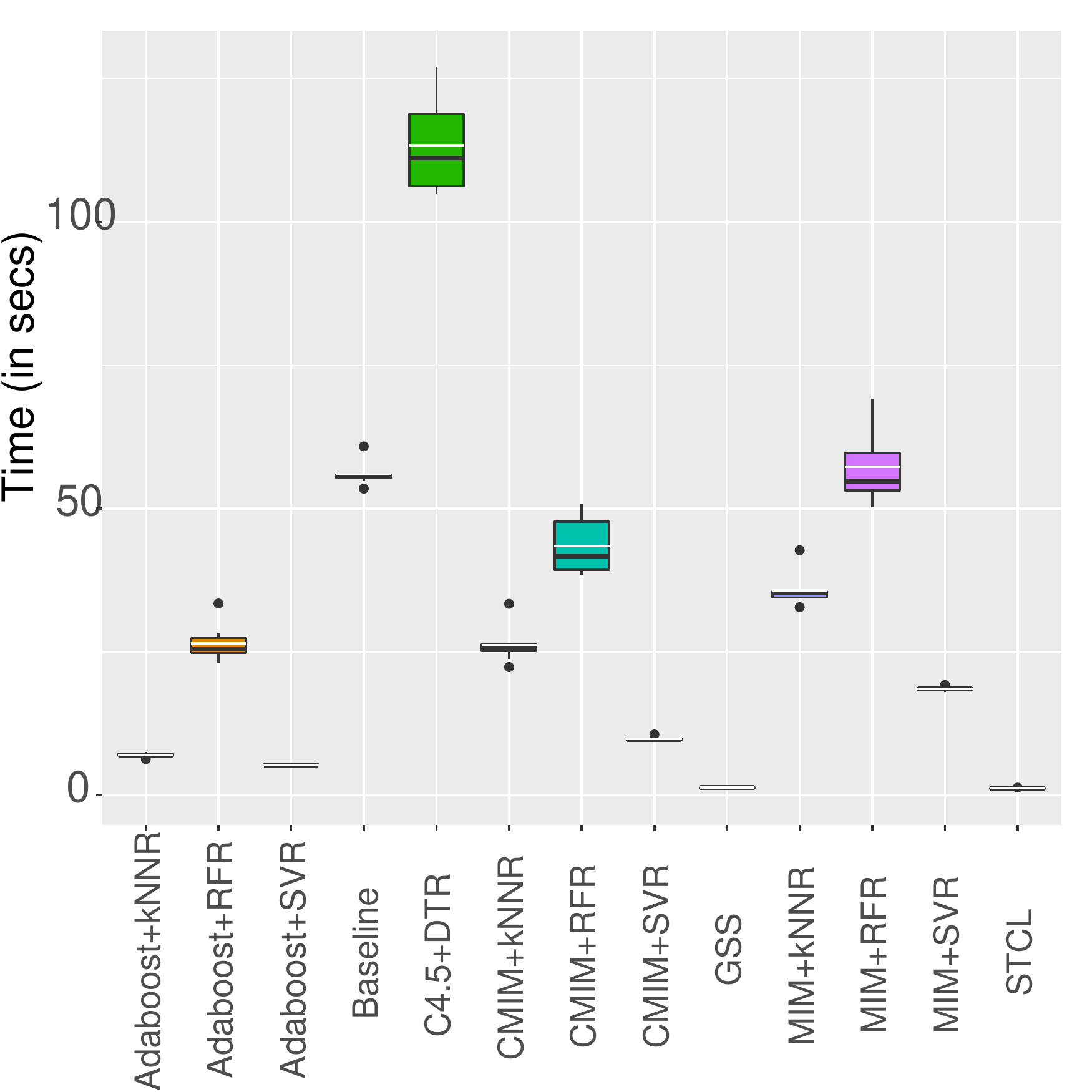}}
\caption{Results on (b), (c), (d) are generated over the ground truth graph (a) for the target variable \textcolor{orange}{$T$}. Data shift between source and target occurs by change in the probability distribution of the context variable \textcolor{green}{$C_1$}\& \textcolor{green}{$C_2$}, where sample size = 10000 for a Gaussian distribution.}	
\end{figure*}

\begin{figure*}
    \subcaptionbox{}[.33\textwidth]{%
	    \begin{tikzpicture}[transform shape,scale=.56]
	\tikzset{vertex/.style = {shape=circle,align=center,draw=black, fill=white}}
\tikzset{edge/.style = {->,> = latex',thick}}
	\node[vertex,thick] (l) at  (-5.3,5) {L};
	\node[vertex,thick] (k) at  (-3.7,5) {K};
	\node[vertex,thick] (j) at  (-3.7,3.8) {J};
	\node[vertex,thick](m) at  (-5.3,3.8) {M};
	\node[vertex,thick] (n) at  (-4.5,2.6) {N};
    \node[vertex,thick,fill=green] (c1) at  (-2.5,5) {c1};
    \node[vertex,thick,fill= gray!40,dashed] (u) at  (-1.5,6.2) {U};
    \node[vertex,thick,fill=green] (c2) at  (-0.5,5) {c2};
    \node[vertex,thick](x) at  (-0.5,3.8) {X};
    \node[vertex,thick,fill=Dandelion ] (t) at  (-0.5,1.4) {T};
    \node[vertex,thick ] (y) at  (-1.5,.2) {Y};
	\node[vertex,thick] (p) at  (0.8,1.4) {P};
	\node[vertex,thick] (q) at  (0.8,0.2) {Q};
	\node[vertex,thick] (b) at  (0.8,3.8) {B};
    \node[vertex,thick] (d) at  (0.8,5) {D};
    \node[vertex,thick] (e) at  (2,5) {E};
	\node[vertex,thick] (i) at  (2,3.8) {I};
    \node[vertex,thick] (f) at  (-0.5,6.2) {F};
    \node[vertex,thick] (g) at  (.8,6.2) {G};
    \node[vertex,thick] (h) at  (2,6.2) {H};
	\draw[edge] (k) to (l);
	\draw[edge] (k) to (m);
	\draw[edge] (k) to (j);
	\draw[edge] (m) to (n);
	\draw[edge] (j) to (n);
	\draw[edge] (l) to (m);
	\draw[edge,fill = Cyan] (u) to (c1);
	\draw[edge,fill = Cyan] (u) to (c2);
	\draw[edge] (c1) to (y);
	\draw[edge] (c2) to (x);
	\draw[edge] (x) to (t);
	\draw[edge] (t) to (y);
	\draw[edge] (t) to (p);
	\draw[edge,] (p) to (q);
	\draw[edge] (d) to (b);
	\draw[edge] (c2) to (b);    
	\draw[edge] (e) to (b);
	\draw[edge] (e) to (i);
	\draw[edge] (f) to (d);
	\draw[edge] (g) to (e);
	\draw[edge] (e) to (i);
	\draw[edge] (h) to (e);
	\draw[edge] (j) to (x);
\end{tikzpicture}
}%
\subcaptionbox{}[.33\textwidth]{%
	\includegraphics[scale= 0.3]{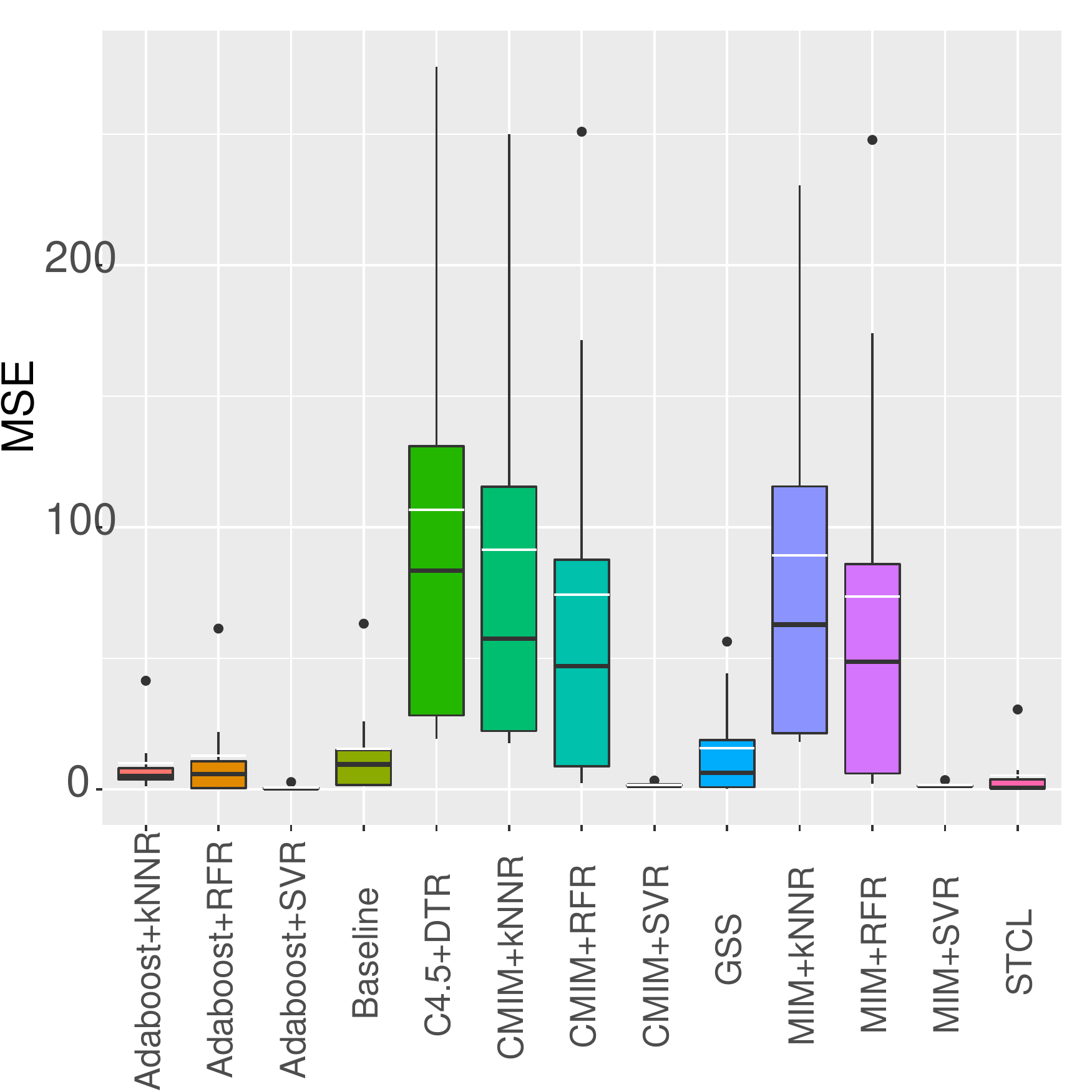}}
	\subcaptionbox{}[.33\textwidth]{%
	\includegraphics[scale= 0.3]{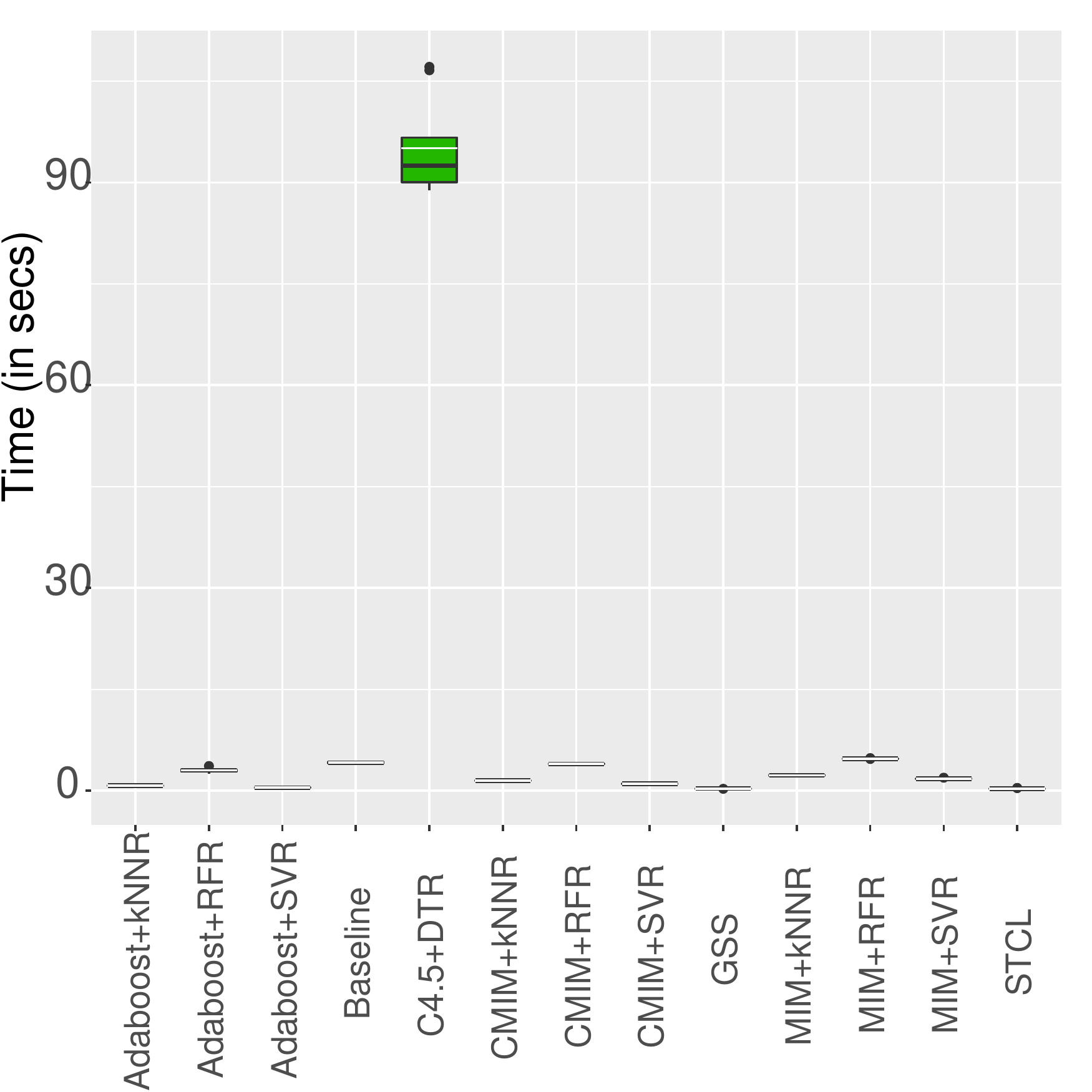}}
\caption{Results on (b), (c), (d) are generated over the ground truth graph (a) for the target variable \textcolor{orange}{$T$}. Data shift between source and target occurs by change in the probability distribution of the context variable \textcolor{green}{$C_1$} \& \textcolor{green}{$C_2$}, where sample size = 1000 for a Gaussian distribution.}	
\end{figure*}

\begin{figure*}
    \subcaptionbox{}[.33\textwidth]{%
	    \begin{tikzpicture}[transform shape,scale=.56]
	\tikzset{vertex/.style = {shape=circle,align=center,draw=black, fill=white}}
\tikzset{edge/.style = {->,> = latex',thick}}
	\node[vertex,thick] (l) at  (-5.3,5) {L};
	\node[vertex,thick] (k) at  (-3.7,5) {K};
	\node[vertex,thick] (j) at  (-3.7,3.8) {J};
	\node[vertex,thick](m) at  (-5.3,3.8) {M};
	\node[vertex,thick] (n) at  (-4.5,2.6) {N};
    \node[vertex,thick,fill=green] (c1) at  (-2.5,5) {c1};
    \node[vertex,thick,fill= gray!40,dashed] (u) at  (-1.5,6.2) {U};
    \node[vertex,thick,fill=green] (c2) at  (-0.5,5) {c2};
    \node[vertex,thick](x) at  (-0.5,3.8) {X};
    \node[vertex,thick,fill=Dandelion ] (t) at  (-0.5,1.4) {T};
    \node[vertex,thick ] (y) at  (-1.5,.2) {Y};
	\node[vertex,thick] (p) at  (0.8,1.4) {P};
	\node[vertex,thick] (q) at  (0.8,0.2) {Q};
	\node[vertex,thick] (b) at  (0.8,3.8) {B};
    \node[vertex,thick] (d) at  (0.8,5) {D};
    \node[vertex,thick] (e) at  (2,5) {E};
	\node[vertex,thick] (i) at  (2,3.8) {I};
    \node[vertex,thick] (f) at  (-0.5,6.2) {F};
    \node[vertex,thick] (g) at  (.8,6.2) {G};
    \node[vertex,thick] (h) at  (2,6.2) {H};
	\draw[edge] (k) to (l);
	\draw[edge] (k) to (m);
	\draw[edge] (k) to (j);
	\draw[edge] (m) to (n);
	\draw[edge] (j) to (n);
	\draw[edge] (l) to (m);
	\draw[edge,fill = Cyan] (u) to (c1);
	\draw[edge,fill = Cyan] (u) to (c2);
	\draw[edge] (c1) to (y);
	\draw[edge] (c2) to (x);
	\draw[edge] (x) to (t);
	\draw[edge] (t) to (y);
	\draw[edge] (t) to (p);
	\draw[edge,] (p) to (q);
	\draw[edge] (d) to (b);
	\draw[edge] (c2) to (b);    
	\draw[edge] (e) to (b);
	\draw[edge] (e) to (i);
	\draw[edge] (f) to (d);
	\draw[edge] (g) to (e);
	\draw[edge] (e) to (i);
	\draw[edge] (h) to (e);
	\draw[edge] (j) to (x);
\end{tikzpicture}

}%
\subcaptionbox{}[.33\textwidth]{%
	\includegraphics[scale= 0.3]{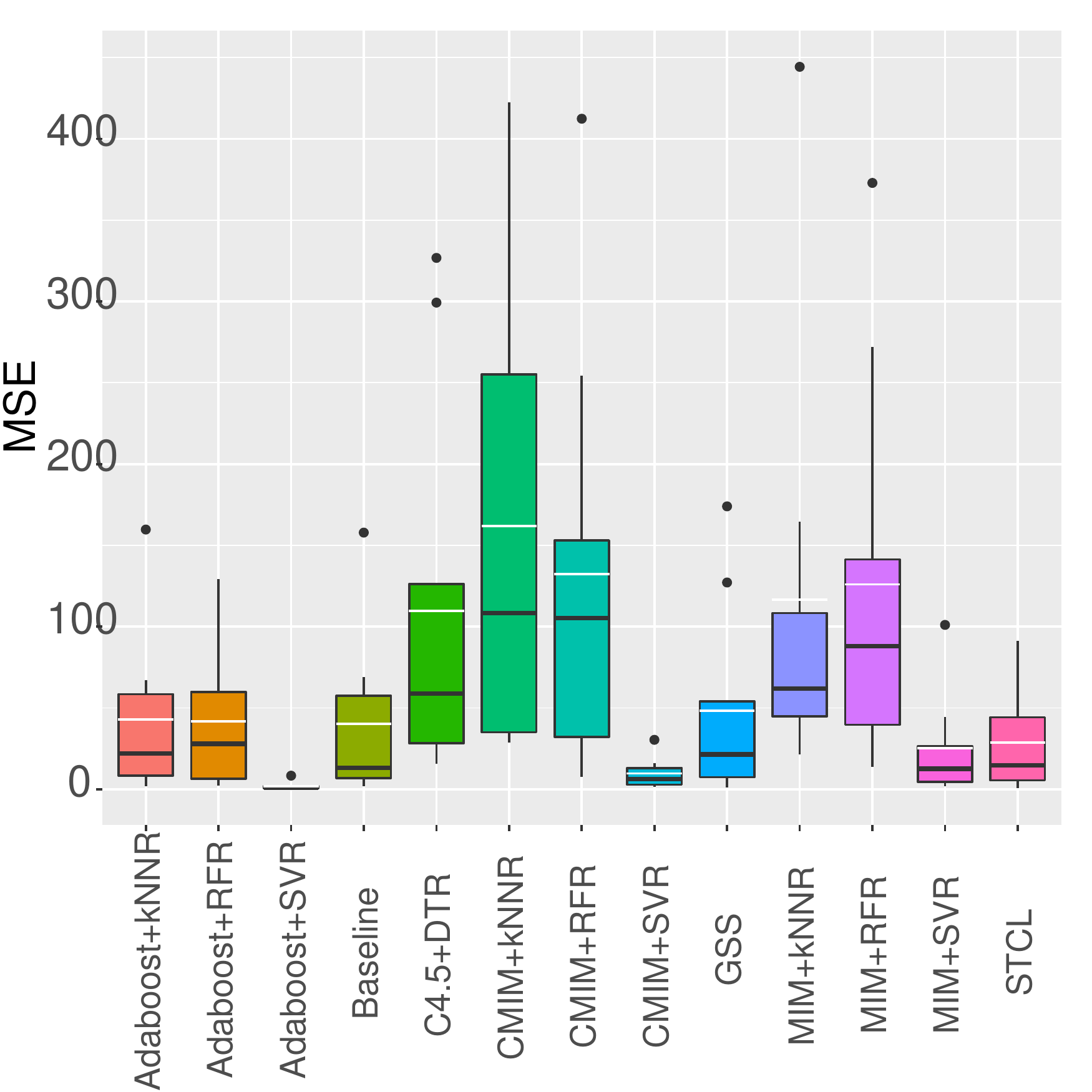}
    }%
	\subcaptionbox{}[.33\textwidth]{%
	\includegraphics[scale= 0.3]{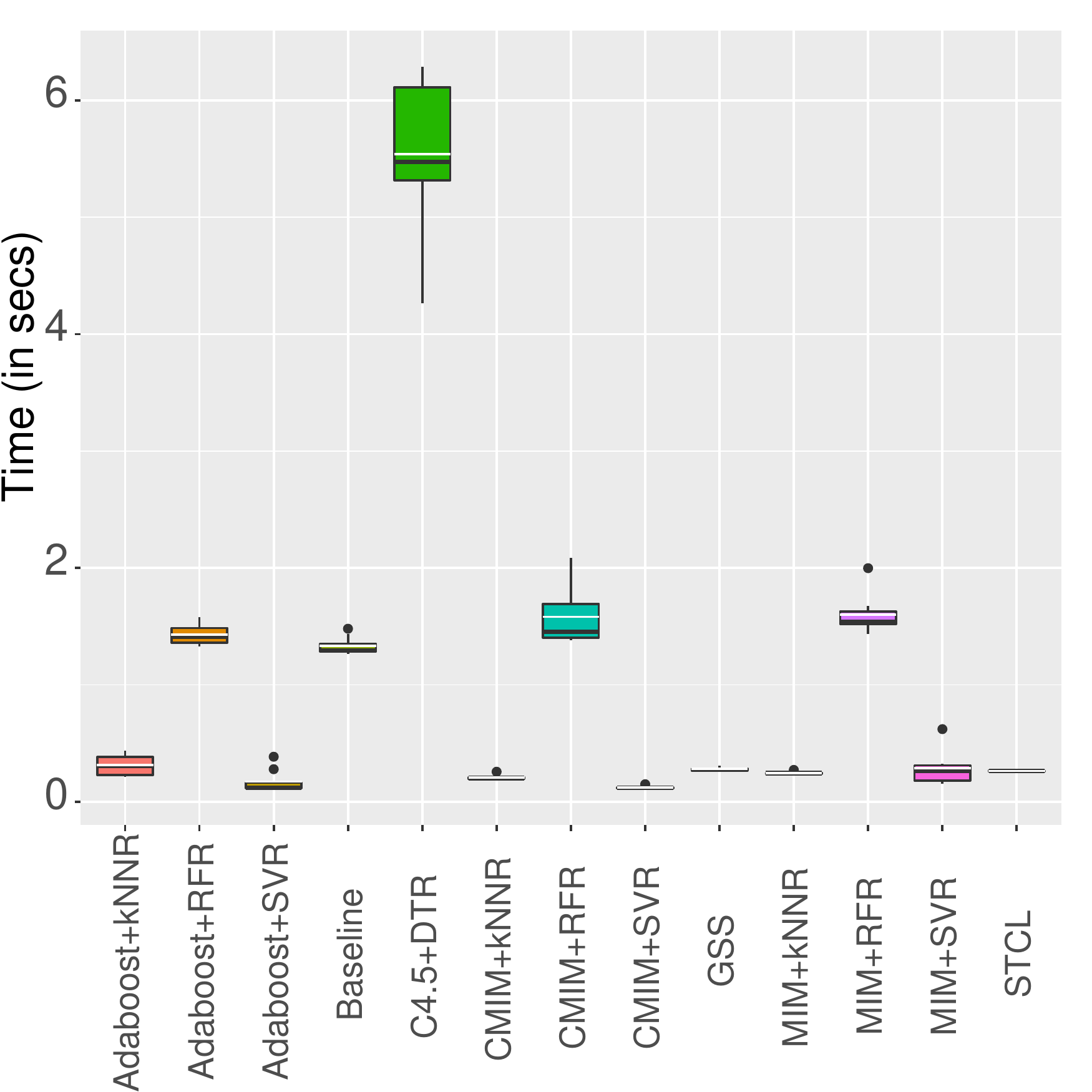}}
\caption{Results on (b), (c), (d) are generated over the ground truth graph (a) for the target variable \textcolor{orange}{$T$}. Data shift between source and target occurs by change in the distribution of the context variable \textcolor{green}{$C_1$} \& \textcolor{green}{$C_2$}, where sample size = 50 for a Gaussian distribution.}	
\end{figure*}

\begin{figure*}
    \subcaptionbox{}[.33\textwidth]{%
	    \begin{tikzpicture}[transform shape,scale=.56]
	\tikzset{vertex/.style = {shape=circle,align=center,draw=black, fill=white}}
\tikzset{edge/.style = {->,> = latex',thick}}
    \node[vertex,thick,fill=green] (c1) at  (-2.5,5) {c1};
    \node[vertex,thick,fill= gray!40,dashed] (u) at  (-1.5,6.2) {U};
    \node[vertex,thick] (c2) at  (-0.5,5) {c2};
    \node[vertex,thick](x) at  (-0.5,3.8) {X};
    \node[vertex,thick,fill=Dandelion ] (t) at  (-0.5,1.4) {T};
    \node[vertex,thick ] (y) at  (-1.5,.2) {Y};
	\node[vertex,thick] (p) at  (0.8,1.4) {P};
	\node[vertex,thick] (q) at  (0.8,0.2) {Q};
	\node[vertex,thick] (b) at  (0.8,3.8) {B};
    \node[vertex,thick] (d) at  (0.8,5) {D};
	\draw[edge,fill = Cyan] (u) to (c1);
	\draw[edge,fill = Cyan] (u) to (c2);
	\draw[edge] (c1) to (y);
	\draw[edge] (c2) to (x);
	\draw[edge] (x) to (t);
	\draw[edge] (t) to (y);
	\draw[edge] (t) to (p);
	\draw[edge,] (p) to (q);
	\draw[edge] (d) to (b);
	\draw[edge] (c2) to (b);    
\end{tikzpicture}

}%
\subcaptionbox{}[.33\textwidth]{%
	\includegraphics[scale= 0.3]{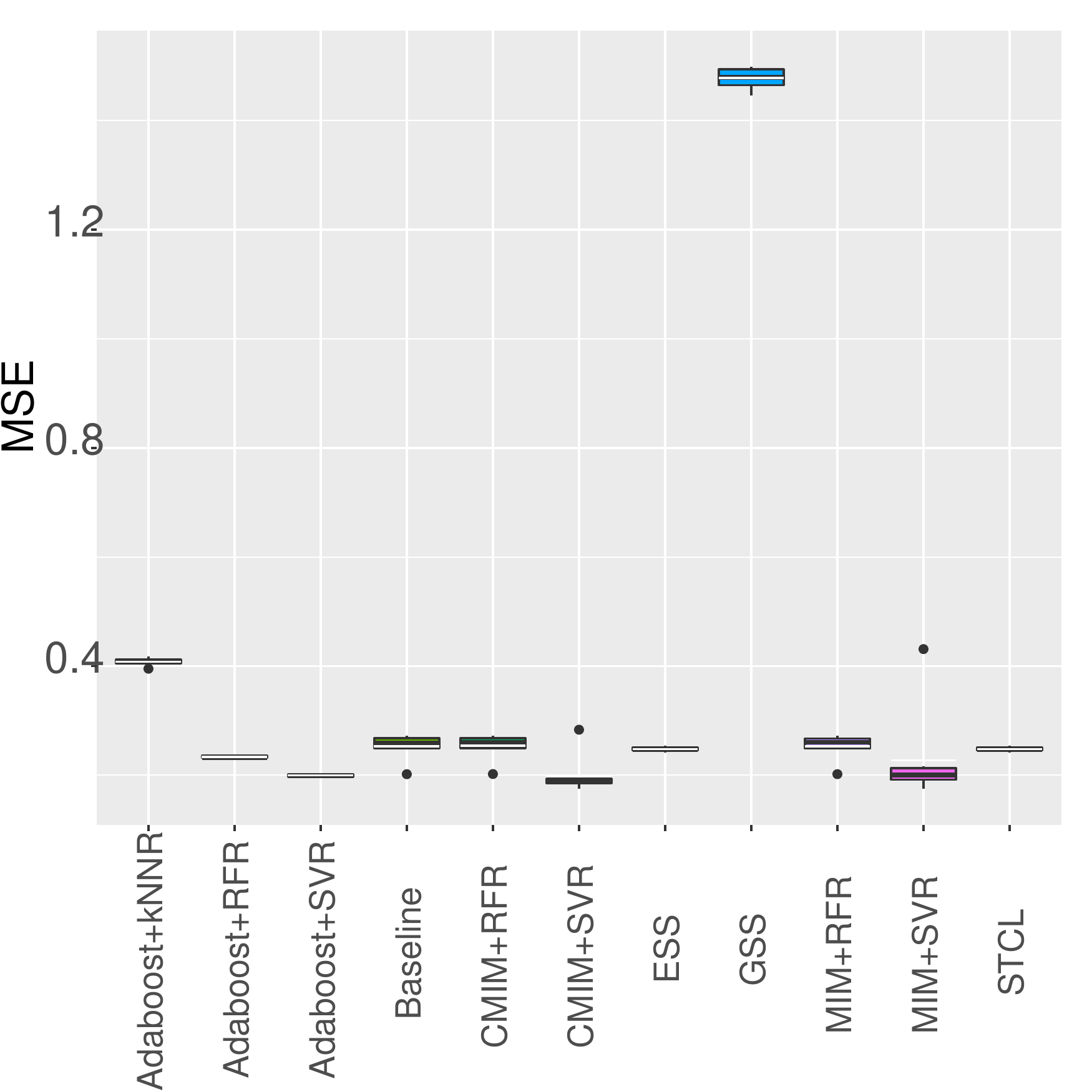}
    }%
	\subcaptionbox{}[.33\textwidth]{%
	\includegraphics[scale= 0.3]{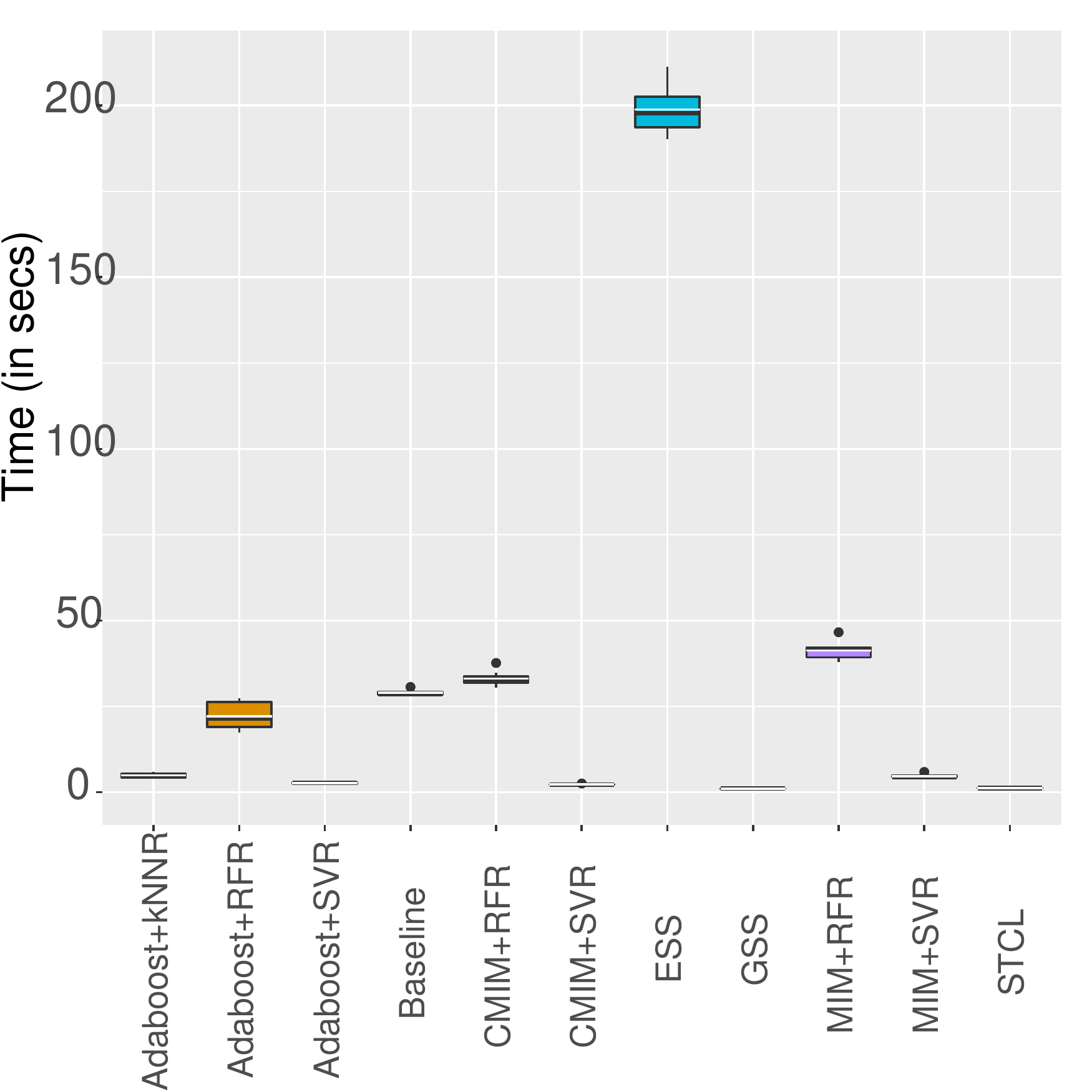}}
\caption{Results on (b), (c), (d) are generated over the ground truth graph (a) for the target variable \textcolor{orange}{$T$}. Data shift between source and target occurs by change in the distribution of the context variable \textcolor{green}{$C_1$}, where sample size = 10000 for a Gaussian distribution.}\label{fig:smallg}
\end{figure*}

\begin{figure*}
    \subcaptionbox{}[.33\textwidth]{%
	    \begin{tikzpicture}[transform shape,scale=.56]
	\tikzset{vertex/.style = {shape=circle,align=center,draw=black, fill=white}}
\tikzset{edge/.style = {->,> = latex',thick}}
    \node[vertex,thick,fill=green] (c1) at  (-2.5,5) {c1};
    \node[vertex,thick,fill= gray!40,dashed] (u) at  (-1.5,6.2) {U};
    \node[vertex,thick] (c2) at  (-0.5,5) {c2};
    \node[vertex,thick](x) at  (-0.5,3.8) {X};
    \node[vertex,thick,fill=Dandelion ] (t) at  (-0.5,1.4) {T};
    \node[vertex,thick ] (y) at  (-1.5,.2) {Y};
	\node[vertex,thick] (p) at  (0.8,1.4) {P};
	\node[vertex,thick] (q) at  (0.8,0.2) {Q};
	\node[vertex,thick] (b) at  (0.8,3.8) {B};
    \node[vertex,thick] (d) at  (0.8,5) {D};
	\draw[edge,fill = Cyan] (u) to (c1);
	\draw[edge,fill = Cyan] (u) to (c2);
	\draw[edge] (c1) to (y);
	\draw[edge] (c2) to (x);
	\draw[edge] (x) to (t);
	\draw[edge] (t) to (y);
	\draw[edge] (t) to (p);
	\draw[edge,] (p) to (q);
	\draw[edge] (d) to (b);
	\draw[edge] (c2) to (b);    
\end{tikzpicture}

}%
\subcaptionbox{}[.33\textwidth]{%
	\includegraphics[scale= 0.3]{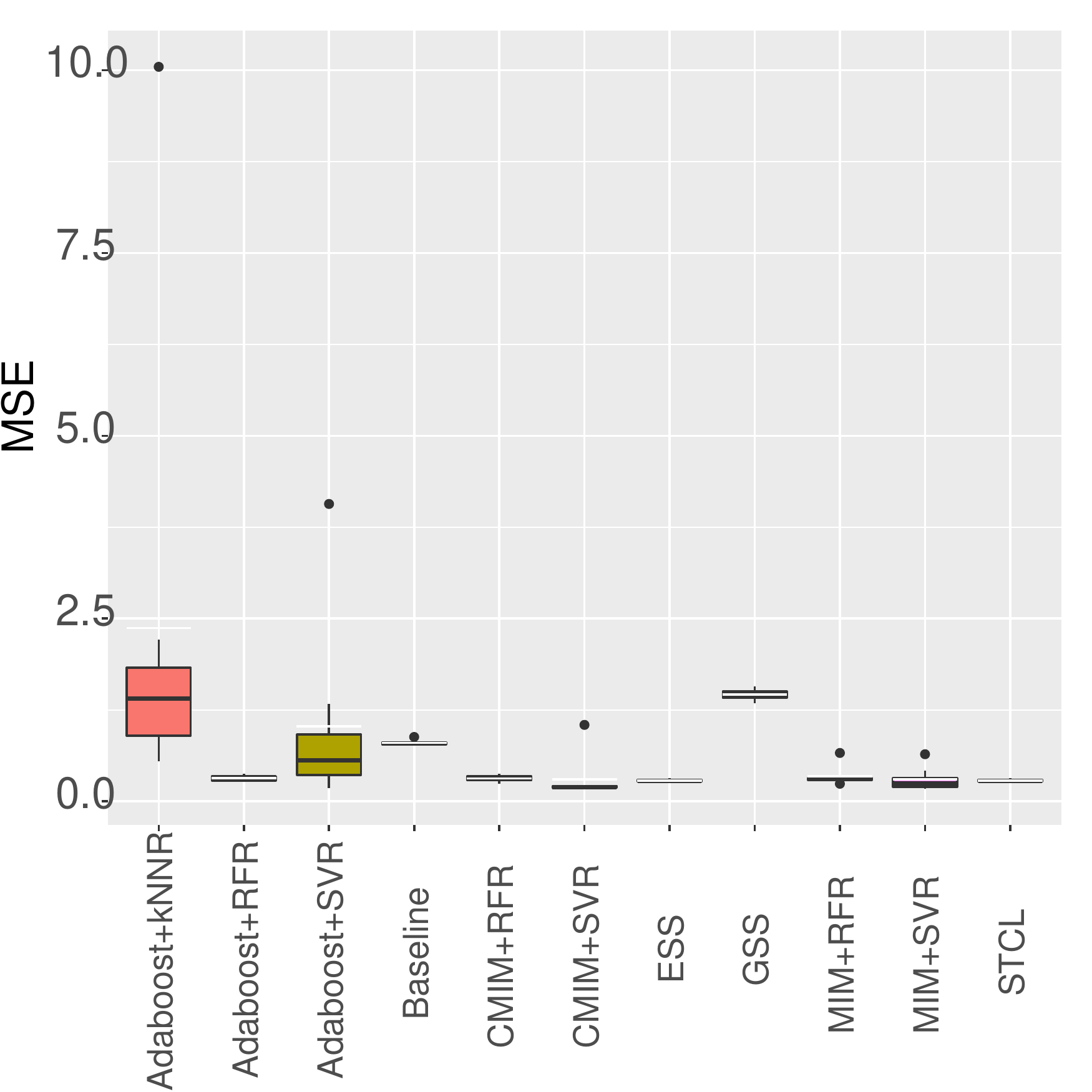}
    }
	\subcaptionbox{}[.33\textwidth]{%
	\includegraphics[scale= 0.3]{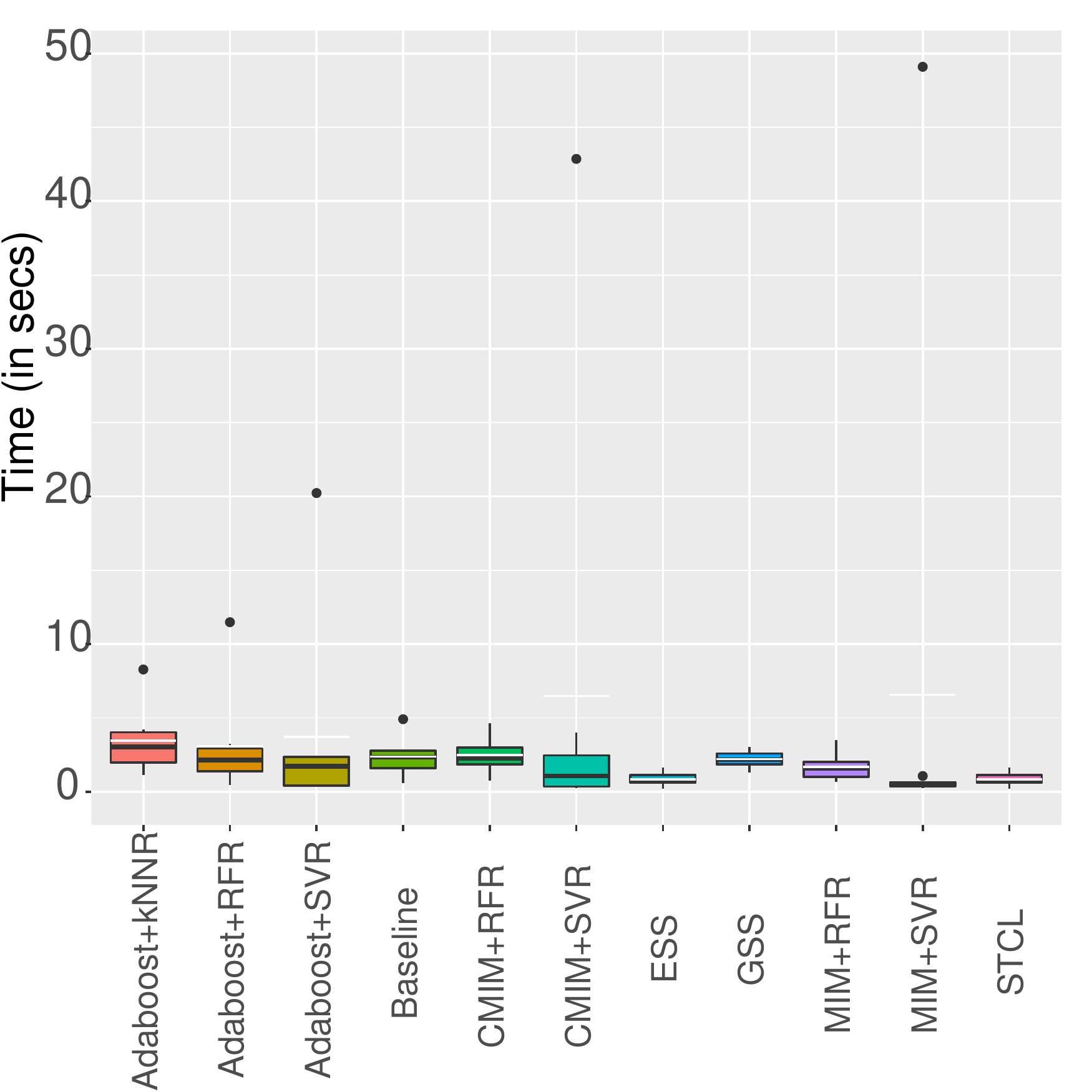}}
\caption{Results on (b), (c), (d) are generated over the ground truth graph (a) for the target variable \textcolor{orange}{$T$}. Data shift between source and target occurs by change in the distribution of the context variable \textcolor{green}{$C_1$}, where sample size = 1000 for a Gaussian distribution.}	
\end{figure*}

\begin{figure*}
    \subcaptionbox{}[.33\textwidth]{%
	    \begin{tikzpicture}[transform shape,scale=.56]
	\tikzset{vertex/.style = {shape=circle,align=center,draw=black, fill=white}}
\tikzset{edge/.style = {->,> = latex',thick}}
    \node[vertex,thick,fill=green] (c1) at  (-2.5,5) {c1};
    \node[vertex,thick,fill= gray!40,dashed] (u) at  (-1.5,6.2) {U};
    \node[vertex,thick] (c2) at  (-0.5,5) {c2};
    \node[vertex,thick](x) at  (-0.5,3.8) {X};
    \node[vertex,thick,fill=Dandelion ] (t) at  (-0.5,1.4) {T};
    \node[vertex,thick ] (y) at  (-1.5,.2) {Y};
	\node[vertex,thick] (p) at  (0.8,1.4) {P};
	\node[vertex,thick] (q) at  (0.8,0.2) {Q};
	\node[vertex,thick] (b) at  (0.8,3.8) {B};
    \node[vertex,thick] (d) at  (0.8,5) {D};
	\draw[edge,fill = Cyan] (u) to (c1);
	\draw[edge,fill = Cyan] (u) to (c2);
	\draw[edge] (c1) to (y);
	\draw[edge] (c2) to (x);
	\draw[edge] (x) to (t);
	\draw[edge] (t) to (y);
	\draw[edge] (t) to (p);
	\draw[edge,] (p) to (q);
	\draw[edge] (d) to (b);
	\draw[edge] (c2) to (b);    
\end{tikzpicture}

}%
\subcaptionbox{}[.33\textwidth]{%
	\includegraphics[scale= 0.3]{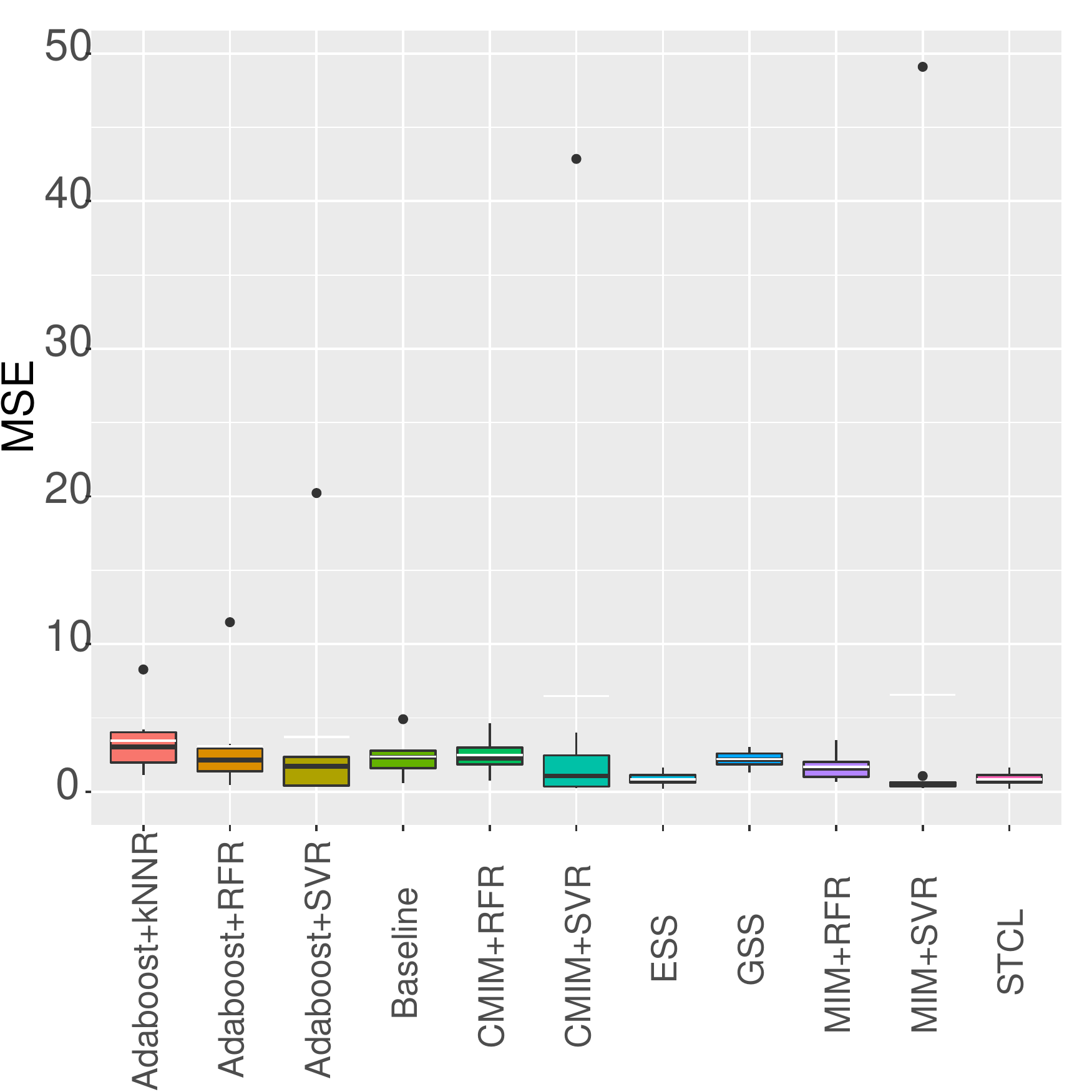}
    }%
	\subcaptionbox{}[.33\textwidth]{%
	\includegraphics[scale= 0.3]{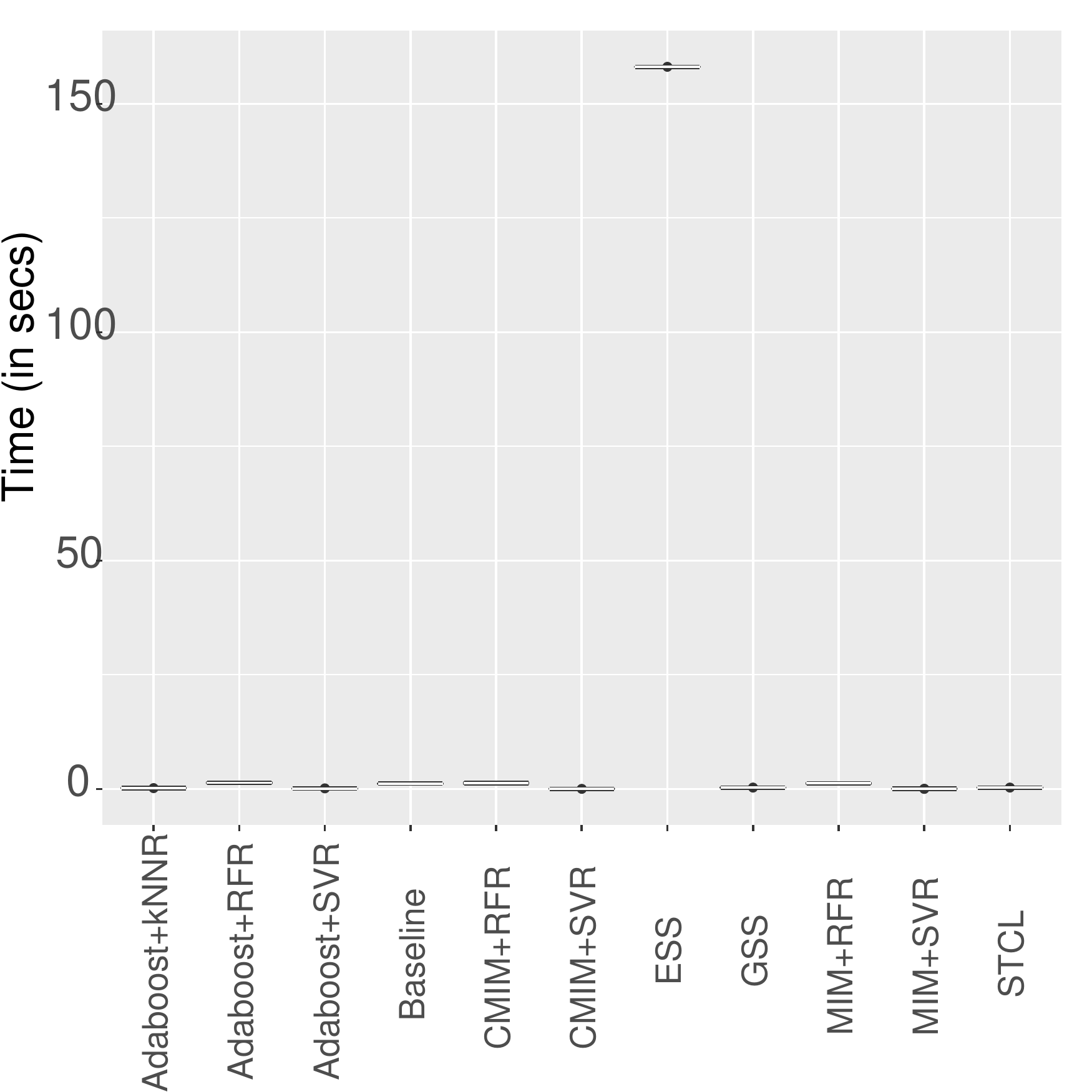}}
\caption{Results on (b), (c), (d) are generated over the ground truth graph (a) for the target variable \textcolor{orange}{$T$}. Data shift between source and target occurs by change in the distribution of the context variable \textcolor{green}{$C_1$}, where sample size = 50 for a Gaussian distribution.}	
\end{figure*}

\begin{figure*}
    \subcaptionbox{}[.33\textwidth]{%
	    \begin{tikzpicture}[transform shape,scale=.56]
	\tikzset{vertex/.style = {shape=circle,align=center,draw=black, fill=white}}
\tikzset{edge/.style = {->,> = latex',thick}}
	\node[vertex,thick] (l) at  (-5.3,5) {L};
	\node[vertex,thick] (k) at  (-3.7,5) {K};
	\node[vertex,thick] (j) at  (-3.7,3.8) {J};
	\node[vertex,thick](m) at  (-5.3,3.8) {M};
	\node[vertex,thick] (n) at  (-4.5,2.6) {N};
    \node[vertex,thick,fill=green] (c1) at  (-2.5,5) {c1};
    \node[vertex,thick,fill= gray!40,dashed] (u) at  (-1.5,6.2) {U};
    \node[vertex,thick] (z) at  (-1.4,2.75) {Z};
    \node[vertex,thick,fill=green] (c2) at  (-0.5,5) {c2};
    \node[vertex,thick](x) at  (-0.5,3.8) {X};
    \node[vertex,thick,fill=Dandelion ] (t) at  (-0.5,1.4) {T};
    \node[vertex,thick ] (y) at  (-1.5,.2) {Y};
	\node[vertex,thick] (p) at  (0.8,1.4) {P};
	\node[vertex,thick] (q) at  (0.8,0.2) {Q};
	\node[vertex,thick] (b) at  (0.8,3.8) {B};
    \node[vertex,thick] (d) at  (0.8,5) {D};
    \node[vertex,thick] (e) at  (2,5) {E};
	\node[vertex,thick] (i) at  (2,3.8) {I};
    \node[vertex,thick] (f) at  (-0.5,6.2) {F};
    \node[vertex,thick] (g) at  (.8,6.2) {G};
    \node[vertex,thick] (h) at  (2,6.2) {H};
	\draw[edge] (k) to (l);
	\draw[edge] (k) to (m);
	\draw[edge] (k) to (j);
	\draw[edge] (m) to (n);
	\draw[edge] (j) to (n);
	\draw[edge] (l) to (m);
	\draw[edge,fill = Cyan] (u) to (c1);
	\draw[edge,fill = Cyan] (u) to (c2);
	\draw[edge] (c1) to (y);
	\draw[edge] (c2) to (x);
	\draw[edge] (x) to (t);
	\draw[edge] (t) to (y);
	\draw[edge] (t) to (p);
	\draw[edge,] (p) to (q);
	\draw[edge] (d) to (b);
	\draw[edge] (c2) to (b);    
	\draw[edge] (e) to (b);
	\draw[edge] (e) to (i);
	\draw[edge] (f) to (d);
	\draw[edge] (g) to (e);
	\draw[edge] (e) to (i);
	\draw[edge] (h) to (e);
	\draw[edge] (j) to (x);
 	\draw[edge] (c1) to (z);
 	\draw[edge] (z) to (t);
\end{tikzpicture}

}%
\subcaptionbox{}[.33\textwidth]{%
	\includegraphics[scale= 0.3]{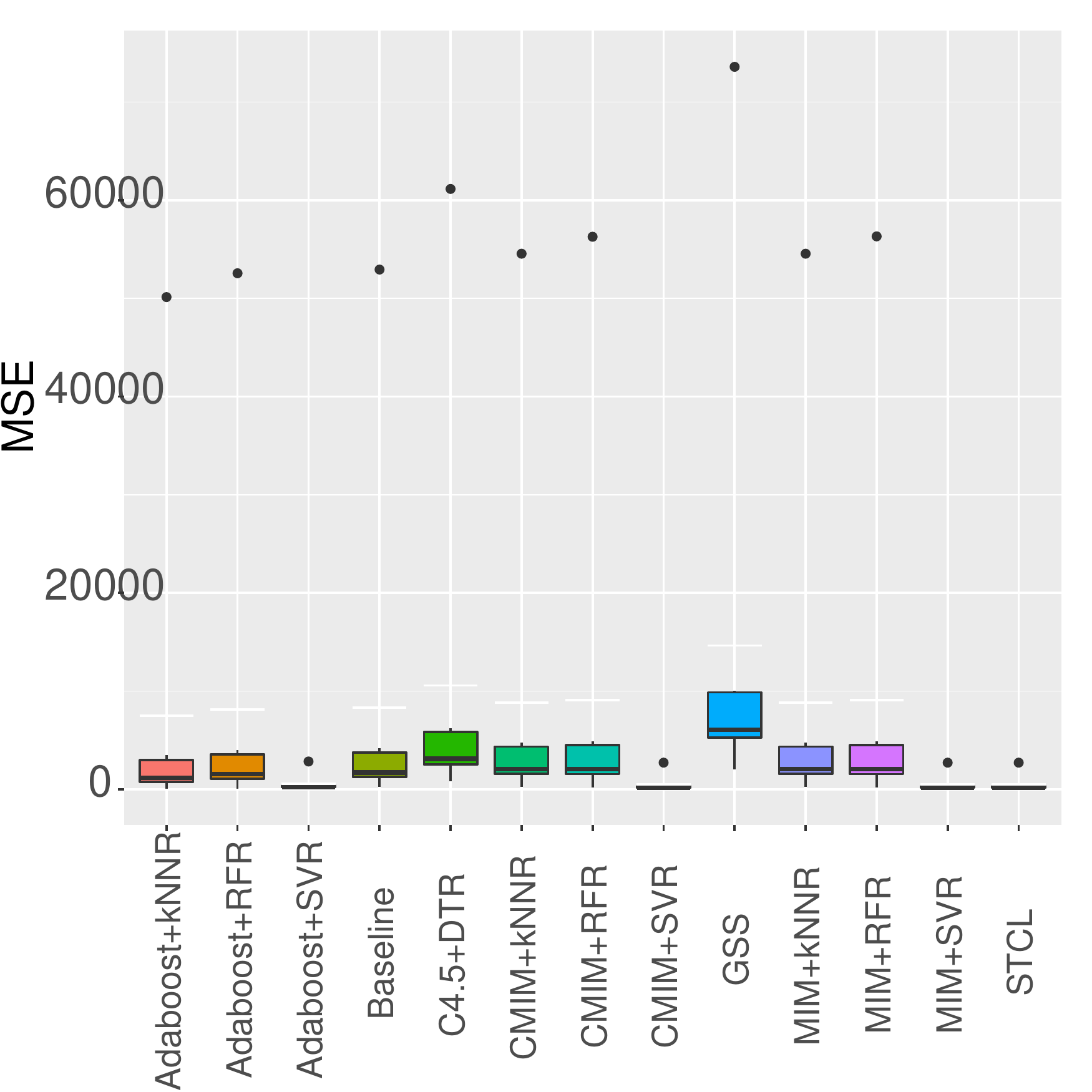}
    }%
	\subcaptionbox{}[.33\textwidth]{%
	\includegraphics[scale= 0.3]{images/Graph2_plots/Plot_Graph_G2_in_c1_change_Gaussian_medium_sample_size_mse.pdf}}
\caption{Results on (b), (c), (d) are generated over the ground truth graph (a) for the target variable \textcolor{orange}{$T$}. Data shift between source and target occurs by change in the distribution of the context variable \textcolor{green}{$C_1$} , where sample size = 1000 for a Gaussian distribution.}	
\end{figure*}

\begin{figure*}
    \subcaptionbox{}[.33\textwidth]{%
	    \begin{tikzpicture}[transform shape,scale=.56]
	\tikzset{vertex/.style = {shape=circle,align=center,draw=black, fill=white}}
\tikzset{edge/.style = {->,> = latex',thick}}
	\node[vertex,thick] (l) at  (-5.3,5) {L};
	\node[vertex,thick] (k) at  (-3.7,5) {K};
	\node[vertex,thick] (j) at  (-3.7,3.8) {J};
	\node[vertex,thick](m) at  (-5.3,3.8) {M};
	\node[vertex,thick] (n) at  (-4.5,2.6) {N};
    \node[vertex,thick,fill=green] (c1) at  (-2.5,5) {c1};
    \node[vertex,thick,fill= gray!40,dashed] (u) at  (-1.5,6.2) {U};
    \node[vertex,thick] (z) at  (-1.4,2.75) {Z};
    \node[vertex,thick,fill=green] (c2) at  (-0.5,5) {c2};
    \node[vertex,thick](x) at  (-0.5,3.8) {X};
    \node[vertex,thick,fill=Dandelion ] (t) at  (-0.5,1.4) {T};
    \node[vertex,thick ] (y) at  (-1.5,.2) {Y};
	\node[vertex,thick] (p) at  (0.8,1.4) {P};
	\node[vertex,thick] (q) at  (0.8,0.2) {Q};
	\node[vertex,thick] (b) at  (0.8,3.8) {B};
    \node[vertex,thick] (d) at  (0.8,5) {D};
    \node[vertex,thick] (e) at  (2,5) {E};
	\node[vertex,thick] (i) at  (2,3.8) {I};
    \node[vertex,thick] (f) at  (-0.5,6.2) {F};
    \node[vertex,thick] (g) at  (.8,6.2) {G};
    \node[vertex,thick] (h) at  (2,6.2) {H};
	\draw[edge] (k) to (l);
	\draw[edge] (k) to (m);
	\draw[edge] (k) to (j);
	\draw[edge] (m) to (n);
	\draw[edge] (j) to (n);
	\draw[edge] (l) to (m);
	\draw[edge,fill = Cyan] (u) to (c1);
	\draw[edge,fill = Cyan] (u) to (c2);
	\draw[edge] (c1) to (y);
	\draw[edge] (c2) to (x);
	\draw[edge] (x) to (t);
	\draw[edge] (t) to (y);
	\draw[edge] (t) to (p);
	\draw[edge,] (p) to (q);
	\draw[edge] (d) to (b);
	\draw[edge] (c2) to (b);    
	\draw[edge] (e) to (b);
	\draw[edge] (e) to (i);
	\draw[edge] (f) to (d);
	\draw[edge] (g) to (e);
	\draw[edge] (e) to (i);
	\draw[edge] (h) to (e);
	\draw[edge] (j) to (x);
 	\draw[edge] (c1) to (z);
 	\draw[edge] (z) to (t);
\end{tikzpicture}
    \vspace{2\baselineskip}

}%
\subcaptionbox{}[.33\textwidth]{%
	\includegraphics[scale= 0.3]{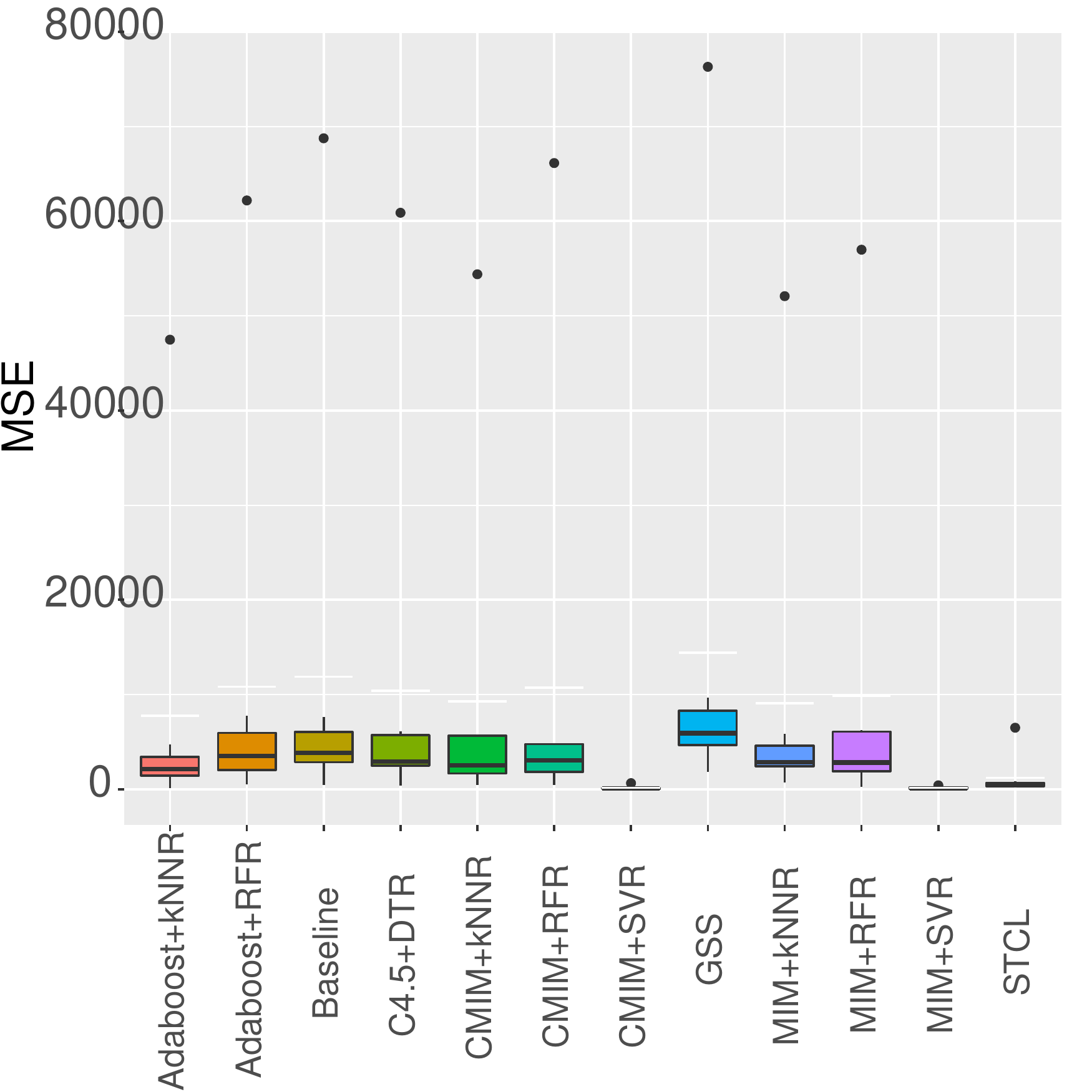}
    }%
	\subcaptionbox{}[.33\textwidth]{%
	\includegraphics[scale= 0.3]{images/Graph2_plots/Plot_Graph_G2_in_c1_change_Gaussian_low_sample_size_mse.pdf}}
\caption{Results on (b), (c), (d) are generated over the ground truth graph (a) for the target variable \textcolor{orange}{$T$}. Data shift between source and target occurs by change in the distribution of the context variable \textcolor{green}{$C_1$} , where sample size = 50 for a Gaussian distribution.}	
\end{figure*}

\begin{figure*}
    \centering
    \begin{minipage}{\textwidth}
  \begin{minipage}[b]{0.49\textwidth}
    \centering
    \begin{tikzpicture}[transform shape,scale=.60]
	\tikzset{vertex/.style = {shape=circle,align=center,draw=black, fill=white}}
\tikzset{edge/.style = {->,> = latex',thick}}
	\node[vertex,thick] (l) at  (-5.3,5) {L};
	\node[vertex,thick] (k) at  (-3.7,5) {K};
	\node[vertex,thick] (j) at  (-3.7,3.8) {J};
	\node[vertex,thick](m) at  (-5.3,3.8) {M};
	\node[vertex,thick] (n) at  (-4.5,2.6) {N};
    \node[vertex,thick,fill=green] (c1) at  (-2.5,5) {c1};
    \node[vertex,thick,fill= gray!40,dashed] (u) at  (-1.5,6.2) {U};
    \node[vertex,thick] (z) at  (-1.4,2.75) {Z};
    \node[vertex,thick] (c2) at  (-0.5,5) {c2};
    \node[vertex,thick](x) at  (-0.5,3.8) {X};
    \node[vertex,thick,fill=Dandelion ] (t) at  (-0.5,1.4) {T};
    \node[vertex,thick ] (y) at  (-1.5,.2) {Y};
	\node[vertex,thick] (p) at  (0.8,1.4) {P};
	\node[vertex,thick] (q) at  (0.8,0.2) {Q};
	\node[vertex,thick] (b) at  (0.8,3.8) {B};
    \node[vertex,thick] (d) at  (0.8,5) {D};
    \node[vertex,thick] (e) at  (2,5) {E};
	\node[vertex,thick] (i) at  (2,3.8) {I};
    \node[vertex,thick] (f) at  (0.8,6.2) {F};
    \node[vertex,thick] (g) at  (2,6.2) {G};
    \node[vertex,thick] (h) at  (3.2,6.2) {H};
	\draw[edge] (k) to (l);
	\draw[edge] (k) to (m);
	\draw[edge] (k) to (j);
	\draw[edge] (m) to (n);
	\draw[edge] (j) to (n);
	\draw[edge] (l) to (m);
	\draw[edge,fill = Cyan] (u) to (c1);
	\draw[edge,fill = Cyan] (u) to (c2);
	\draw[edge] (c1) to (y);
	\draw[edge] (c2) to (x);
	\draw[edge] (x) to (t);
	\draw[edge] (t) to (y);
	\draw[edge] (t) to (p);
	\draw[edge,] (p) to (q);
	\draw[edge] (d) to (b);
	\draw[edge] (c2) to (b);    
	\draw[edge] (e) to (b);
	\draw[edge] (e) to (i);
	\draw[edge] (f) to (d);
	\draw[edge] (g) to (e);
	\draw[edge] (e) to (i);
	\draw[edge] (h) to (e);
	\draw[edge] (j) to (x);
 	\draw[edge] (c1) to (z);
 	\draw[edge] (z) to (t);
\end{tikzpicture}
\subcaption{}
  \end{minipage}
  \hfill
  \begin{minipage}[b]{0.49\textwidth}
    \centering
    \scriptsize\begin{tabular}{l c } 
\hline\hline
Methodology & p-value \\ [0.5ex] 
\hline
Baseline & \textbf{0.1694} \\
GSS & \textbf{0.19617} \\ 
CMIM+SVR & \textbf{0.3187} \\
CMIM+kNNR & \textbf{0.1258} \\
CMIM+RFR & \textbf{0.1376}  \\
MIM+SVR &\textbf{ 0.318705}  \\
MIM+kNNR &\textbf{ 0.1258}  \\
MIM+RFR & \textbf{0.13759} \\
Adaboost+SVR &\textbf{ 0.279}  \\
Adaboost+kNNR &\textbf{ 0.20346}\\
Adaboost+RFR &\textbf{ 0.169} \\
C4.5+ DTR & \textbf{0.0889} \\
\hline 
\end{tabular}
      \caption*{(b)}
    \end{minipage}
  \end{minipage}
    \caption{ Results in the table are p-values generated by performing a T-test on SSE over the ground truth graph (a) for the target variable \textcolor{orange}{T}.  Data shift between source and target occurs by change in the distribution of the context variable \textcolor{green}{$C_1$}, over a Gaussian distribution with sample size = 1000. The bold results indicate that the average performance of \rctl~is not significantly different from the average performance of other approaches for the p-value 0.05.}\label{ttest1}
\end{figure*}

\begin{figure*}
    \centering
    \begin{minipage}{\textwidth}
  \begin{minipage}[b]{0.49\textwidth}
    \centering
    \begin{tikzpicture}[transform shape,scale=.6]
	\tikzset{vertex/.style = {shape=circle,align=center,draw=black, fill=white}}
\tikzset{edge/.style = {->,> = latex',thick}}
    \node[vertex,thick,fill=green] (c1) at  (-2.5,5) {c1};
    \node[vertex,thick,fill= gray!40,dashed] (u) at  (-1.5,6.2) {U};
    \node[vertex,thick,fill=green] (c2) at  (-0.5,5) {c2};
    \node[vertex,thick](x) at  (-0.5,3.8) {X};
    \node[vertex,thick,fill=Dandelion ] (t) at  (-0.5,1.4) {T};
    \node[vertex,thick ] (y) at  (-1.5,.2) {Y};
	\node[vertex,thick] (p) at  (0.8,1.4) {P};
	\node[vertex,thick] (q) at  (0.8,0.2) {Q};
	\node[vertex,thick] (b) at  (0.8,3.8) {B};
    \node[vertex,thick] (d) at  (0.8,5) {D};
	\draw[edge,fill = Cyan] (u) to (c1);
	\draw[edge,fill = Cyan] (u) to (c2);
	\draw[edge] (c1) to (y);
	\draw[edge] (c2) to (x);
	\draw[edge] (x) to (t);
	\draw[edge] (t) to (y);
	\draw[edge] (t) to (p);
	\draw[edge,] (p) to (q);
	\draw[edge] (d) to (b);
	\draw[edge] (c2) to (b);    
\end{tikzpicture}
\subcaption{}
  \end{minipage}
  \hfill
  \begin{minipage}[b]{0.49\textwidth}
    \centering
\scriptsize\begin{tabular}{l c } 
\hline\hline 
Methodology & p-value \\ [0.5ex] 
\hline
Baseline & \textbf{0.1535} \\
GSS & \textbf{0.1681} \\ 
CMIM+SVR &\textbf{ 0.2712} \\
CMIM+kNNR & \textbf{0.1224} \\
CMIM+RFR & \textbf{0.1885}  \\
MIM+SVR & \textbf{0.65673}  \\
MIM+kNNR & 0.04307 \\
MIM+RFR &\textbf{ 0.16345} \\
Adaboost+SVR & \textbf{0.2660}  \\
Adaboost+kNNR & \textbf{0.0526} \\
Adaboost+RFR & \textbf{0.1185}\\
C4.5+ DTR & 0.0295 \\
ESS & 0.0222 \\
\hline 
\end{tabular}
 \caption*{(b)}
    \end{minipage}
  \end{minipage}
    \caption{ Results in the table are p-values generated by performing a T-test on MSE over the ground truth graph (a) for the target variable \textcolor{orange}{T}.  Data shift between source and target occurs by change in the distribution of the context variable \textcolor{green}{$C_1$} \& \textcolor{green}{$C_2$} , over a Gaussian distribution with sample size = 1000. The  bold  results  indicate  that  the average performance of~\rctl~ is not significantly different from the average performance of other approaches for the p-value 0.05.}
\end{figure*}

\begin{figure*}
    \centering
    \begin{minipage}{\textwidth}
  \begin{minipage}[b]{0.49\textwidth}
    \centering
    \begin{tikzpicture}[transform shape,scale=.6]
	\tikzset{vertex/.style = {shape=circle,align=center,draw=black, fill=white}}
\tikzset{edge/.style = {->,> = latex',thick}}
	\node[vertex,thick] (l) at  (-5.3,5) {L};
	\node[vertex,thick] (k) at  (-3.7,5) {K};
	\node[vertex,thick] (j) at  (-3.7,3.8) {J};
	\node[vertex,thick](m) at  (-5.3,3.8) {M};
	\node[vertex,thick] (n) at  (-4.5,2.6) {N};
    \node[vertex,thick,fill=green] (c1) at  (-2.5,5) {c1};
    \node[vertex,thick,fill= gray!40,dashed] (u) at  (-1.5,6.2) {U};
    \node[vertex,thick] (c2) at  (-0.5,5) {c2};
    \node[vertex,thick](x) at  (-0.5,3.8) {X};
    \node[vertex,thick,fill=Dandelion ] (t) at  (-0.5,1.4) {T};
    \node[vertex,thick ] (y) at  (-1.5,.2) {Y};
	\node[vertex,thick] (p) at  (0.8,1.4) {P};
	\node[vertex,thick] (q) at  (0.8,0.2) {Q};
	\node[vertex,thick] (b) at  (0.8,3.8) {B};
    \node[vertex,thick] (d) at  (0.8,5) {D};
    \node[vertex,thick] (e) at  (2,5) {E};
	\node[vertex,thick] (i) at  (2,3.8) {I};
    \node[vertex,thick] (f) at  (0.8,6.2) {F};
    \node[vertex,thick] (g) at  (2,6.2) {G};
    \node[vertex,thick] (h) at  (3.2,6.2) {H};
	\draw[edge] (k) to (l);
	\draw[edge] (k) to (m);
	\draw[edge] (k) to (j);
	\draw[edge] (m) to (n);
	\draw[edge] (j) to (n);
	\draw[edge] (l) to (m);
	\draw[edge,fill = Cyan] (u) to (c1);
	\draw[edge,fill = Cyan] (u) to (c2);
	\draw[edge] (c1) to (y);
	\draw[edge] (c2) to (x);
	\draw[edge] (x) to (t);
	\draw[edge] (t) to (y);
	\draw[edge] (t) to (p);
	\draw[edge,] (p) to (q);
	\draw[edge] (d) to (b);
	\draw[edge] (c2) to (b);    
	\draw[edge] (e) to (b);
	\draw[edge] (e) to (i);
	\draw[edge] (f) to (d);
	\draw[edge] (g) to (e);
	\draw[edge] (e) to (i);
	\draw[edge] (h) to (e);
	\draw[edge] (j) to (x);
\end{tikzpicture}
\subcaption{}
  \end{minipage}
  \hfill
  \begin{minipage}[b]{0.49\textwidth}
    \centering
\scriptsize\begin{tabular}{l c } 
\hline\hline 
Methodology & p-value \\ [0.5ex] 
\hline
Baseline & \textbf{0.01747} \\
GSS & \textbf{6.14934e-08} \\ 
CMIM+SVR &\textbf{ 2.9747e-08} \\
CMIM+kNNR &\textbf{ 0.00445}\\
CMIM+RFR &\textbf{ 1.7186e-17 } \\
MIM+SVR & \textbf{5.415e-07} \\
MIM+kNNR & \textbf{0.0057} \\
MIM+RFR & \textbf{8.48536e-17 }\\
Adaboost+SVR & \textbf{0.0001} \\
Adaboost+kNNR & \textbf{1.0681e-07} \\
Adaboost+RFR & 0.16304\\
C4.5+ DTR & \textbf{4.2182e-08} \\
\hline 
\end{tabular}
\caption*{(b)}
    \end{minipage}
  \end{minipage}
    \caption{ Results in the table are p-values generated by performing a T-test on MSE over the ground truth graph (a) for the target variable \textcolor{orange}{T}.  Data shift between source and target occurs by change in the distribution of the context variable \textcolor{green}{$C_1$} , over a Gaussian distribution with sample size = 1000. The  bold  results  indicate  that  the average performance of~\rctl~ is significantly different from the average performance of other approaches for the p-value 0.05.}
\end{figure*}

\begin{figure*}
    \centering
    \begin{minipage}{\textwidth}
  \begin{minipage}[b]{0.49\textwidth}
    \centering
    \begin{tikzpicture}[transform shape,scale=.6]
	\tikzset{vertex/.style = {shape=circle,align=center,draw=black, fill=white}}
\tikzset{edge/.style = {->,> = latex',thick}}
	   \node[vertex,thick,fill=green] (c1) at  (-2.5,5) {c1};
    \node[vertex,thick,fill= gray!40,dashed] (u) at  (-1.5,6.2) {U};
    \node[vertex,thick,fill=green] (c2) at  (-0.5,5) {c2};
    \node[vertex,thick](x) at  (-0.5,3.8) {X};
    \node[vertex,thick,fill=Dandelion ] (t) at  (-0.5,1.4) {T};
    \node[vertex,thick ] (y) at  (-1.5,.2) {Y};
	\node[vertex,thick] (p) at  (0.8,1.4) {P};
	\node[vertex,thick] (q) at  (0.8,0.2) {Q};
	\node[vertex,thick] (b) at  (0.8,3.8) {B};
    \node[vertex,thick] (d) at  (0.8,5) {D};
	\draw[edge,fill = Cyan] (u) to (c1);
	\draw[edge,fill = Cyan] (u) to (c2);
	\draw[edge] (c1) to (y);
	\draw[edge] (c2) to (x);
	\draw[edge] (x) to (t);
	\draw[edge] (t) to (y);
	\draw[edge] (t) to (p);
	\draw[edge,] (p) to (q);
	\draw[edge] (d) to (b);
	\draw[edge] (c2) to (b); 
\end{tikzpicture}
\subcaption{}
  \end{minipage}
  \hfill
  \begin{minipage}[b]{0.49\textwidth}
    \centering
\scriptsize\begin{tabular}{l c } 
\hline\hline 
Methodology & p-value \\ [0.5ex] 
\hline
Baseline & \textbf{2.5901e-09} \\
GSS &\textbf{ 9.1341e-06} \\ 
CMIM+SVR & 0.7350 \\
CMIM+kNNR & 0.1154\\
CMIM+RFR &\textbf{ 0.0265} \\
MIM+SVR & 0.94294 \\
MIM+kNNR & 0.1687 \\
MIM+RFR &\textbf{0.0190} \\
Adaboost+SVR & 0.87851 \\
Adaboost+kNNR & \textbf{9.9208e-09} \\
Adaboost+RFR & 0.1205\\
C4.5+ DTR & \textbf{2.6341e-05} \\
ESS & 0.6341 \\
\hline 
\end{tabular}
 \caption*{(b)}
    \end{minipage}
  \end{minipage}
    \caption{ Results in the table are p-values generated by performing a T-test on MSE over the ground truth graph (a) for the target variable \textcolor{orange}{T}.  Data shift between source and target occurs by change in the distribution of the context variable \textcolor{green}{$C_1$}, over Gaussian data with sample size = 1000. The bold results indicate that the average performance of \rctl~is significantly better than the  average performance of other approaches for the p-value 0.05.}
\end{figure*}

\begin{figure*}
    \centering
    \begin{minipage}{\textwidth}
  \begin{minipage}[b]{0.49\textwidth}
    \centering
    \begin{tikzpicture}[transform shape,scale=.6]
	\tikzset{vertex/.style = {shape=circle,align=center,draw=black, fill=white}}
\tikzset{edge/.style = {->,> = latex',thick}}
	\node[vertex,thick] (l) at  (-5.3,5) {L};
	\node[vertex,thick] (k) at  (-3.7,5) {K};
	\node[vertex,thick] (j) at  (-3.7,3.8) {J};
	\node[vertex,thick](m) at  (-5.3,3.8) {M};
	\node[vertex,thick] (n) at  (-4.5,2.6) {N};
    \node[vertex,thick,fill=green] (c1) at  (-2.5,5) {c1};
    \node[vertex,thick,fill= gray!40,dashed] (u) at  (-1.5,6.2) {U};
    \node[vertex,thick, fill = green] (c2) at  (-0.5,5) {c2};
    \node[vertex,thick](x) at  (-0.5,3.8) {X};
    \node[vertex,thick,fill=Dandelion ] (t) at  (-0.5,1.4) {T};
    \node[vertex,thick ] (y) at  (-1.5,.2) {Y};
	\node[vertex,thick] (p) at  (0.8,1.4) {P};
	\node[vertex,thick] (q) at  (0.8,0.2) {Q};
	\node[vertex,thick] (b) at  (0.8,3.8) {B};
    \node[vertex,thick] (d) at  (0.8,5) {D};
    \node[vertex,thick] (e) at  (2,5) {E};
	\node[vertex,thick] (i) at  (2,3.8) {I};
    \node[vertex,thick] (f) at  (0.8,6.2) {F};
    \node[vertex,thick] (g) at  (2,6.2) {G};
    \node[vertex,thick] (h) at  (3.2,6.2) {H};
	\draw[edge] (k) to (l);
	\draw[edge] (k) to (m);
	\draw[edge] (k) to (j);
	\draw[edge] (m) to (n);
	\draw[edge] (j) to (n);
	\draw[edge] (l) to (m);
	\draw[edge,fill = Cyan] (u) to (c1);
	\draw[edge,fill = Cyan] (u) to (c2);
	\draw[edge] (c1) to (y);
	\draw[edge] (c2) to (x);
	\draw[edge] (x) to (t);
	\draw[edge] (t) to (y);
	\draw[edge] (t) to (p);
	\draw[edge,] (p) to (q);
	\draw[edge] (d) to (b);
	\draw[edge] (c2) to (b);    
	\draw[edge] (e) to (b);
	\draw[edge] (e) to (i);
	\draw[edge] (f) to (d);
	\draw[edge] (g) to (e);
	\draw[edge] (e) to (i);
	\draw[edge] (h) to (e);
	\draw[edge] (j) to (x);
\end{tikzpicture}
\subcaption{}
  \end{minipage}
  \hfill
  \begin{minipage}[b]{0.49\textwidth}
    \centering
\scriptsize\begin{tabular}{l c } 
\hline\hline 
Methodology & p-value \\ [0.5ex] 
\hline
Baseline & \textbf{0.0225} \\
GSS & \textbf{1.2961e-05}\\ 
CMIM+SVR & \textbf{1.212e-10} \\
CMIM+kNNR &\textbf{ 1.6860e-14}\\
CMIM+RFR &\textbf{ 0.0001} \\
MIM+SVR & \textbf{1.212e-10} \\
MIM+kNNR & \textbf{3.344e-14} \\
MIM+RFR &\textbf{0.0001} \\
Adaboost+SVR & \textbf{1.212e-10} \\
Adaboost+kNNR & \textbf{5.148e-13} \\
Adaboost+RFR & 0.1290\\
C4.5+ DTR & \textbf{2.562e-10}\\
\hline 
\end{tabular}
 \caption*{(b)}
    \end{minipage}
  \end{minipage}
    \caption{ Results in the table are p-values generated by performing a T-test on MSE over the ground truth graph (a) for the target variable \textcolor{orange}{T}.  Data shift between source and target occurs by change in the distribution of the context variables \textcolor{green}{$C_1$}\& \textcolor{green}{$C_2$}, over Discrete data with sample size = 1000. The bold results indicate that the average performance of \rctl~ is significantly better than the average performance of other approaches for the p-value 0.05.}
\end{figure*}

\begin{figure*}
    \centering
    \begin{minipage}{\textwidth}
  \begin{minipage}[b]{0.49\textwidth}
    \centering
    \begin{tikzpicture}[transform shape,scale=.6]
	\tikzset{vertex/.style = {shape=circle,align=center,draw=black, fill=white}}
\tikzset{edge/.style = {->,> = latex',thick}}
	   \node[vertex,thick,fill=green] (c1) at  (-2.5,5) {c1};
    \node[vertex,thick,fill= gray!40,dashed] (u) at  (-1.5,6.2) {U};
    \node[vertex,thick,fill=green] (c2) at  (-0.5,5) {c2};
    \node[vertex,thick](x) at  (-0.5,3.8) {X};
    \node[vertex,thick,fill=Dandelion ] (t) at  (-0.5,1.4) {T};
    \node[vertex,thick ] (y) at  (-1.5,.2) {Y};
	\node[vertex,thick] (p) at  (0.8,1.4) {P};
	\node[vertex,thick] (q) at  (0.8,0.2) {Q};
	\node[vertex,thick] (b) at  (0.8,3.8) {B};
    \node[vertex,thick] (d) at  (0.8,5) {D};
	\draw[edge,fill = Cyan] (u) to (c1);
	\draw[edge,fill = Cyan] (u) to (c2);
	\draw[edge] (c1) to (y);
	\draw[edge] (c2) to (x);
	\draw[edge] (x) to (t);
	\draw[edge] (t) to (y);
	\draw[edge] (t) to (p);
	\draw[edge,] (p) to (q);
	\draw[edge] (d) to (b);
	\draw[edge] (c2) to (b); 
\end{tikzpicture}
\subcaption{}
  \end{minipage}
  \hfill
  \begin{minipage}[b]{0.49\textwidth}
    \centering
\scriptsize\begin{tabular}{l c } 
\hline\hline 
Methodology & p-value \\ [0.5ex] 
\hline
Baseline &\textbf{ 0.00013} \\
GSS & \textbf{0.0001} \\ 
CMIM+SVR &\textbf{ 4.785e-10} \\
CMIM+kNNR & \textbf{5.4748e-10} \\
CMIM+RFR & \textbf{0.0059} \\
MIM+SVR & \textbf{4.7853e-10} \\
MIM+kNNR & \textbf{4.7034e-10}\\
MIM+RFR &\textbf{ 0.0056 }\\
Adaboost+SVR & \textbf{4.785e-10} \\
Adaboost+kNNR & \textbf{7.088e-10 }\\
Adaboost+RFR & 0.5531\\
C4.5+ DTR & \textbf{ 3.136e-09}\\
ESS &\textbf{ 0.0006} \\
\hline 
\end{tabular}
 \caption*{(b)}
    \end{minipage}
  \end{minipage}
    \caption{ Results in the table are p-values generated by performing a T-test on MSE over the ground truth graph (a) for the target variable \textcolor{orange}{T}.  Data shift between source and target occurs by change in the distribution of the context variables \textcolor{green}{$C_1$}\& \textcolor{green}{$C_2$}, over Discrete data. The bold results indicate that the average performance of \rctl~ is significantly better than the average performance of other approaches for the p-value 0.05.}
\end{figure*}

\begin{figure*}
    \centering
    \begin{minipage}{\textwidth}
  \begin{minipage}[b]{0.49\textwidth}
    \centering
    \begin{tikzpicture}[transform shape,scale=.6]
	\tikzset{vertex/.style = {shape=circle,align=center,draw=black, fill=white}}
\tikzset{edge/.style = {->,> = latex',thick}}
	\node[vertex,thick] (l) at  (-5.3,5) {L};
	\node[vertex,thick] (k) at  (-3.7,5) {K};
	\node[vertex,thick] (j) at  (-3.7,3.8) {J};
	\node[vertex,thick](m) at  (-5.3,3.8) {M};
	\node[vertex,thick] (n) at  (-4.5,2.6) {N};
    \node[vertex,thick,fill=green] (c1) at  (-2.5,5) {c1};
    \node[vertex,thick,fill= gray!40,dashed] (u) at  (-1.5,6.2) {U};
    \node[vertex,thick] (c2) at  (-0.5,5) {c2};
    \node[vertex,thick](x) at  (-0.5,3.8) {X};
    \node[vertex,thick,fill=Dandelion ] (t) at  (-0.5,1.4) {T};
    \node[vertex,thick ] (y) at  (-1.5,.2) {Y};
	\node[vertex,thick] (p) at  (0.8,1.4) {P};
	\node[vertex,thick] (q) at  (0.8,0.2) {Q};
	\node[vertex,thick] (b) at  (0.8,3.8) {B};
    \node[vertex,thick] (d) at  (0.8,5) {D};
    \node[vertex,thick] (e) at  (2,5) {E};
	\node[vertex,thick] (i) at  (2,3.8) {I};
    \node[vertex,thick] (f) at  (0.8,6.2) {F};
    \node[vertex,thick] (g) at  (2,6.2) {G};
    \node[vertex,thick] (h) at  (3.2,6.2) {H};
	\draw[edge] (k) to (l);
	\draw[edge] (k) to (m);
	\draw[edge] (k) to (j);
	\draw[edge] (m) to (n);
	\draw[edge] (j) to (n);
	\draw[edge] (l) to (m);
	\draw[edge,fill = Cyan] (u) to (c1);
	\draw[edge,fill = Cyan] (u) to (c2);
	\draw[edge] (c1) to (y);
	\draw[edge] (c2) to (x);
	\draw[edge] (x) to (t);
	\draw[edge] (t) to (y);
	\draw[edge] (t) to (p);
	\draw[edge,] (p) to (q);
	\draw[edge] (d) to (b);
	\draw[edge] (c2) to (b);    
	\draw[edge] (e) to (b);
	\draw[edge] (e) to (i);
	\draw[edge] (f) to (d);
	\draw[edge] (g) to (e);
	\draw[edge] (e) to (i);
	\draw[edge] (h) to (e);
	\draw[edge] (j) to (x);
\end{tikzpicture}
\subcaption{}
  \end{minipage}
  \hfill
  \begin{minipage}[b]{0.49\textwidth}
    \centering 
\scriptsize\begin{tabular}{l c } 
\hline\hline 
Methodology & p-value \\ [0.5ex] 
\hline
Baseline & \textbf{0.0001} \\
GSS &\textbf{ 1.585e-06} \\ 
CMIM+SVR & \textbf{7.227e-15 }\\
CMIM+kNNR & \textbf{4.352e-09} \\
CMIM+RFR &\textbf{ 5.212e-06} \\
MIM+SVR &\textbf{7.227e-15} \\
MIM+kNNR &\textbf{7.327e-09 }\\
MIM+RFR & \textbf{0.0001} \\
Adaboost+SVR & \textbf{7.229e-15}  \\
Adaboost+kNNR & \textbf{3.760e-08} \\
Adaboost+RFR & 0.054\\
C4.5+ DTR &  \textbf{8.380e-10}\\
\hline 
\end{tabular}
 \caption*{(b)}
    \end{minipage}
  \end{minipage}
    \caption{ Results in the table are p-values generated by performing a T-test on MSE over the ground truth graph (a) for the target variable \textcolor{orange}{T}.  Data shift between source and target occurs by change in the distribution of the context variable \textcolor{green}{$C_1$}, over Discrete data. The bold results indicate that the average performance of \rctl~ is significantly better than the average performance of other approaches for the p-value 0.05.}
\end{figure*}

\begin{figure*}
    \centering
    \begin{minipage}{\textwidth}
  \begin{minipage}[b]{0.49\textwidth}
    \centering
    \begin{tikzpicture}[transform shape,scale=.6]
	\tikzset{vertex/.style = {shape=circle,align=center,draw=black, fill=white}}
\tikzset{edge/.style = {->,> = latex',thick}}
	\node[vertex,thick] (l) at  (-5.3,5) {L};
	\node[vertex,thick] (k) at  (-3.7,5) {K};
	\node[vertex,thick] (j) at  (-3.7,3.8) {J};
	\node[vertex,thick](m) at  (-5.3,3.8) {M};
	\node[vertex,thick] (n) at  (-4.5,2.6) {N};
    \node[vertex,thick,fill=green] (c1) at  (-2.5,5) {c1};
    \node[vertex,thick,fill= gray!40,dashed] (u) at  (-1.5,6.2) {U};
    \node[vertex,thick, fill = green] (c2) at  (-0.5,5) {c2};
    \node[vertex,thick](x) at  (-0.5,3.8) {X};
    \node[vertex,thick,fill=Dandelion ] (t) at  (-0.5,1.4) {T};
    \node[vertex,thick ] (y) at  (-1.5,.2) {Y};
	\node[vertex,thick] (p) at  (0.8,1.4) {P};
	\node[vertex,thick] (q) at  (0.8,0.2) {Q};
	\node[vertex,thick] (b) at  (0.8,3.8) {B};
    \node[vertex,thick] (d) at  (0.8,5) {D};
    \node[vertex,thick] (e) at  (2,5) {E};
	\node[vertex,thick] (i) at  (2,3.8) {I};
    \node[vertex,thick] (f) at  (0.8,6.2) {F};
    \node[vertex,thick] (g) at  (2,6.2) {G};
    \node[vertex,thick] (h) at  (3.2,6.2) {H};
	\draw[edge] (k) to (l);
	\draw[edge] (k) to (m);
	\draw[edge] (k) to (j);
	\draw[edge] (m) to (n);
	\draw[edge] (j) to (n);
	\draw[edge] (l) to (m);
	\draw[edge,fill = Cyan] (u) to (c1);
	\draw[edge,fill = Cyan] (u) to (c2);
	\draw[edge] (c1) to (y);
	\draw[edge] (c2) to (x);
	\draw[edge] (x) to (t);
	\draw[edge] (t) to (y);
	\draw[edge] (t) to (p);
	\draw[edge,] (p) to (q);
	\draw[edge] (d) to (b);
	\draw[edge] (c2) to (b);    
	\draw[edge] (e) to (b);
	\draw[edge] (e) to (i);
	\draw[edge] (f) to (d);
	\draw[edge] (g) to (e);
	\draw[edge] (e) to (i);
	\draw[edge] (h) to (e);
	\draw[edge] (j) to (x);
\end{tikzpicture}
\subcaption{}
  \end{minipage}
  \hfill
  \begin{minipage}[b]{0.49\textwidth}
    \centering 
\scriptsize\begin{tabular}{l c } 
\hline\hline 
Methodology & p-value \\ [0.5ex] 
\hline
Baseline &\textbf{0.1651} \\
GSS & \textbf{0.1687 }\\ 
CMIM+SVR & 3.5196e-07 \\
CMIM+kNNR &0.0277 \\
CMIM+RFR & \textbf{0.066} \\
MIM+SVR & 3.519e-07 \\
MIM+kNNR &0.0277 \\
MIM+RFR & \textbf{0.0664 }\\
Adaboost+SVR & \textbf{0.3315} \\
Adaboost+kNNR &\textbf{0.1858} \\
Adaboost+RFR &\textbf{ 0.1865}\\
C4.5+ DTR &  0.0228\\
\hline 
\end{tabular}
\caption*{(b)}
    \end{minipage}
  \end{minipage}
    \caption{ Results in the table are p-values generated by performing a T-test on MSE over the ground truth graph (a) for the target variable \textcolor{orange}{T}.  Data shift between source and target occurs by change in the distribution of the context variable \textcolor{green}{$C_1$}\& \textcolor{green}{$C_2$}, over Discrete data with sample size = 10000. The bold results indicate that the average performance of \rctl~ is comparable to the average performance of other approaches for the p-value 0.05.}
\end{figure*}

\begin{figure*}
    \centering
    \begin{minipage}{\textwidth}
  \begin{minipage}[b]{0.49\textwidth}
    \centering
    \begin{tikzpicture}[transform shape,scale=.6]
	\tikzset{vertex/.style = {shape=circle,align=center,draw=black, fill=white}}
\tikzset{edge/.style = {->,> = latex',thick}}
	\node[vertex,thick] (l) at  (-5.3,5) {L};
	\node[vertex,thick] (k) at  (-3.7,5) {K};
	\node[vertex,thick] (j) at  (-3.7,3.8) {J};
	\node[vertex,thick,fill=Dandelion](m) at  (-5.3,3.8) {M};
	\node[vertex,thick] (n) at  (-4.5,2.6) {N};
    \node[vertex,thick,fill=green] (c1) at  (-2.5,5) {c1};
    \node[vertex,thick,fill= gray!40,dashed] (u) at  (-1.5,6.2) {U};
    \node[vertex,thick] (c2) at  (-0.5,5) {c2};
    \node[vertex,thick](x) at  (-0.5,3.8) {X};
    \node[vertex,thick ] (t) at  (-0.5,1.4) {T};
    \node[vertex,thick ] (y) at  (-1.5,.2) {Y};
	\node[vertex,thick] (p) at  (0.8,1.4) {P};
	\node[vertex,thick] (q) at  (0.8,0.2) {Q};
	\node[vertex,thick] (b) at  (0.8,3.8) {B};
    \node[vertex,thick] (d) at  (0.8,5) {D};
    \node[vertex,thick] (e) at  (2,5) {E};
	\node[vertex,thick] (i) at  (2,3.8) {I};
    \node[vertex,thick] (f) at  (0.8,6.2) {F};
    \node[vertex,thick] (g) at  (2,6.2) {G};
    \node[vertex,thick] (h) at  (3.2,6.2) {H};
	\draw[edge] (k) to (l);
	\draw[edge] (k) to (m);
	\draw[edge] (k) to (j);
	\draw[edge] (m) to (n);
	\draw[edge] (j) to (n);
	\draw[edge] (l) to (m);
	\draw[edge,fill = Cyan] (u) to (c1);
	\draw[edge,fill = Cyan] (u) to (c2);
	\draw[edge] (c1) to (y);
	\draw[edge] (c2) to (x);
	\draw[edge] (x) to (t);
	\draw[edge] (t) to (y);
	\draw[edge] (t) to (p);
	\draw[edge,] (p) to (q);
	\draw[edge] (d) to (b);
	\draw[edge] (c2) to (b);    
	\draw[edge] (e) to (b);
	\draw[edge] (e) to (i);
	\draw[edge] (f) to (d);
	\draw[edge] (g) to (e);
	\draw[edge] (e) to (i);
	\draw[edge] (h) to (e);
	\draw[edge] (j) to (x);
\end{tikzpicture}
\subcaption{}
  \end{minipage}
  \hfill
  \begin{minipage}[b]{0.49\textwidth}
    \centering  
\scriptsize\begin{tabular}{l c } 
\hline\hline 
Methodology & p-value \\ [0.5ex] 
\hline
Baseline & \textbf{6.7513e-12} \\
GSS & \textbf{0.0009} \\ 
CMIM+SVR & \textbf{0.0056} \\
CMIM+kNNR &\textbf{1.03614e-06} \\
CMIM+RFR & \textbf{1.2254e-11} \\
MIM+SVR & \textbf{1.0319e-05} \\
MIM+kNNR & \textbf{1.1642e-07}\\
MIM+RFR & \textbf{4.537e-10}\\
Adaboost+SVR & \textbf{0.0305} \\
Adaboost+kNNR & \textbf{0.0138}\\
Adaboost+RFR & \textbf{3.993e-10}\\
C4.5+ DTR &  \textbf{2.0194e-07}\\
\hline 
\end{tabular}
\caption*{(b)}
    \end{minipage}
  \end{minipage}
    \caption{ Results in the table are p-values generated by performing a T-test on MSE over the ground truth graph (a) for the target variable \textcolor{orange}{M}.  Data shift between source and target occurs by change in the distribution of the context variable \textcolor{green}{$C_1$}, over Discrete data with sample size = 1000. The bold results indicate that the average performance of \rctl~ is significantly better than the average performance of other approaches for the p-value 0.05.}\label{fig:targetM}
\end{figure*}

\end{document}